\newtheorem{theorem}{{\bf Theorem}}[section]
\newtheorem{preexample}[theorem]{{\bf Example}}
\newenvironment{example}{\begin{preexample}\rm{\hspace{-0.5
              em}{\bf}}}{\end{preexample}}
\newtheorem{prelem}[theorem]{{\bf Lemma}}
\newenvironment{lemma}{\begin{prelem}{\hspace{-0.5
              em}{\bf}}}{\end{prelem}}
\newtheorem{preprop}[theorem]{{\bf Proposition}}
\newenvironment{proposition}{\begin{preprop}{\hspace{-0.5
              em}{\bf}}}{\end{preprop}}
\newtheorem{prequ}[theorem]{{\bf Question}}
\newenvironment{question}{\begin{prequ}\rm{\hspace{-0.5
              em}{\bf}}}{\end{prequ}}
\newtheorem{predef}[theorem]{{\bf Definition}}
\newenvironment{definition}{\begin{predef}\rm{\hspace{-0.5
              em}{\bf}}}{\end{predef}}
\newtheorem{precor}[theorem]{{\bf Corollary}}
\newenvironment{corollary}{\begin{precor}{\hspace{-0.5
              em}{\bf}}}{\end{precor}}
\newtheorem{preconj}[theorem]{{\bf Conjecture}}
\newtheorem{preex}[theorem]{{\bf Exercise}}
\newtheorem{prerem}[theorem]{{\bf Remark}}
\newenvironment{remark}{\begin{prerem}\rm{\hspace{-0.5
              em}{\bf}}}{\end{prerem}}
\def\l@subsection{\@tocline{2}{0pt}{2.5pc}{5pc}{}} 
\numberwithin{equation}{section}
\title{On functions computed on trees}
\author{Roozbeh Farhoodi\,$^*$, Khashayar Filom\,$^*$, Ilenna Simone Jones\\   Konrad Paul Kording}
\address{Roozbeh Farhoodi, Department of Bioengineering and Department of Neuroscience at University of Pennsylvania
404 Richards Building, 3700 Hamilton Walk, Philadelphia, PA 19104, Philadelphia, PA 19104}
\email{roozbeh@seas.upenn.edu}
\address{Khashayar Filom, Department of Mathematics,
  Northwestern University;
 2033 Sheridan Road, Evanston, IL 60208,
USA}
\email{khashayarfilom2014@u.northwestern.edu}
\address{Ilenna Simone Jones, Department of Bioengineering and Department of Neuroscience at University of Pennsylvania
404 Richards Building, 3700 Hamilton Walk, Philadelphia, PA 19104, Philadelphia, PA 19104}
\email{ilennaj@pennmedicine.upenn.edu}
\address{Konrad Paul Kording, Department of Bioengineering and Department of Neuroscience at University of Pennsylvania
404 Richards Building, 3700 Hamilton Walk, Philadelphia, PA 19104, Philadelphia, PA 19104}
\email{kording@upenn.edu}
\date{}
\begin{document}
 \maketitle

\begin{abstract}
Any function can be constructed using a hierarchy of simpler functions through compositions. Such a hierarchy can be characterized by a binary rooted tree. Each node of this tree is associated with a function which takes as inputs two numbers from its children and produces one output. Since thinking about functions in terms of computation graphs is getting popular we may want to know which functions can be implemented on a given tree. Here, we describe a set of necessary constraints in the form of a system of non-linear partial differential equations that must be satisfied. Moreover, we prove that these conditions are sufficient in both contexts of analytic and bit-valued functions. In the latter case, we explicitly enumerate discrete functions and observe that there are relatively few. Our point of view allows us to compare different neural network architectures in regard to their function spaces. Our work connects the structure of computation graphs with the functions they can implement and has potential applications to neuroscience and computer science. 
\end{abstract}
\tableofcontents

\section{Introduction}
A complicated function can be constructed by a hierarchy of simpler functions.  For instance, when a microprocessor calculates the value of a function for a given set of inputs, it computes the function through composing simpler implemented functions, e.g. logic gates \cite{haastad1991power, goldmann1992majority}. Another example is that of an addition function for any number of inputs which can be obtained by composing simpler addition functions of two inputs. Even when the set of simple functions is small, like in the case of working only with logic gates,  the set of functions  that  can be  built may be exponentially large.  We know from computation theory that all computable functions can be constructed in this manner \cite{sipser2006introduction, arora2009computational}. Therefore, one approach to understand the set of computable functions is to investigate their potential representations as hierarchical compositions of simpler functions. 

Here we study the set of functions of multiple variables that can be computed by a hierarchy of functions that each accepts two inputs. Such compositions can be characterized by binary rooted trees (in the following we will refer to them as binary trees) that determines the hierarchical order in which the functions of two variables are applied. Associated with any binary tree is a (continuous or discrete) \textit{tree function space (TFS)} consisting of all functions that can be obtained as a composition (\textit{superposition}) based on the hierarchy that the tree provides. In Theorem \ref{main} we exhibit a set of necessary and sufficient conditions for analytic functions of $n$ variables to have a representation  via a given tree. We show that this amounts to describing the corresponding TFS as the solution set to a group of non-linear partial differential equations  (PDEs).  We also study the same representability problem in the context of discrete functions. 

\textbf{Related mathematical background.} Representing multivariate continuous functions in terms of functions of fewer variables has a rich background that roots back to the $13^{\rm{th}}$ problem on David Hilbert's famous list  of mathematical problems for the $20^{\rm{th}}$ century \cite{hilbert1902mathematical}. Hilbert's original conjecture was about describing solutions of $7^{\rm{th}}$ degree equations in terms of functions of two variables. The problem has many variants based on the category of functions -- e.g. algebraic, analytic, smooth or continuous -- in which the question is posed. See \cite[chap. 1]{MR0237729} or the survey article \cite{vitushkin2004hilbert} for a historical account. Later in the 1950s, the Soviet mathematicians Andrey Kolmogorov and Vladimir Arnold did a thorough study of this problem in the context of continuous functions that culminated in the seminal \textit{Kolmogorov-Arnold Representation Theorem} (\cite{MR0111809}) which asserts that every continuous function $F(x_1,\dots,x_n)$ can be described as 
\begin{equation}\label{Kolmogorov-Arnold}
F(x_1,\dots,x_n)=\sum_{i=1}^{2n+1}f_i\left(\sum_{j=1}^n\phi_{i,j}(x_j)\right)
\end{equation}
for suitable continuous single variable functions $f_i$, $\phi_{i,j}$.\footnote{There are more refined versions of this theorem with more restrictions on the single variable functions that appear in the representation  \cite[chap. 11]{MR0213785}.} So in a sense, addition is the only real multivariate function. The idea of applying Kolmogorov-like results to studying networks is not new. Based on the mathematical works of Anatoli Vitu\v{s}kin (see below), the article \cite{girosi1989representation} argues that to quantify the complexity of a function, its number of variables (not a suitable indication of the complexity due to Kolmogorov-Arnold Theorem) must be combined with the degree of smoothness due to the fact that there are highly regular functions that cannot be represented by continuously differentiable functions of a smaller number of variables \cite{MR0062212}. The paper then concludes that it is not possible to obtain an exact representation usable in the context of network theory because of this emergence of non-differentiable functions.  Nevertheless, the article \cite{kuurkova1991kolmogorov} argues that there is an approximation result of this type.\\
\indent Although pertinent to our discussion, the reader should be aware that the representations of multivariate functions studied in this article are different in following ways:
\begin{itemize}
\item Motivated by both the structures of computation graphs and the model of neurons as binary trees, we desire multivariate functions that could be obtained via composition of functions of two variables instead of single variable ones.
\item Unlike the summation above, we work with a \textit{single} superposition of functions. In the presence of differentiability, this enables us to use the full power of the chain rule. In the case of ternary functions for instance, a typical question (to be addressed in \S\ref{three-variable-example}) would be whether $F(x,y,z)$ can be written as $g\left(f(x,y),z\right)$. In fact, if one allows sum of superpositions, a result of Arnold (which could be found in his collected works \cite{arnold2009representation}) states that every continuous $F(x,y,z)$ can be written as a sum of nine superpositions of the form $g\left(f(x,y),z\right)$. But we look for a single superposition, not a sum of them.
\item We mostly work in the analytic context; see \S\ref{smooth setting} for difficulties that may arise if one works with smooth functions. It must be mentioned that assuming that the continuous $F(x_1,\dots,x_n)$  has certain regularity (e.g. smooth or analytic) does not guarantee that in a representation such as \eqref{Kolmogorov-Arnold} the functions can be arranged to be of the same smoothness class \cite{MR0165138}. In fact, it is known that there are always $C^k$ functions\footnote{A function is called (of class) $C^k$ if it is differentiable of order $k$ and its $k^{\rm{th}}$ order partial derivatives are continuous. A ($C^k$) $C^1$ function is said to be (resp. $k$ times) continuously differentiable. A function which is infinitely many times differentiable is called \textit{smooth} or (of class) $C^\infty$. The smaller class of (real) \textit{analytic} functions that are locally given by convergent power series is denoted by $C^\omega$. We refer the reader to \cite{MR1886084} for the standard material from elementary mathematical analysis.}  of three variables which cannot be represented as sums of superpositions of the form $g\left(f(x,y),z\right)$ with $f$ and $g$ being $C^k$ as well \cite{MR0062212}. Because of the constraints that we put on $F$ in our main result, Theorem \ref{main}, it turns out that the functions of two variables that appeared in tree representations are analytic as well, or even polynomial if $F$ is polynomial; see Proposition \ref{polynomial-global}.
\item Applying the chain rule to superpositions of analytic (or just $C^2$) bivariate functions results in the PDE constraints \eqref{temp2}.
 It must be mentioned that the fact that the partial derivatives of functions appearing in any superposition of differentiable functions must be related to each other is by no means new. Hilbert himself has employed this point of view to construct analytic functions of three variables that are not (single or sum of) superpositions of analytic functions of two variables \cite[p. 28]{arnold2009representation1}. Ostrowski for instance, has used this idea to exhibit an analytic bivariate function that cannot be represented as a superposition of single variable smooth functions and multivariate algebraic functions due to the fact that it is not a solution to any non-trivial algebraic PDE \cite{ostrowski1920dirichletsche}, \cite[p. 14]{vitushkin2004hilbert}. But, to the best of our knowledge, a systematic characterization of superpositions of analytic bivariate functions as outlined in Theorem \ref{main} (or its discrete version in Theorem \ref{main binary}) and utilizing that for studying tree functions and neural networks has not appeared in the literature before.              
\end{itemize}

 \textbf{Neuroscience motivation.} Over a century ago, the father of modern neuroscience, Santiago Ram\'{o}n y Cajal, drew the distinctive shapes of neurons \cite{y1995histology}. Neurons receive their inputs on their dendrites which both exhibit non-linear functions and have a tree structure. The trees, called morphologies, are central to neuron simulations \cite{hines1997neuron, reiche1999genesis}. Neuronal morphologies are not just the distinctive shapes of neuron but also pertain to their functions. One approximate way of thinking about neural function is that neurons receive inputs and by passing from the dendritic periphery towards the root, the soma, implement computation which gives a neuron its input-output function. In that view, the question of what a neuron with a given dendritic tree and inputs may compute boils down to the question of characterizing its TFS.
 
 The paper is organized as follows. \S\ref{outline} is devoted to a detailed outline the paper. We also state the main results after a non-technical motivation. In \S\ref{discussion} we discuss the relevant literature from both computer science and neuroscience sides of the theory. Several possible extensions and open problems are stated as well. The central concept of this paper, tree functions, is formally defined in \S\ref{tree functions}. Sections \ref{continuous setting} and \ref{discrete setting} treat tree functions in analytic and bit-valued contexts respectively.  Finally, in \S\ref{neural} we apply these ideas to study neural networks via tree functions.
 
\section{Outline and overview of main results}\label{outline}
The order of appearance of  functions in a superposition can be represented by a tree whose leaves, nodes (branch points) and root represent inputs, functions occurring in the superposition and the output respectively. Here we assume that the tree $T$ and the set of functions that could be applied at each node are given and each leaf is labeled by a variable. We can now define the space of functions generated through superposition, i.e. the corresponding tree function space (TFS) (see Definition \ref{definition-functions}). The most tangible case of a TFS is when all of the inputs are real numbers and the functions assigned to the nodes are bivariate real-valued functions. Nonetheless, our definition in \S\ref{tree functions} covers other cases: an arbitrary tree and sets of functions associated with its nodes result in the set of functions represented by superpositions. One example is when the functions at the nodes are bit-valued functions. Another example is when the inputs are time-dependent  and the functions at nodes are operators. The latter case is important since it contains the function that a neural morphology would implement when we only allow soma-directed influences and ignore back-propagating action potentials \cite{stuart1997action}. 

The smallest non-trivial tree representing a superposition is the one with three leaves illustrated in Figure \ref{fig:three-four}(b). Denoting the inputs by $x$, $y$ and $z$, an element $F$ of the corresponding TFS is the superposition below of a function of two variables $f$ and $g$: 
\begin{equation}\label{result-three-var}
\text{output} =F(x,y,z)= g(f(x,y),z).
\end{equation}
For $f$ and $g$ multiplication or addition, we end up with two basic examples $F(x,y,z) = xyz$ and $F(x,y,z) =x+y+z$. By changing $f$ and $g$ one can construct other examples and hence the question of which functions could be answered in this manner.

To find a necessary condition for a function of three variables $F(x,y,z)$ to have a representation such as \eqref{result-three-var}, we assume differentiability and take the derivative. A straightforward  application of the chain rule to \eqref{result-three-var} shows that  $F$ must satisfy
\begin{equation}\label{condition}
\frac{\partial^2 F}{\partial x\partial z}.\frac{\partial F}{\partial y} = \frac{\partial^2 F}{\partial y\partial z}.\frac{\partial F}{\partial x}.
\end{equation}
A detailed treatment may be found in the discussion from the beginning of \S\ref{three-variable-example}. This partial differential equation for $F$ puts a constraint on functions in the TFS and hence rules out certain ternary functions such as
\begin{equation}\label{non-example}
F(x,y,z) = xyz+x+y+z.    
\end{equation} 

While \eqref{condition} is only a necessary condition, we prove that it is also sufficient in the case of analytic  ($C^\omega$) functions:
\begin{proposition}\label{presentation}
Let $F=F(x,y,z)$ be an analytic function defined on an open neighborhood of the origin $\mathbf{0}\in\Bbb{R}^3$ that satisfies the identity in \eqref{condition}. Then there exist analytic functions $f=f(x,y)$ and $g=g(u,z)$ for which 
$F(x,y,z)=g\left(f(x,y),z\right)$ over some  neighborhood of the origin.  
\end{proposition}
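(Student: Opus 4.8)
The plan is to read the hypothesis \eqref{condition} as a statement about how the ``horizontal'' slices $F(\cdot,\cdot,z)$ vary with $z$. Wherever $F_y\neq 0$ we may divide and compute
\[
\frac{\partial}{\partial z}\!\left(\frac{F_x}{F_y}\right)=\frac{F_{xz}\,F_y-F_x\,F_{yz}}{(F_y)^2}=0,
\]
the numerator vanishing precisely by \eqref{condition}. Thus the ratio $F_x/F_y$ is independent of $z$, which geometrically says that the level curves of the bivariate function $F(\cdot,\cdot,z)$ form the \emph{same} family of curves for every $z$. This is exactly the obstruction one expects: if $F=g(f(x,y),z)$ as in \eqref{result-three-var}, then for each fixed $z$ the level curves of $F(\cdot,\cdot,z)$ coincide with those of $f$. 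So the natural guess is to set $f(x,y):=F(x,y,0)$ and to prove that $F$ factors through it.

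First I would treat the nondegenerate case, assuming after relabeling that $F_y(\mathbf{0})\neq 0$ (the case $F_x(\mathbf{0})\ne0$ being symmetric, and the case where both vanish deferred). Then $f(x,y)=F(x,y,0)$ has $f_y(\mathbf{0})\neq0$, so by the analytic implicit function theorem the equation $f(x,y)=u$ can be solved for $y=\psi(x,u)$ with $\psi$ analytic near the origin. I would then \emph{define} $g(u,z):=F\bigl(0,\psi(0,u),z\bigr)$, which is analytic as a composition of analytic functions, and claim $F(x,y,z)=g(f(x,y),z)$. Since $(x,y)$ and $\bigl(0,\psi(0,f(x,y))\bigr)$ lie on the same level curve of $f$, the identity reduces to showing that each slice $F(\cdot,\cdot,z)$ is constant along the level curves of $f$. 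The tangent to such a curve points in the direction $(-f_y,f_x)=(-F_y(\cdot,\cdot,0),\,F_x(\cdot,\cdot,0))$, and the corresponding directional derivative of $F(\cdot,\cdot,z)$ equals $F_x(\cdot,\cdot,0)\,F_y(\cdot,\cdot,z)-F_y(\cdot,\cdot,0)\,F_x(\cdot,\cdot,z)$, which is zero precisely because the ratio $F_x/F_y$ does not depend on $z$. An equivalent and perhaps cleaner route is to show that \eqref{condition} forces $F_z=H(F,z)$ for an analytic $H$ — since $F_z$ is constant along each level curve of $F(\cdot,\cdot,z)$ by the same tangential computation — and then to recover $g$ as the flow of the ordinary differential equation $g_z=H(g,z)$, $g(u,0)=u$, with $f=F(\cdot,\cdot,0)$ again supplying the initial data.

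The hard part will be the degenerate locus where $F_x(\mathbf{0})=F_y(\mathbf{0})=0$, since there the implicit function theorem is unavailable, the level set of $f$ through the origin is singular (possibly just the point $\mathbf{0}$), and the division defining the $z$-independent ratio breaks down near $\mathbf{0}$. If both partials vanish identically then $F$ depends on $z$ alone and the factorization is trivial ($f$ arbitrary, $g(u,z)=F(\cdot,\cdot,z)$); otherwise, by analyticity $\nabla_{x,y}F\neq\mathbf{0}$ on an open dense set, and I would run the construction above there and argue that the resulting $g$ extends analytically across the critical locus. Making this extension rigorous — for instance by controlling $g$ through the ODE formulation, whose right-hand side $H$ one must check is analytic even where $\nabla_{x,y}F$ degenerates — is the step requiring the most care, and is where the analyticity hypothesis (as opposed to mere smoothness, cf.\ \S\ref{smooth setting}) does the real work.
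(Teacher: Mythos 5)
Your treatment of the nondegenerate case (say $F_y(\mathbf{0})\neq 0$) is correct, and it takes a genuinely different route from the paper. The paper Taylor-expands $F$ in $z$, uses the change of coordinates $(\xi,\eta,z)=(F(x,y,0),y,z)$ to show that every coefficient $\frac{\partial^kF}{\partial z^k}(x,y,0)$ is a function of $\xi$ alone, and assembles $g$ as the series $\sum_k\frac{1}{k!}g_k(u)z^k$; this requires first deriving the whole hierarchy of identities \eqref{constraints} from \eqref{condition}. You instead integrate the single identity $\partial_z(F_x/F_y)=0$ in $z$ to get $F_x(x,y,z)F_y(x,y,0)=F_y(x,y,z)F_x(x,y,0)$ near $\mathbf{0}$, and then check that each slice $F(\cdot,\cdot,z)$ is constant along the level curves of $f=F(\cdot,\cdot,0)$, which the implicit function theorem parametrizes as connected graphs $y=\psi(x,u)$. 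This buys two things: you never need the higher-order constraints \eqref{constraints}, and your $g(u,z)=F\bigl(0,\psi(0,u),z\bigr)$ is manifestly analytic as a composition, whereas the paper's series for $g$ leaves a convergence point tacit. The price is the implicit function theorem, which the paper avoids; your ODE variant $g_z=H(g,z)$ is the same observation in integrable-systems form, cf.\ Remark \ref{Frobenius}.

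The genuine gap is the degenerate case $F_x(\mathbf{0})=F_y(\mathbf{0})=0$, which you defer, and for which your proposed strategy --- keep $f=F(\cdot,\cdot,0)$, run the construction on the dense open set where $\nabla_{x,y}F\neq\mathbf{0}$, and extend $g$ analytically across the critical locus --- fails as stated. Take $F(x,y,z)=(x^2+y^2)z$: it satisfies \eqref{condition} (both sides equal $4xyz$), it is not a function of $z$ alone, and $\nabla_{x,y}F=(2xz,2yz)$ is nonzero on a dense open set; yet $f(x,y)=F(x,y,0)\equiv 0$ is constant, so $g(f(x,y),z)$ would be independent of $(x,y)$ and no $g$ can reproduce $F$. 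There is nothing to extend --- the function $f$ itself must be changed (here to $x^2+y^2=\partial F/\partial z|_{z=0}$). Moreover, the local representations you would obtain at nearby regular points use the slices $F(\cdot,\cdot,z_0)$ through those points as their inner function, and these need not glue into a single representation valid at the origin. The paper's device is to differentiate first: find $k$ for which $\frac{\partial^{k+1}F}{\partial x\partial z^k}(\mathbf{0})$ or $\frac{\partial^{k+1}F}{\partial y\partial z^k}(\mathbf{0})$ is nonzero, apply the nondegenerate case to $\frac{\partial^kF}{\partial z^k}$ (which still satisfies \eqref{condition}), and integrate $k$ times in $z$; if no such $k$ exists one must argue directly from the Taylor expansion. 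If you adopt this reduction you must also check that the functions of $(x,y)$ arising as constants of integration are themselves functions of the new $f$, so the degenerate case requires a genuine argument rather than a density-plus-extension appeal.
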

To prove Proposition \ref{presentation}, we look at the Taylor expansion of $F$ with respect to $z$ and argue that each partial derivative has a representation like \eqref{result-three-var}. We then explicitly construct the desired $f$ and $g$ in \eqref{result-three-var} with the help of the Taylor series. Consequently, we arrive at a description of the TFS containing analytic functions of three variables as the set of solutions to a single PDE. 

\begin{figure}
    \centering
    \includegraphics[width=10cm]{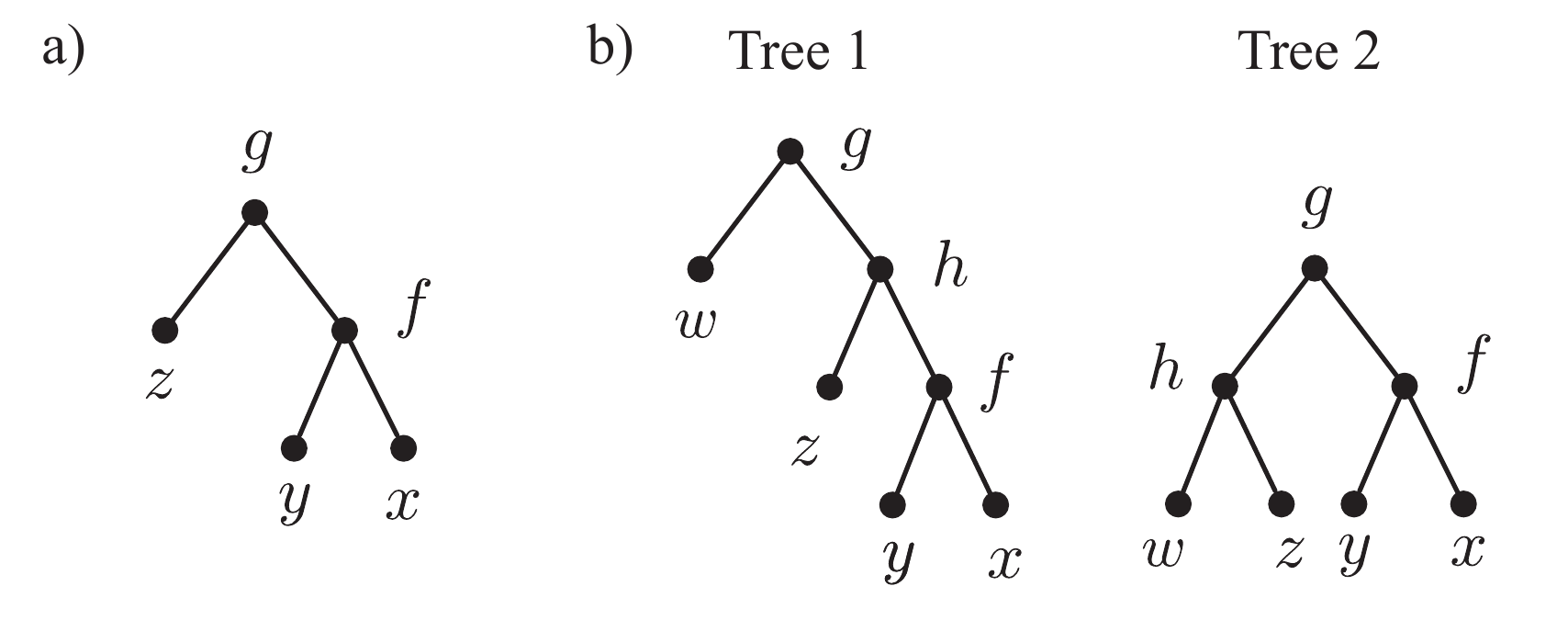}
    \caption{Functions computed on trees. a) The binary tree corresponding to the superposition $g(f(x,y),z)$. The variables are $x,y,z$ and the bivariate functions assigned to the nodes are $f$ and $g$. 
    b) The binary trees with four inputs $x,y,z,w$. Here $f,g,h$ are the functions assigned to the nodes. The corresponding superpositions from left to right are $g(h(f(x,y),z),w)$ and $g(f(x,y),h(z,w))$.}
    \label{fig:three-four}
\end{figure}

Generalizing this setup to a higher number of variables, the following question arises: When can an analytic multivariate function be obtained from composition of functions of two variables? Allowing more than three leaves results in graph-theoretically distinct binary trees.  For example, in the case of functions of four variables, there exist two non-isomorphic  binary trees; Figure \ref{fig:three-four}. The corresponding representations are
\begin{equation*}
F(x,y,z,w)= g(h(f(x,y),z),w)
\end{equation*}
for the first tree and
\begin{equation*}
F(x,y,z,w)= g(f(x,y),h(z,w))
\end{equation*}
for the second one. Thus each (labeled) binary tree $T$ comes with its corresponding space $\mathcal{F}(T)$ of analytic \textit{tree functions} that could be obtained from analytic functions on smaller number of variables via composition according to the hierarchy that $T$ provides; see  Definition \ref{definition-functions}.

Condition \eqref{condition} from the ternary case is the prototype of constraints that general smooth functions from a TFS must satisfy. By fixing  $n-3$ variables of a function of $n$ variables in the TFS under consideration, the resulting function of three variables belongs to the TFS of the tree formed by those three leaves and is hence a solution to a PDE of the form \eqref{condition}. Since this is true for any triple of variables, numerous  necessary conditions must be imposed. In Theorem \ref{main} we prove that  for analytic functions, these conditions are again sufficient. 
\begin{theorem}\label{main}
Let $T$ be a binary tree with $n$ terminals and $F\in \mathcal{F}(T)$. Suppose the terminals of $T$ are labeled by the coordinate functions $x_1,\dots,x_n$ on $\Bbb{R}^n$.
Then for  any three leaves of $T$ corresponding to variables $x_i,x_j,x_l$ of $F$ with the property that there is a  sub-tree of $T$ containing the leaves $x_i,x_j$ while missing the leaf $x_l$ (Figure \ref{fig:predecessor})\footnote{Clearly, there is always a rooted sub-tree that separates one of the leaves $x_i,x_j,x_l$ from the other two: consider the smallest rooted sub-tree (cf. \S\ref{tree functions} or Figure \ref{fig:tree} for the terminology) that has all of them as leaves. Adjacent to the root of this smaller binary tree are its left and right sub-trees. One of them must contain two of $x_i,x_j,x_l$ and the other one has the third leaf.}, $F$ must satisfy  
\begin{equation}\label{temp2}
\frac{\partial^2 F}{\partial x_i\partial x_l}.\frac{\partial F}{\partial x_j} = \frac{\partial^2 F}{\partial x_j\partial x_l}.\frac{\partial F}{\partial x_i}.
\end{equation}
Conversely, an analytic function $F$ defined in a neighborhood of a point $\mathbf{p}\in\Bbb{R}^n$ can be implemented on the tree $T$ provided that for any triple $(x_i,x_j,x_l)$ of its variables with the above property \eqref{temp2} holds and moreover, for any two sibling leaves $x_i,x_{i'}$, either $\frac{\partial F}{\partial x_i}(\mathbf{p})$ or  $\frac{\partial F}{\partial x_{i'}}(\mathbf{p})$ is non-zero.
\end{theorem}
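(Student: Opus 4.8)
The plan is to prove both directions, the converse being the substantial one. For necessity I would freeze the $n-3$ variables other than $x_i,x_j,x_l$ at constants and trace the composition defining $F$ through $T$: the sub-tree that separates $\{x_i,x_j\}$ from $x_l$ collapses to an analytic bivariate function $\hat f(x_i,x_j)$, the least common ancestor of that sub-tree's root and the leaf $x_l$ combines $\hat f$ with a univariate function of $x_l$, and every node above it is now constant. Hence the restriction is a ternary tree function $\hat F=\hat g(\hat f(x_i,x_j),x_l)$, for which the chain-rule computation behind \eqref{condition} (the necessity half of Proposition \ref{presentation}) yields \eqref{temp2}. Freezing the other variables leaves the partials in $x_i,x_j,x_l$ untouched, so \eqref{temp2} holds for $F$ itself.

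For the converse I would induct on the number $n$ of leaves, with $n\le 2$ trivial, reducing $n$ by contracting a cherry. Every binary tree with $n\ge 2$ leaves has two sibling leaves $x_i,x_{i'}$ (take an internal node of maximal depth), and by the sibling hypothesis I may assume $\frac{\partial F}{\partial x_{i'}}(\mathbf p)\neq 0$, hence nonzero near $\mathbf p$. Since $\{x_i,x_{i'}\}$ is the smallest sub-tree containing both leaves, every triple $(x_i,x_{i'},x_l)$ with $x_l$ any other leaf is admissible, so \eqref{temp2} gives
\[
\frac{\partial}{\partial x_l}\left(\frac{\partial F/\partial x_i}{\partial F/\partial x_{i'}}\right)=0
\]
for each such $x_l$; therefore the ratio $R:=\dfrac{\partial F/\partial x_i}{\partial F/\partial x_{i'}}$ is a function of $(x_i,x_{i'})$ alone.

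The analytic core is then a factorization: $F=\tilde F(\phi(x_i,x_{i'}),\text{rest})$ with $\phi,\tilde F$ analytic. I would take $\phi(x_i,x_{i'}):=F(x_i,x_{i'},\mathbf p_{\mathrm{rest}})$, solve $\phi=t$ for $x_{i'}=\psi(x_i,t)$ by the analytic implicit function theorem (valid since $\partial_{x_{i'}}\phi(\mathbf p)\neq 0$), and set $G:=F(x_i,\psi(x_i,t),\text{rest})$; using $\partial_{x_i}\psi=-\partial_{x_i}\phi/\partial_{x_{i'}}\phi=-R$ one checks $\partial_{x_i}G\equiv 0$, so $G$ is independent of $x_i$ and defines the desired $\tilde F$, with $\partial_t\tilde F(\tilde{\mathbf p})=1\neq0$. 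I would then feed $\tilde F$ into the induction on the contracted tree $T'$ (the cherry replaced by a single leaf $t$): from $\partial_{x_i}F=\partial_t\tilde F\cdot\partial_{x_i}\phi$, $\partial_{x_{i'}}F=\partial_t\tilde F\cdot\partial_{x_{i'}}\phi$ and $\partial_{x_a}F=\partial_{x_a}\tilde F$ for the other variables, each admissible triple of $T'$ — whether it avoids $t$, pairs $t$ with a variable, or separates two variables from $t$ — translates into an admissible triple of $T$, so the PDE system for $\tilde F$ follows from that for $F$; the sibling condition transfers as well, the only new sibling pair being the one at $t$, which is covered by $\partial_t\tilde F(\tilde{\mathbf p})\neq0$. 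The inductive hypothesis then gives $\tilde F\in\mathcal F(T')$, and composing with the bivariate $\phi$ places $F\in\mathcal F(T)$.

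I expect the main obstacle to be the factorization step — producing $\phi$ and $\tilde F$ analytically and verifying $\partial_{x_i}G\equiv0$ — together with the bookkeeping that matches every admissible triple and every sibling pair of $T'$ against a corresponding one of $T$, so that no required instance of \eqref{temp2} is left unestablished.
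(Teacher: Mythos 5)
Your proof is correct; the necessity half matches the paper's (freeze the other $n-3$ variables and reduce to the ternary chain-rule computation), but your sufficiency argument takes a genuinely different route. The paper inducts top-down at the root: it splits $T$ into its two root sub-trees, Taylor-expands $F$ in the variables of one of them, and uses a coordinate change together with the higher-order identities of Lemma \ref{only one constraint} to show each Taylor coefficient factors through $\xi=F(\,\cdot\,,0)$; the function at the root is then reassembled as a power series, which is where analyticity enters essentially. You instead induct bottom-up by contracting a cherry via the analytic implicit function theorem, using only the raw second-order constraints \eqref{temp2} (no resummation, no higher-order identities) and a single induction case. Two points worth recording explicitly in a write-up: the constraints for $\tilde F$ must hold on a full neighborhood of $\tilde{\mathbf p}$, which follows because $(x_i,x_{i'},\mathrm{rest})\mapsto(\phi(x_i,x_{i'}),\mathrm{rest})$ is a submersion near $\mathbf p$ since $\partial_{x_{i'}}\phi(\mathbf p)\neq 0$; and the transfer of triples involving $t$ goes through after cancelling the common nonzero factor $\partial_{x_{i'}}\phi$ from identities such as $\partial^2_{x_a x_{i'}}F=\partial^2_{x_a t}\tilde F\cdot\partial_{x_{i'}}\phi$. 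As for what each approach buys: the paper's explicit Taylor construction is reused verbatim for the polynomial refinement (Proposition \ref{polynomial-global}), whereas yours is shorter and makes transparent that analyticity is needed only to keep $\psi$ and $\tilde F$ in the analytic class --- indeed your argument would prove sufficiency for $C^\infty$ functions under the stated sibling non-vanishing hypothesis, which is consistent with the smooth counterexample of \S\ref{smooth setting}, since there both sibling partials vanish at the origin.
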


\begin{figure}
    \centering
    \includegraphics[width=7cm]{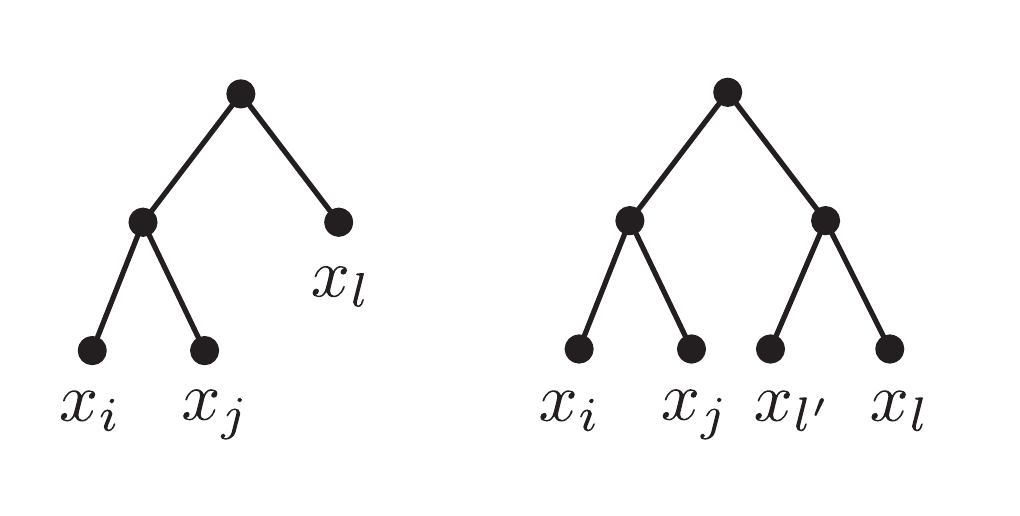}
    \caption{Theorem \ref{main} poses constraints of form \eqref{temp2} on partial derivatives w.r.t. any triple of variables. Thinking of variables as leaves of the tree, in any such a triple there is an \textit{outsider} which is separated from the other two leaves via a rooted sub-tree. For instance, in the trees above $x_l$ and $x_{l'}$ are outsiders of the triples $\{x_i,x_j,x_l\}$ and $\{x_i,x_j,x_{l'}\}$ respectively.}
    \label{fig:predecessor}
\end{figure}

\begin{figure}
    \centering
    \includegraphics[width=6cm]{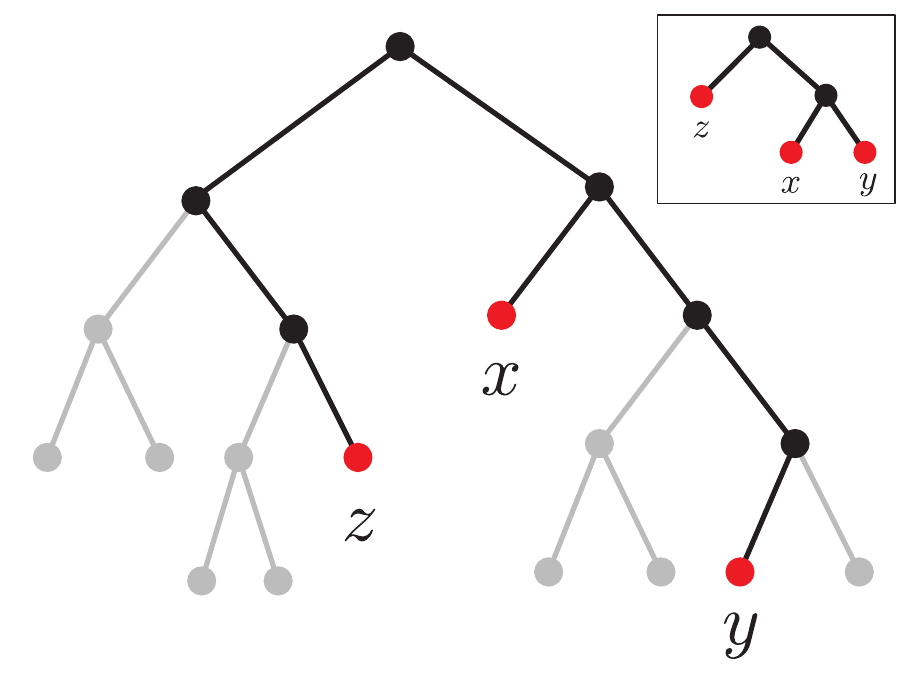}
    \caption{A tree function with $n$ inputs (here $n=10$) is shown. Fixing $n-3$ variables (gray leaves), we get a ternary function (red leaves). PDEs \eqref{temp2} appeared in Theorem \ref{main} are instances of the constraint \eqref{condition} written for such ternary functions.}
    \label{fig:necessary}
\end{figure}

The general argument for the trees with larger number of leaves builds on the proof in the case of ternary functions demonstrated above. The proof occupies \S\ref{the proof} and heavily uses the analyticity assumption. We digress to the setting of smooth ($C^\infty$) function in \S\ref{smooth setting} to show that this assumption cannot be dropped. 

The constraints in the Theorem \ref{main} are algebraically dependent. The number of the constraints imposed by Theorem \ref{main} is ${n \choose 3}$, hence cubic in the number of leaves. In  \S\ref{codimension} we show that ``generically'' the number of constraints could be reduced to ${n-1 \choose 2}$. This leads to  a heuristic\footnote{We use the term ``heuristic'' both because the space of analytic functions that Theorem \ref{main} deals with is infinite-dimensional and so one should be careful about the exact meaning of co-dimension; and moreover, because the number of constraints is not necessarily synonymous to the deficit of the dimension of the subspace of tree functions. This is because each constraint in the form of \eqref{temp2} is a functional identity that could amount to multiple constraints on the coefficients of the Taylor expansion.} result on the co-dimension of the TFS in Proposition \ref{binomial growth} which states that the number of independent functional equations describing a TFS  grows only quadratically with the number of leaves.

The space of  analytic functions is infinite-dimensional and this makes it difficult to rigorously measure how ``small'' a TFS is relative to the ambient space of all analytic functions. However, under certain restrictions, the dimensions of the tree function space or even the space itself are finite. 	Two examples are worthy of  investigation: bit-valued functions  of the form $\{0,1\}^n\rightarrow\{0,1\}$ and polynomials of bounded degree. In the bit-valued setting of \S\ref{discrete-tree} each  node is characterized by a function $\{0,1\}^2\rightarrow\{0,1\}$. We prove that Theorem \ref{main} still holds in the sense of formal differentiation; see Theorem \ref{main binary}.   Moreover, we enumerate the functions in the discrete TFS as $\frac{2\times 6^n+8}{5}$ (Corollary \ref{formula}), a number which is much smaller than the number of all possible bit-valued functions which is $2^{2^n}$. We use this to conclude that the total number of tree functions of $n$ variables obtained from all labeled  binary trees of $n$ leaves is $o\left(2^{2^n}\right)$; see Corollary \ref{sparse}.  In the polynomial setting, each node is a bivariate polynomial. We establish that the Theorem \ref{main} applies and furthermore, holds globally; cf. Proposition \ref{polynomial-global}. In this case, if we consider polynomials of $n$ variables and of degree not greater than $k$, the polynomial TFS would be of an algebraic variety whose dimension does not exceed $n(k^2+1)$; see Proposition \ref{dimension bound}.  Again, observe that this is much smaller than the dimension $\binom{k+n}{n}$ of the ambient polynomial space.

The set of binary functions that can be implemented on a given tree is limited and this set allows the reconstruction  of the underlying tree from its corresponding TFS; see Proposition \ref{distinguish}.  For two labeled trees we define a metric: the proportion of functions that can only be represented by one of the trees (\S\ref{distance}). This can be useful: in the case of two neurons with different morphologies, this simple metric quantifies how similar the sets of functions are that the two neurons can implement.

\begin{figure}
    \centering
    \includegraphics[width=12cm]{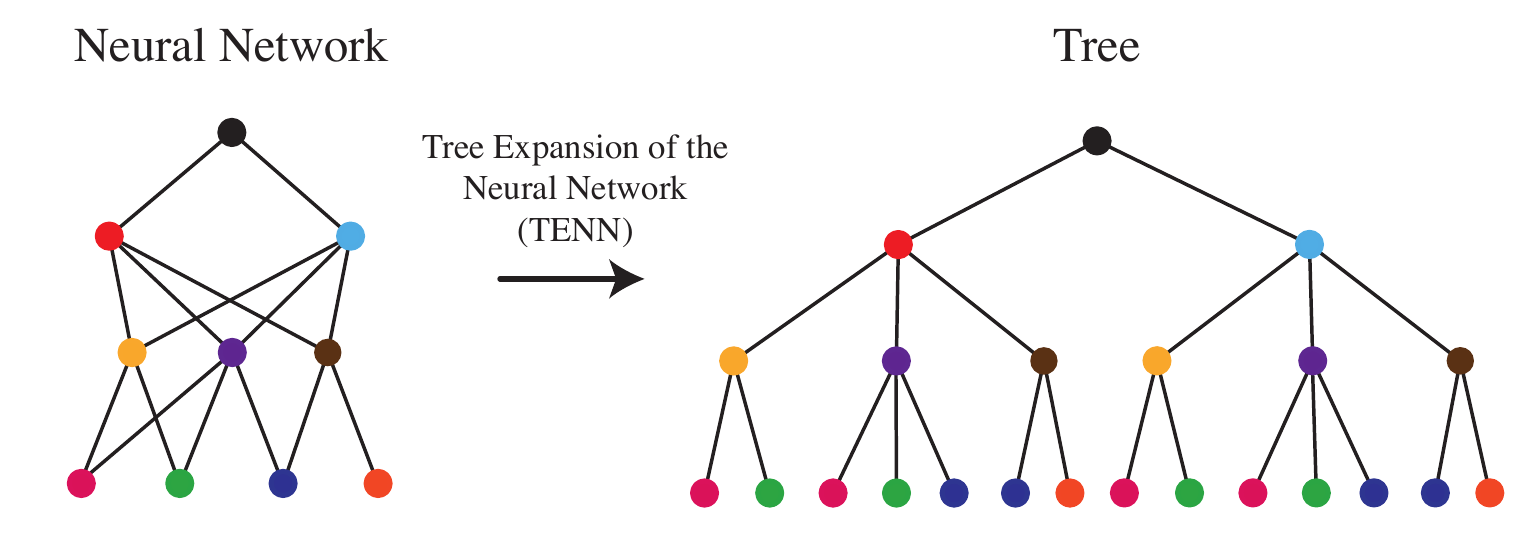}
    \caption{The TENN procedure for constructing a tree out of a multi-layer neural network.}
    \label{fig:tree-expanded-network}
\end{figure}
In a more general manner, the functions defined by neural networks are interesting examples of superpositions. In \S\ref{neural to tree} we discuss a procedure of ``expanding'' a neural network to a tree by forming the corresponding  \textit{\textbf{T}ree \textbf{E}xpansion  of the \textbf{N}eural \textbf{N}etwork} or \textbf{TENN} for short. The idea is to convert the neural network of interest into a tree by duplicating the nodes that are connected to more than one node of the upper layer; see Figure \ref{fig:tree-expanded-network}. The procedure then allows us to revert back to the familiar setting of trees. A crucial point to notice is that, unlike previous sections, trees associated with neural networks are not necessarily binary and furthermore, a variable could appear as the label of more than one leaf. In other words, the functions constituting the superposition may have variables in common, e.g. $F(x,y,z)=g(f(x,y),h(x,z))$.  Seeking similar constraints for describing the TFS of a tree with repeated labels, in \S\ref{labeled; exact} we take a closer look at the preceding superposition to obtain a necessary condition:
\begin{proposition}\label{pde-for-g(f(x,y),h(x,z))}
Assuming that $f, g, h$ are four times differentiable, the superposition $F(x,y,z) = g(f(x,y),h(x,z))$ satisfies the PDE below:
\begin{equation*}
{\rm{det}}\begin{bmatrix}
F_x&F_y&F_z&0&0&0&0\\
F_{xy}&F_{yy}&F_{yz}&F_y&0&0&0\\
F_{xz}&F_{yz}&F_{zz}&0&F_z&0&0\\
F_{xyz}&F_{yyz}&F_{yzz}&F_{yz}&F_{yz}&0&0\\
F_{xyy}&F_{yyy}&F_{yyz}&2F_{yy}&0&F_y&0\\
F_{xzz}&F_{yzz}&F_{zzz}&0&2F_{zz}&0&F_z\\
F_{xyzz}&F_{yyzz}&F_{yzzz}&F_{yzz}&2F_{yzz}&0&F_{yz}\\
\end{bmatrix}
=0
\end{equation*}
\end{proposition}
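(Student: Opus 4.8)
The plan is to exploit the variable-separation structure of the superposition: since $y$ enters $F$ only through the inner function $f(x,y)$ and $z$ only through $h(x,z)$, the first-order partials collapse dramatically. Writing $u=f(x,y)$ and $v=h(x,z)$, the chain rule gives $F_y=g_u f_y$ and $F_z=g_v h_z$, while $F_x=g_u f_x+g_v h_x$. Solving the first two for $g_u,g_v$ and substituting into the third, I obtain the \emph{single} first-order identity
\begin{equation*}
F_x = P\,F_y + Q\,F_z, \qquad P := \frac{f_x}{f_y}, \quad Q := \frac{h_x}{h_z}.
\end{equation*}
The crucial observation is that $P$ is a function of $(x,y)$ alone and $Q$ a function of $(x,z)$ alone, so that $P_z\equiv 0$ and $Q_y\equiv 0$. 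These two vanishing conditions are exactly what make the unwanted cross-terms disappear in the subsequent differentiations.

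Next I would apply the seven differential operators $\{\,\mathrm{id},\ \partial_y,\ \partial_z,\ \partial_y\partial_z,\ \partial_y^2,\ \partial_z^2,\ \partial_y\partial_z^2\,\}$ to this identity (here the $C^4$ hypothesis guarantees that all partials through order four exist and that mixed partials commute). Because $\partial_z P=\partial_y Q=0$, each differentiation produces only the coefficient functions $P,Q,P_y,Q_z,P_{yy},Q_{zz}$ together with the constant $1$ in front of the pure $x$-derivative, and never any higher or mixed derivative of $P$ or $Q$; this is precisely why the system closes after seven equations rather than proliferating. For example $\partial_y$ yields $F_{xy}=P_y F_y+P F_{yy}+Q F_{yz}$ and $\partial_z$ yields $F_{xz}=Q_z F_z+P F_{yz}+Q F_{zz}$, while the ``mixed'' operators $\partial_y\partial_z$ and $\partial_y\partial_z^2$ introduce no new unknowns at all. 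Collecting the seven resulting relations and matching the coefficient of each of $1,P,Q,P_y,Q_z,P_{yy},Q_{zz}$ against the corresponding column, one checks that the displayed $7\times 7$ matrix $M$ is exactly the coefficient matrix of the homogeneous system $M\mathbf{n}=\mathbf{0}$ with
\begin{equation*}
\mathbf{n} = \bigl(1,\,-P,\,-Q,\,-P_y,\,-Q_z,\,-P_{yy},\,-Q_{zz}\bigr)^{\!\top}.
\end{equation*}
Since the leading entry of $\mathbf{n}$ is $1$, the vector is nonzero, so $M$ is singular, i.e.\ $\det M=0$, which is the assertion.

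The routine part is the bookkeeping verification that each differentiation lands in the right column, tracking row by row which coefficient function multiplies which partial of $F$. The one genuine subtlety is the division by $f_y$ and $h_z$ used to form $P$ and $Q$: this is legitimate wherever $f_y h_z\neq 0$, and there $\det M=0$ holds. Since the entries of $M$ are continuous and $\det M$ is a fixed differential polynomial in the partials of $F$, the identity extends to the whole domain by continuity provided $f_y$ and $h_z$ are not identically zero; and in the degenerate case (say $f_y\equiv 0$, whence $F_y\equiv 0$) the entire $y$-derivative column of $M$ vanishes and $\det M=0$ trivially. I expect this passage between the generic and degenerate loci to be the main point requiring care, the algebraic computation itself being essentially forced by the separated structure of $P$ and $Q$.
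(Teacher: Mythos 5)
Your proposal is correct and follows essentially the same route as the paper: derive $F_x = (f_x/f_y)F_y + (h_x/h_z)F_z$, apply the seven differential operators exploiting $\partial_z P = \partial_y Q = 0$, and conclude singularity of the $7\times 7$ matrix from the resulting linear dependence (the paper phrases this as the first column lying in the span of the other six, which is the same as your $M\mathbf{n}=\mathbf{0}$ with leading entry $1$). Your additional care about the division by $f_y h_z$ and the passage to the degenerate locus by continuity is a point the paper leaves implicit, but it does not change the argument.
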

The proposition suggests that tree functions are again solutions to (perhaps more tedious) PDEs. It is intriguing to ask if in presence of repeated labels there is a characterization, similar to Theorem \ref{main}, of a TFS as the solution set to a system of PDEs; cf. Question \ref{conjecture}.  We finish with one final application of this idea of transforming a neural network to a tree: In Theorem \ref{nn-size-function-space} of \S\ref{labeled; estimate}, we give an upper bound on the number of bit-valued functions computable by a neural network in terms of parameters depending on the architecture of the network. 

\section{Discussion}\label{discussion}
Here we study the functions that are obtained from hierarchical superpositions; we study functions that can be computed on trees. The hierarchy is  represented by a rooted binary tree where the leaves take different inputs and at each node a bivariate function is applied to outputs from the previous layer. In the setting of analytic functions,  in Theorem \ref{main} we characterize the space of functions that could be generated accordingly as the solution set to a system of PDEs. This characterization enables us to construct examples (e.g. \eqref{non-example}) of functions that could not be implemented on a prescribed tree. This is reminiscent of Minsky's famous XOR theorem \cite{minsky2017perceptrons}.  The space of analytic functions is infinite-dimensional and this  motivates us to investigate two settings in which the TFS is finite-dimensional (polynomials) or even finite (bit-valued). We show that the dimension or size of the TFS is considerably smaller than that of the ambient function space. The number of bit-valued functions could be estimated even for non-binary trees following the same ideas. Finally, we bridge between trees and  neural networks by associating with each feed-forward neural network its corresponding TENN; cf. Figure \ref{fig:tree-expanded-network}. This procedure allows us to apply our analysis of trees yielding an upper bound for the number of bit-valued functions that can be implemented by a neural network. 

Our main result in the continuous setting, Theorem \ref{main}, holds only for analytic functions; see the discussion in \S\ref{smooth setting}. While they constitute a large class of functions, there are important cases where one must deal with continuous non-analytic functions too. For example, a typical deep learning network is built through composition from analytic functions such as linear and sigmoid functions or the hyperbolic tangent; and also, from non-analytic functions such as ReLU or the pooling function. Continuous functions could be approximated locally by analytic ones with any desired precision (although not over the entirety of the real axis).  Therefore, while our main result (local by formulation) is an exact classification, one future direction is to study how well arbitrary continuous functions could be approximated by analytic tree functions. 

The morphology of the dendrites of neurons processes information through a (commonly binary \cite{kollins2005branching, gillette2015topological}) tree and so it is naturally related to the setup of this paper; see Figure \ref{fig:pocket-network} \cite{mel1994information, poirazi2003pyramidal}. A typical dendritic tree receives synaptic inputs from surrounding neurons. When activated, the synapses induce a current which changes the voltage on the dendrite. This is followed by a flow of the resulting current towards (and away from) the root (soma) of the neuron. In typical models of neural physiology, a neuron is segmented into compartments where their  connections and the biological parameters define the dynamics of the voltage for each compartment \cite{segev1998cable}. The dynamics of the electrical activity is often given by the following well-known ODE\footnote{The quantities appeared here are:
\begin{itemize}
 \item $V_i$ is the voltage potential of the $i^{\rm{th}}$ compartment;
 \item $V_{i}^{0,c}$ is the resting voltage potential for the $i^{\rm{th}}$ compartment and the ion $c$;
 \item $C_i$ is the membrane capacitance of the $i^{\rm{th}}$ compartment;
 \item $R_i,R_{i,j}$ denotes the resistance of the $i^{\rm{th}}$ compartment and $R_{i,j}$ is the resistance between $i^{\rm{th}}$ and $j^{\rm{th}}$ compartments; \item $g_c$ is non-linear function corresponding to the ion $c$.
\end{itemize} We only consider currents towards the soma.} \cite{hines1997neuron}:
\begin{equation}
\label{compartment_model}
C_i \frac{dV_i}{dt} = - \frac{V_i}{R_i} + \sum_{j \text{ is a child of }i}{} \frac{V_j - V_i}{R_{ij}} - \sum_{c \in \{\rm{Ca}^{2+},\rm{K}^+,\rm{Na}^+\}} g_{c}(V_i)(V_i-V_i^{0,c}).
\end{equation}
Consequently, in the case of time-varying inputs, TFSs could be of neuroscientific interest. In this situation, the functions at the nodes are operators that receive time-dependent functions as their inputs. Constraints such as \eqref{temp2} in the main theorem may be formulated in this case as well: An operator 
$$\mathbf{F}:(C^\infty)^n\rightarrow C^\infty\quad \mathbf{f}=(f_1,\dots,f_n)\mapsto\mathbf{F}(f_1,\dots,f_n)$$
admitting a tree representation is expected to satisfy equations such as 
\begin{equation}
{\rm{D}}^2_{\mathbf{e}_i,\mathbf{e}_l}\mathbf{F}(\mathbf{f}).{\rm{D}}_{\mathbf{e}_j}\mathbf{F}(\mathbf{f})
={\rm{D}}^2_{\mathbf{e}_j,\mathbf{e}_l}\mathbf{F}(\mathbf{f}).{\rm{D}}_{\mathbf{e}_i}\mathbf{F}(\mathbf{f})
\end{equation}
where derivatives of the operator must be understood in the variational sense
\begin{equation*}
\begin{split}
&{\rm{D}}_{\mathbf{e}_s}\mathbf{F}(\mathbf{f})=\lim_{h\to 0}\frac{\mathbf{F}(\mathbf{f}+h\mathbf{e}_s)-\mathbf{F}(\mathbf{f})}{h};\\
&{\rm{D}}^2_{\mathbf{e}_s,\mathbf{e}_t}\mathbf{F}(\mathbf{f})=\lim_{h\to 0}\frac{\mathbf{F}(\mathbf{f}+h\mathbf{e}_s+h\mathbf{e}_t)-\mathbf{F}(\mathbf{f}+h\mathbf{e}_s)
-\mathbf{F}(\mathbf{f}+h\mathbf{e}_t)+\mathbf{F}(\mathbf{f})}{h^2}.
\end{split}    
\end{equation*}
Utilizing discrete TFSs, in \S\ref{distance} we introduce a metric on the set of labeled binary trees that may be potentially used to quantify how similar two neurons are. A careful adaptation of our results to the time-varying situation could be the object of future enquiries.

Certain assumptions must be made before any application of our treatment of tree functions to the study of neural morphologies. First, from a biological standpoint not all  functions are admissible as functions applied at nodes of neurons. Secondly, the acyclic nature of trees assumes that a neuron functions only due to feed-forward propagation whereas in reality back-propagating action potentials also occur. Thirdly, it is well-known that there are biological mechanisms, such as ephaptic connectivity or neuromodulations, that could affect the computations in a neuron's morphology, and they are not taken into account in typical compartmental models. Our approach only applies to an abstraction of models; however, this abstraction appears meaningful.
\begin{figure}
    \centering
    \includegraphics[width=13cm]{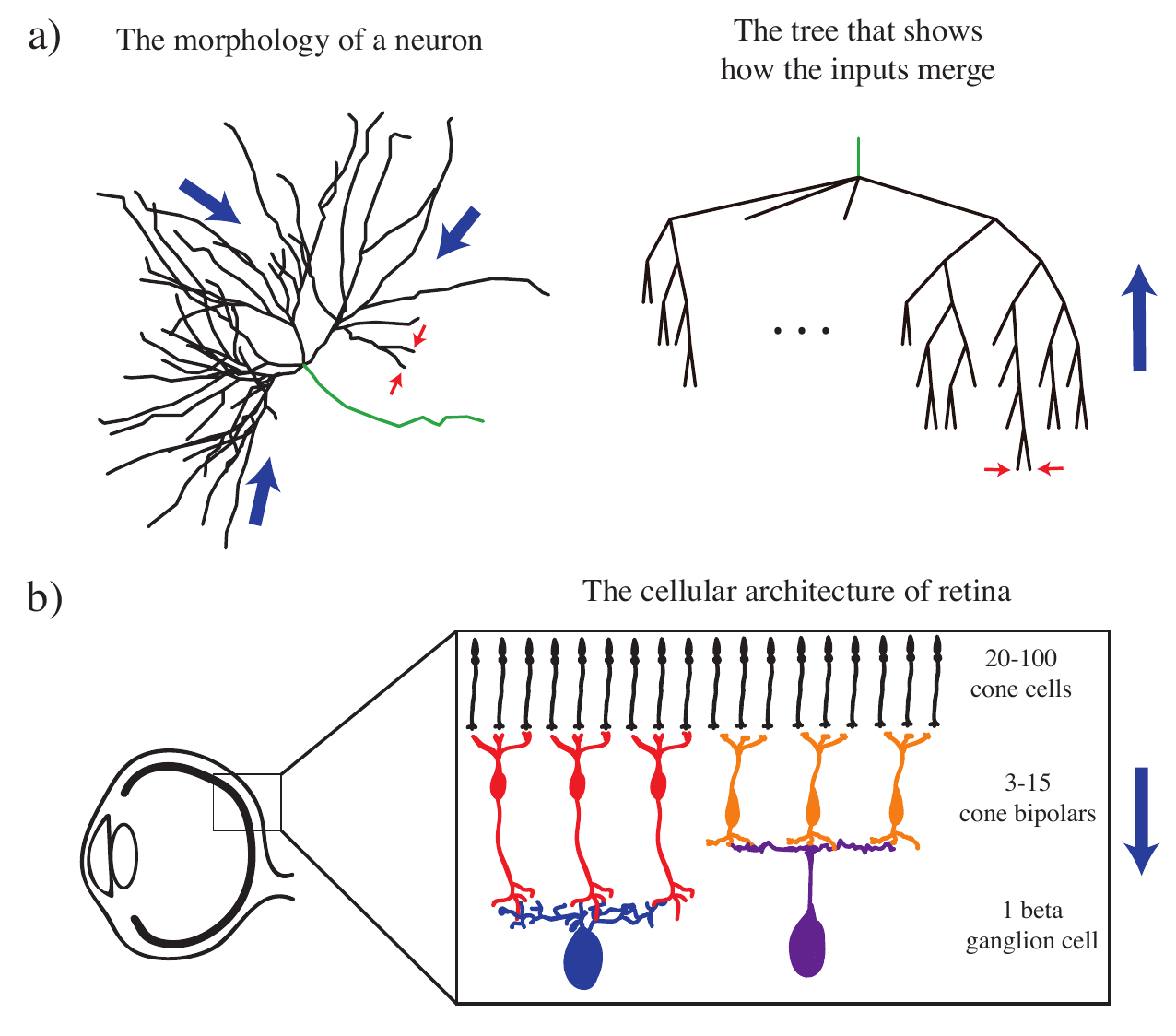}
    \caption{a) Computation in the morphology of a neuron is polarized and can be formulated by a tree function. The inputs to the tree (in red) are combined to produce one output (in green).  The illustrated morphology is a neuron from the human neocortex. The dendrites are in black and the  axon initial segment is in green. (This neuron is adapted from \cite{jacobs1997life,ascoli2007neuromorpho}.) b) Studying the connections between neurons has been a subject of inquiry \cite{ramaswamy2014connectomic,ramaswamy2019hierarchies}. Many neural circuits can also be represented by a tree. One example is provided by connections between neurons in the retina. The illustration is taken from \cite{polyak1941retina}. In both figures the blue arrows show the hierarchy of superpositions.}
    \label{fig:pocket-network}
\end{figure}

In this paper, the ``complexity'' of a TFS in bit-valued and polynomial settings is measured by its cardinality or dimension. However, there are other notions of complexity in the literature that try to capture the capacity of the space of computable functions. For example, the \textit{VC-dimension} measures the expressive power of a space of functions by quantifying the set of separable stimuli \cite{vapnik1994measuring, zador1996vc, mhaskar2017and}. For linear combinations of smooth kernels, the minimum number of bases can measure the complexity of a classifier \cite{bengio2005curse, bengio2006curse}. This suggests that large complexities of shallow networks might be due to the ``amount of variations'' of functions to be computed \cite{kuurkova2016model}. When the functions at the nodes are piece-wise linear, one can count the number of linear regions of the output function \cite{montufar2014number, hanin2019complexity}. The choice of the complexity measurement method is important when it comes to quantifying the difference between two architectures.

When a model is trained on data, we search for the best fit in the function space. Characterizing the landscape of function space can present new methods for training \cite{benjamin2018measuring}. To train models in machine learning, we use a variety of methods such as (stochastic) gradient decent, genetic algorithms, or more recent methods such as learning by coincidence \cite{shaham2018learning}. For some models such as regression, we have explicit formulae that show how to find the parameters from training data. One future line of research is to investigate whether our PDE description of the TFSs can point toward new methods of training.

Since a TFS is much smaller than the ambient space of functions, it is suggestive to consider the approximation by them. In this regard, we fix a target function and take into account the tree functions that approximate it.  Searching for the best approximation of a target function in the function space is realized by the training process. Hence one important question for approximation of a function is the stability of this process \cite{haber2017stable}. Another approach is to develop the mean-field equations to approximate the function space with a fewer equations that are easier to handle \cite{mei2018mean}. Poggio et al. have found a bound for the complexity of a neural networks with smooth (e.g. sigmoid) or non-smooth (e.g. ReLU) non-linearity that provides a prescribed accuracy for functions of a certain regularity \cite{poggio2017and}. Also by estimating the statistical risk of a neural network, one can describe model complexity relative to sample size \cite{barron2018approximation}. Now that we have a description of analytic tree functions as solutions to a system of PDEs, one further direction is to study approximations of arbitrary continuous functions by these solutions.

When the tree function is fed the same input more than once through different leaves, the constraints put on superpositions in Theorem \ref{main} must be refined and become more tedious. In \S\ref{labeled; exact}, we study the simplest possible case, namely the superposition \eqref{3var_form_alternate}. Computing higher derivatives via the chain rule along with a linear algebra argument yield the complicated fourth order PDE of Proposition \ref{pde-for-g(f(x,y),h(x,z))} as a constraint. One future line of research is to formulate similar PDE constraints in the case of general (not necessary binary) trees with repeated labels; cf. Question \ref{conjecture}.  Finding  necessary or sufficient constraints in the repeated regime  would have immediate applications to the study of continuous functions computed via neural networks with this consideration in mind that for the TENN associated with a neural network even functions assigned to nodes are probably repeated. Moreover, in the more specific context of polynomial functions, it is promising to try to formulate results such as Proposition \ref{dimension bound} about the space of polynomial tree functions; or in the bit-valued setting, any strengthening of the bound on the number of bit-valued functions implemented on a general tree that Corollary \ref{bound} provides would be desirable.  

One major goal of the theoretical deep learning is to understand the role of various architectures of neural networks. Previous studies have shown that, compared to shallow networks, deep networks can represent more complex functions \cite{bianchini2014complexity, lin2017does, kuurkova2017probabilistic}. Comparing VC-dimension of different architectures is insightful into why high-dimensional deep networks trained on large training sets often do not seem to show overfit \cite{mhaskar2017and}. Theorem \ref{nn-size-function-space} from the last section of this paper provides further intuition in this direction once instead of more traditional fully connected multi-layer perceptrons, we work with currently more popular sparse neural networks (e.g. convolutional neural networks). This is due to the fact that in the tree expansion of a sparse network the number of children of any arbitrary node would be relatively small. Theorem \ref{nn-size-function-space} indicates that the number of bit-valued functions computable by the network could be large only if the associated tree has numerous leaves. Since the tree is sparse, this could happen only if the depth of the tree (or equivalently, that of the network) is relatively large. The discussion in \S\ref{neural} suggests that studying tree functions could serve as a foundation for interesting theoretical approaches to the study of neural networks.
 
\section{Tree functions}\label{tree functions}
In this section we define the function space associated with a tree in the most general setting.
Suppose we have $n$ inputs (leaves) of a binary tree $T$. We recursively compute the output by applying at each node a function and passing the result to the next level. These calculations continue until we reach the root. 
\begin{definition}\label{definition-functions}
Let $T$ be a  tree and $I$ the set of all possible inputs that a leaf could receive. For any $n \in\mathbb{N}$, suppose $D_n\subseteq\{f|f:I^n\rightarrow I\}$.  The tree function space, $\mathcal{F}(T)$, is defined recursively: $\mathcal{F}(T) = D_1$ 
if $T$ has only one vertex. For larger trees, assuming that the successors of the root of $T$ are the roots of smaller sub-trees $T_1,\dots,T_m$, define: 
\begin{equation*}
\mathcal{F}(T)= \left\{f(F_1, \dots, F_m)|f\in D_m, F_i\in \mathcal{F}(T_i)\right\}
\end{equation*}
\end{definition}
Tree function spaces could be investigated in two different regimes: 
\begin{enumerate}
\item Functions are real analytic, i.e. $D_n = C^\omega(U)$ with $U$ an open subset of $\mathbb{R}^n$ (\S\ref{continuous setting}); 
\item Functions are bit-valued (\S\ref{discrete setting}), i.e. they belong to  
\begin{equation}\label{binary functions}
D_n =  \left\{F:\{0,1\}^n\rightarrow\{0,1\}\right\}.
\end{equation}
\end{enumerate}

 \begin{definition}
A tree is called \textbf{binary} if every non-terminal vertex (every node) of it has precisely two successors; cf. Figure \ref{fig:tree}.
\end{definition}

 \vspace{0.3cm}
  \textbf{The terminology of  binary trees.}
 
\begin{itemize}
\item Root: the unique vertex with no predecessor/parent.
\item Leaf/Terminal: a vertex with no successor\footnote{The reader is cautioned that in our usage of terms such as ``children'', ``parent'', ``successor'' and ``predecessor'' in reference to vertices we have a rooted tree  as illustrated in Figure \ref{fig:tree} in mind where the root precedes every other vertex whereas to implement a function, the computations are done in the ``upward'' direction starting from the leaves in the lowest level and culminating at the root.}/child.
\item Node/Branch point: a vertex which is not a leaf, i.e. has (two) successors. 
\item Sub-tree: all descendants of a vertex along with the vertex itself.
\item Sibling leaves: two leaves with the same parent.
\item Outsider of a triple of leaves: the leaf in the triple which is not a descendant of any common ancestors of the other two. Equivalently, the one which is separated from the other two leaves via a rooted sub-tree; see Figure \ref{fig:predecessor}. For example, in Figure \ref{fig:necessary}, $z$ is the outsider of the triple $\{x,y,z\}$.
\end{itemize}
  \vspace{0.3cm}

\begin{figure}
    \centering
    \includegraphics{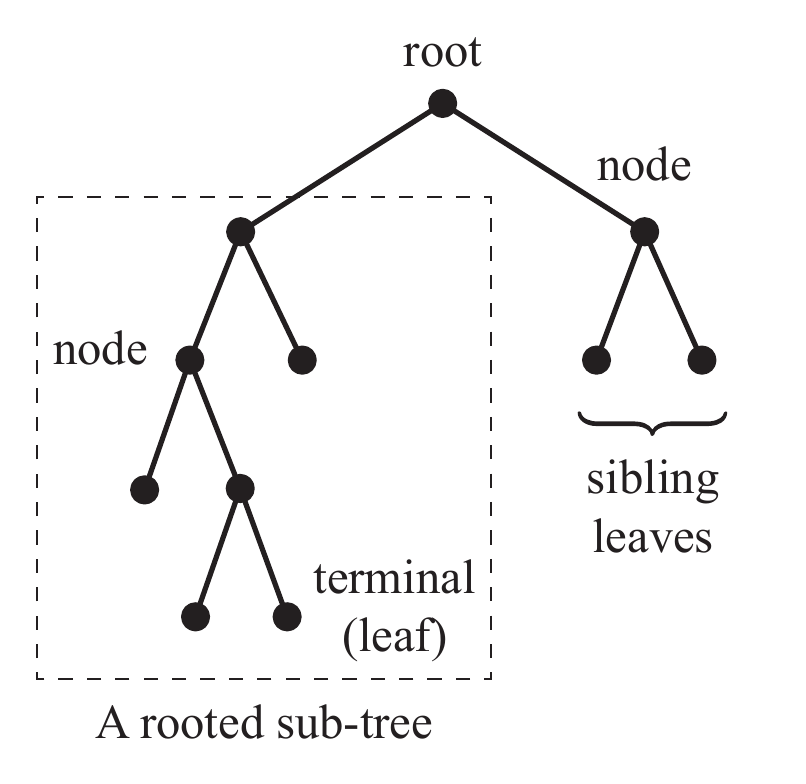}
    \caption{A rooted binary tree with the related terms used throughout this paper.}
    \label{fig:tree}
\end{figure}

\textbf{Convention.} All trees are assumed to be rooted. The number of leaves (terminals) of a tree is always denoted by $n$, and each leaf presents a variable. Unless stated otherwise, the tree is binary and these variables are assumed to be distinct, and hence the corresponding functions are of $n$ variables. Repeated labels come up only in \S\ref{neural}.
\vspace{0.4cm}

\section{Analytic function setting}\label{continuous setting}
\subsection{The case of ternary functions}
\label{three-variable-example} 
In this section, we focus on the first interesting case, namely a binary tree with three inputs. It turns out that the treatment of this basic case and the ideas therein are essential to the proof of Theorem \ref{main}. In order to make one output, two of the inputs should be first combined at one node, and the result of that combination is then combined with the third input at the root. Such functions can be written as:
\begin{equation}
\label{3var_form}
F(x,y,z) = g\left(f(x, y), z\right)
\end{equation}
where $f=f(x,y)$ and $g=g(u,z)$ are two smooth functions of two variables. 

So which functions of three inputs, $F$  could be written as in  \eqref{3var_form}? Taking the derivative w.r.t. $x$ and $y$, we have:
\begin{equation*}
 \frac{\partial F}{\partial x} = \frac{\partial g}{\partial u}.\frac{\partial f}{\partial x}, \quad
 \frac{\partial F}{\partial y} = \frac{\partial g}{\partial u}.\frac{\partial f}{\partial y}
\end{equation*}
Taking the $k^{\rm{th}}$ derivative w.r.t. $z$ yields:
$$
\frac{\partial^{k+1} F}{\partial x \partial z^k}=\frac{\partial^{k+1} g}{\partial u \partial z^k}.\frac{\partial f}{\partial x}, \quad
\frac{\partial^{k+1} F}{\partial y \partial z^k}=\frac{\partial^{k+1} g}{\partial u \partial z^k}.\frac{\partial f}{\partial y}. 
$$
Hence for every $k\in\Bbb{N}$ we should have:

\begin{equation}\label{constraints}
\frac{\partial^{k+1} F}{\partial x \partial z^k}.\frac{\partial F}{\partial y} = \frac{\partial^{k+1} F}{\partial y \partial z^k}.\frac{\partial F}{\partial x};
\end{equation}
as both sides coincide with 
$\frac{\partial^{k+1} g}{\partial u \partial z^k}\frac{\partial g}{\partial u}\frac{\partial f}{\partial x}\frac{\partial f}{\partial y}$.
In particular, for $k=1$ 
\begin{equation*}
\frac{\partial^2 F}{\partial x \partial z}.\frac{\partial F}{\partial y} = \frac{\partial^2 F}{\partial y \partial z}.\frac{\partial F}{\partial x}
\end{equation*}
which is the constraint \eqref{condition} from \S\ref{outline}. 
Notice that the identity is solely based on the function $F$ and serves as a necessary condition for the existence of a presentation such as \eqref{3var_form} for $F$.  

It is essential to observe that constraint \eqref{condition} implies the rest of the constraints imposed on $F$ in \eqref{constraints}. This is trivial for the points where $\left[\frac{\partial F}{\partial x}, \frac{\partial F}{\partial y}\right]=\mathbf{0}$. Otherwise, either $\frac{\partial F}{\partial x}$ or $\frac{\partial F}{\partial y}$ should be non-zero at the point under consideration and hence throughout a small enough neighborhood of it. 
We proceed by induction on $k$. Differentiating \eqref{constraints} w.r.t. $z$ yields:
$$
\frac{\partial^{k+2} F}{\partial x \partial z^{k+1}}.\frac{\partial F}{\partial y}
+\frac{\partial^{k+1} F}{\partial x \partial z^k}.\frac{\partial^2 F}{\partial y\partial z}
=\frac{\partial^{k+2} F}{\partial y \partial z^{k+1}}.\frac{\partial F}{\partial x}
+\frac{\partial^{k+1} F}{\partial y \partial z^k}.\frac{\partial^2 F}{\partial x\partial z}.
$$
We claim that the latter terms of two sides coincide and this will finish the inductive step. From the induction hypothesis 
$\frac{\partial^{k+1} F}{\partial x \partial z^k}.\frac{\partial F}{\partial y}
=\frac{\partial^{k+1} F}{\partial y \partial z^k}.\frac{\partial F}{\partial x}$,
while the base case $k=1$ indicates 
$\frac{\partial^2 F}{\partial x\partial z}\frac{\partial F}{\partial y}
=\frac{\partial^2 F}{\partial y \partial z}\frac{\partial F}{\partial x}$.
The vectors $\left[\frac{\partial^{k+1} F}{\partial x \partial z^k},\frac{\partial^{k+1} F}{\partial y \partial z^k}\right]$
and $\left[\frac{\partial^2 F}{\partial x\partial z}, \frac{\partial^2 F}{\partial y \partial z}\right]$
are multiples of the non-zero vector $\left[\frac{\partial F}{\partial x}, \frac{\partial F}{\partial y}\right]\neq\mathbf{0}$; so they are multiples of each other, i.e. 
$\frac{\partial^{k+1} F}{\partial x \partial z^k}.\frac{\partial^2 F}{\partial y\partial z}=\frac{\partial^{k+1} F}{\partial y \partial z^k}.\frac{\partial^2 F}{\partial x\partial z}$.

In the same vein, identity \eqref{temp2} implies the more general identity below:
\begin{equation}\label{temp1}
\frac{\partial^{k+1} F}{\partial x_i \partial x_l^k}.\frac{\partial F}{\partial x_j} = \frac{\partial^{k+1} F}{\partial x_j \partial x_l^k}.\frac{\partial F}{\partial x_i}
\quad\small
\text{(for all } k\in\Bbb{N} \text{ and } x_i,x_j,x_l \text{ as in Theorem \ref{main})}. 
\normalsize
\end{equation}
This is true even for a greater number of variables $x_{l_1},\dots,x_{l_s}$ in place of $x_l$ in the following sense:
\begin{lemma}\label{only one constraint}
Let $T$ be the tree from Theorem \ref{main} and 
$$F=F(x_1,\dots,x_n)$$ 
be a function of $n$ variables satisfying the constraints \eqref{temp2}. Then 
$$
\frac{\partial^{1+k_1+\dots+k_s} F}{\partial x_i\partial x_{l_1}^{k_1}\dots\partial x_{l_s}^{k_s}}.\frac{\partial F}{\partial x_j} 
=\frac{\partial^{1+k_1+\dots+k_s} F}{\partial x_j\partial x_{l_1}^{k_1}\dots\partial x_{l_s}^{k_s}}.\frac{\partial F}{\partial x_i}
$$
provided that for each leaf $x_{l_t}$ there is a rooted sub-tree containing $x_i,x_j$ that separates them from  $x_{l_t}$ (e.g. Figure \ref{fig:predecessor} where $x_i,x_j$ are separated from $x_l,x_{l'}$ through their predecessor). 
\end{lemma}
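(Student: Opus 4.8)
The plan is to reformulate the whole family of identities geometrically and reduce it to a single invariance statement. For the fixed pair of leaves $x_i,x_j$, introduce the $\Bbb{R}^2$-valued map
$$\mathbf{v} := \left[\frac{\partial F}{\partial x_i},\ \frac{\partial F}{\partial x_j}\right].$$
Each identity to be proven is precisely the vanishing of the $2\times 2$ determinant formed by $\mathbf{v}$ and the vector
$$\mathbf{w} := \partial_{x_{l_1}}^{k_1}\cdots\partial_{x_{l_s}}^{k_s}\mathbf{v} = \left[\frac{\partial^{1+k_1+\dots+k_s} F}{\partial x_i\partial x_{l_1}^{k_1}\dots\partial x_{l_s}^{k_s}},\ \frac{\partial^{1+k_1+\dots+k_s} F}{\partial x_j\partial x_{l_1}^{k_1}\dots\partial x_{l_s}^{k_s}}\right],$$
that is, the assertion that $\mathbf{w}$ is parallel to $\mathbf{v}$. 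In this language the hypothesis \eqref{temp2}, applied to the triple $(x_i,x_j,x_{l_t})$ — which is legitimate because each $x_{l_t}$ is an outsider of $\{x_i,x_j\}$ — says exactly that $\partial_{x_{l_t}}\mathbf{v}$ is parallel to $\mathbf{v}$ for every $t=1,\dots,s$.

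I would then argue pointwise. Fix a point $\mathbf{p}$. If $\mathbf{v}(\mathbf{p})=\mathbf{0}$, then $\frac{\partial F}{\partial x_i}(\mathbf{p})=\frac{\partial F}{\partial x_j}(\mathbf{p})=0$, so both sides of the desired identity vanish at $\mathbf{p}$ and there is nothing to prove. Otherwise $\mathbf{v}(\mathbf{p})\neq\mathbf{0}$, and by continuity $\mathbf{v}$ is nonvanishing on a neighborhood $U$ of $\mathbf{p}$; on $U$ the hypothesis lets me write $\partial_{x_{l_t}}\mathbf{v}=\lambda_t\,\mathbf{v}$ for scalar functions $\lambda_1,\dots,\lambda_s$.

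The core is then an induction on the total differentiation order $N=k_1+\dots+k_s$, showing that on $U$ every such mixed derivative of $\mathbf{v}$ equals $\mu\,\mathbf{v}$ for some scalar function $\mu$. The case $N=0$ is trivial and $N=1$ is the hypothesis. For the inductive step I peel off one index (legitimate since partial derivatives commute), write the operator as $\partial_{x_{l_t}}$ composed with an operator of order $N$, apply the induction hypothesis to the inner operator to get $\mu\,\mathbf{v}$, and use the Leibniz rule:
$$\partial_{x_{l_t}}(\mu\,\mathbf{v}) = (\partial_{x_{l_t}}\mu)\,\mathbf{v} + \mu\,\partial_{x_{l_t}}\mathbf{v} = (\partial_{x_{l_t}}\mu + \mu\lambda_t)\,\mathbf{v},$$
which is again a scalar multiple of $\mathbf{v}$. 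Hence $\mathbf{w}\parallel\mathbf{v}$ throughout $U$, in particular at $\mathbf{p}$, which is the claimed identity.

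This argument is essentially the rank-one/parallel-vector reasoning already used to pass from \eqref{condition} to \eqref{temp1} in the ternary case, now carried out for several outsider variables simultaneously; it may equivalently be phrased as saying that the one-dimensional distribution spanned by $\mathbf{v}$ is invariant under the commuting fields $\partial_{x_{l_1}},\dots,\partial_{x_{l_s}}$, hence under the algebra they generate. The one point requiring care — and the reason for the case split — is that the scalar coefficients $\lambda_t,\mu$ are defined only where $\mathbf{v}\neq\mathbf{0}$; the invariant-direction picture degenerates on the zero locus of $\mathbf{v}$, but there the identity holds for the trivial reason that both of its sides vanish. I would also record at the outset that $F$ is taken differentiable to sufficiently high order (analytic, as in the ambient setting of Theorem \ref{main}), so that all the mixed partials appearing exist and the commutativity and Leibniz manipulations are valid.
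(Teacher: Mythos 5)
Your proof is correct and follows essentially the same route as the paper: the paper's own argument is exactly the inductive differentiation of the determinant identity combined with the observation that the derivative vectors are multiples of the nonvanishing vector $\left[\frac{\partial F}{\partial x_i},\frac{\partial F}{\partial x_j}\right]$, with the same trivial case split at points where both partials vanish. Your repackaging of this as the invariance $\partial_{x_{l_t}}\mathbf{v}=\lambda_t\mathbf{v}$ propagated by the Leibniz rule is just a cleaner bookkeeping of the identical idea.
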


\begin{proof}
This can be inferred by employing the same inductive argument; i.e. differentiating the identity
$$
\frac{\partial^{k+1} F}{\partial x_i \partial x_{l_1}^k}.\frac{\partial F}{\partial x_j} = \frac{\partial^{k+1} F}{\partial x_j \partial x_{l_1}^k}.\frac{\partial F}{\partial x_i}
$$
similar to \eqref{temp1} w.r.t. 
variables $x_{l_2},\dots,x_{l_s}$ and using \eqref{temp2} for triples of variables $(x_i,x_j,x_{l_i})$ along with the non-vanishing of one of
$\frac{\partial F}{\partial x_i}$ or $\frac{\partial F}{\partial x_j}$. One trivially gets equality at points where they vanish simultaneously. 
\end{proof}

We next argue that locally, the aforementioned condition is sufficient. In another words, if an analytic ternary function satisfies \eqref{condition}, then it locally admits a tree representation such as \eqref{3var_form}.

\begin{proof}[Proof of Proposition \ref{presentation}]
Let us first impose a mild non-singularity condition at the origin: either $\frac{\partial F}{\partial x}(\mathbf{0})$ or  $\frac{\partial F}{\partial y}(\mathbf{0})$ is non-zero.
Without any loss of generality, we may assume $F(\mathbf{0})=0$ and $\frac{\partial F}{\partial x}(\mathbf{0})\neq 0$. The idea is to come up with a new coordinate system 
\begin{equation}\label{coordinate}
\left(\xi(x,y,z),\eta(x,y,z),z\right)
\end{equation}
centered at the origin in which the function $F$ is dependent on only $\xi,\eta$. Define 
\begin{equation}\label{auxiliary 7}
\xi(x,y,z):=F(x,y,0), \quad \eta(x,y,z):=y.
\end{equation}
The Jacobian of $(\xi,\eta,z)$ w.r.t. $(x,y,z)$ is given by
\begin{equation}\label{Jacobian}
\frac{\partial(\xi,\eta,z)}{\partial(x,y,z)}=
\begin{bmatrix}
\frac{\partial F}{\partial x}(x,y,0)&\frac{\partial F}{\partial y}(x,y,0)&0\\
0&1&0\\
0&0&1
\end{bmatrix}
\end{equation}
whose determinant at the origin is $\frac{\partial F}{\partial x}(\mathbf{0})$ which we have assumed to be non-zero. Thus $(\xi,\eta,z)$ is indeed a coordinate system centered at $\mathbf{0}$. Next, we consider the Taylor expansion of $F(x,y,z)$ w.r.t. $z$:
\begin{equation}\label{expansion}
F(x,y,z)=\sum_{k=0}^\infty\frac{1}{k!}\frac{\partial^k F}{\partial z^k}(x,y,0)z^k;    
\end{equation}
the equality which holds near the origin due to the analyticity assumption. 
We claim that in the new coordinate system $(\xi,\eta,z)$ the partial derivatives $\frac{\partial^k F}{\partial z^k}(x,y,0)$ that appeared above are independent of $\eta,z$. The latter is clear and for the former we apply the chain rule to differentiate with respect to $\eta$:
$$
\frac{\partial}{\partial \eta}\left(\frac{\partial^k F}{\partial z^k}(x,y,0)\right)
=\frac{\partial^{k+1} F}{\partial x\partial z^k}(x,y,0).\frac{\partial x}{\partial \eta}
+\frac{\partial^{k+1} F}{\partial y\partial z^k}(x,y,0).\frac{\partial y}{\partial \eta}.
$$
To calculate $\frac{\partial x}{\partial \eta}, \frac{\partial y}{\partial \eta}$ one has to invert the Jacobian matrix \eqref{Jacobian}:
$$
\frac{\partial (x,y,z)}{\partial(\xi,\eta,z)}=
\begin{bmatrix}
\frac{1}{\frac{\partial F}{\partial x}(x,y,0)}&-\frac{\frac{\partial F}{\partial y}(x,y,0)}{\frac{\partial F}{\partial x}(x,y,0)}&0\\
0&1&0\\
0&0&1
\end{bmatrix}
$$
that yields 
$\frac{\partial x}{\partial \eta}(x,y,z)=-\frac{\frac{\partial F}{\partial y}(x,y,0)}{\frac{\partial F(x,y,0)}{\partial x}}, 
\frac{\partial y}{\partial \eta}(x,y,z)=1$. Plugging in the previous expression for 
$\frac{\partial}{\partial \eta}\left(\frac{\partial^n F}{\partial z^n}(x,y,0)\right)$ we get:
$$
\frac{1}{\frac{\partial F}{\partial x}(x,y,0)}
\left[-\frac{\partial^{k+1} F}{\partial x\partial z^k}(x,y,0).\frac{\partial F}{\partial y}(x,y,0)
+\frac{\partial^{k+1} F}{\partial y\partial z^k}(x,y,0).\frac{\partial F}{\partial x}(x,y,0)\right]
$$
which is zero due to \eqref{constraints}; keep in mind that in a neighborhood of the origin the aforementioned identities are implied by \eqref{condition}; 
cf. Lemma \ref{only one constraint}. We conclude that in \eqref{expansion} each term $\frac{\partial^k F}{\partial z^k}(x,y,0)$
is a function of $\xi(x,y,z)=F(x,y,0)$, e.g. 
$$
\frac{\partial^k F}{\partial z^k}(x,y,0)=g_k\left(F(x,y,0)\right).
$$
Now defining $f(x,y)$ to be $F(x,y,0)$ and $g(u,z)$ to be $\sum_{k=0}^\infty\frac{1}{k!}g_k(u)z^k$, the identity \eqref{expansion}
implies that $F(x,y,z)=g\left(f(x,y),z\right)$ throughout a small enough neighborhood of $\mathbf{0}\in\Bbb{R}^3$.

Next, we omit the assumption that one of the partial derivatives of $F$ is non-zero in Proposition \ref{presentation}. If either  of 
$\frac{\partial^{k+1} F}{\partial x \partial z^k }(\mathbf{0})$ or  $\frac{\partial^{k+1} F}{\partial y \partial z^k }(\mathbf{0})$ is non-zero for some integer $k$, we apply what we just proved to  $\frac{\partial^{k} F}{\partial z^k }$  to get:
\begin{equation}
\frac{\partial^{k} F}{\partial z^k }=g\left(f(x,y),z\right).
\end{equation}
Then integrating $k$ times w.r.t. $z$  provides us with a similar expression for  $F$. There is nothing to prove if all of the partial derivatives 
$\frac{\partial^{k+1} F}{\partial x \partial z^k }(\mathbf{0})$ and  $\frac{\partial^{k+1} F}{\partial y \partial z^k }(\mathbf{0})$
are zero since in that case the Taylor expansion of $F$ describes it as the sum of a function of $(x,y)$ and a function of $z$. 
\end{proof}

\begin{remark}
The idea from the last part of the proof seems to work only for this particular presentation as in general, integration w.r.t. to one of the variables does not preserve forms such as
$g\left(f(x,y),h(z,w)\right)$. Therefore, we are going to need the non-singularity condition of Theorem \ref{main} in the following section. 
\end{remark}

\begin{remark}\label{Frobenius}
An elegant reformulation (from the field of integrable systems) of constraint \eqref{condition} imposed on a smooth tree function $F(x,y,z)$ is to say that the differential form\footnote{The reader may find a very readable account of the theory of differential forms on Euclidean spaces in \cite[chap. 9, sec. 5]{MR1886084}.} 
$\omega:=\frac{\partial F}{\partial x}{\rm{d}}x+\frac{\partial F}{\partial y}{\rm{d}}y$ must satisfy $\omega\wedge{\rm{d}}\omega=0$:
\footnotesize
\begin{equation*}
\begin{split}
\omega\wedge{\rm{d}}\omega
&=\left(\frac{\partial F}{\partial x}{\rm{d}}x+\frac{\partial F}{\partial y}{\rm{d}}y\right)
\wedge\left(\left[\frac{\partial^2 F}{\partial x^2}{\rm{d}}x+\frac{\partial^2 F}{\partial x\partial y}{\rm{d}}y+\frac{\partial^2 F}{\partial x\partial z}{\rm{d}}z\right]\wedge{\rm{d}}x
+\left[\frac{\partial^2 F}{\partial x\partial y}{\rm{d}}x+\frac{\partial^2 F}{\partial y^2}{\rm{d}}y+\frac{\partial^2 F}{\partial y\partial z}{\rm{d}}z\right]\wedge{\rm{d}}y\right)\\
&=\left(\frac{\partial F}{\partial x}{\rm{d}}x+\frac{\partial F}{\partial y}{\rm{d}}y\right)\wedge\left(\frac{\partial^2 F}{\partial x\partial z}{\rm{d}}z\wedge{\rm{d}}x
+\frac{\partial^2 F}{\partial y\partial z}{\rm{d}}z\wedge{\rm{d}}y\right)
=\frac{\partial F}{\partial x}.\frac{\partial^2 F}{\partial y\partial z}{\rm{d}}x\wedge{\rm{d}}z\wedge{\rm{d}}y
+\frac{\partial F}{\partial y}.\frac{\partial^2 F}{\partial x\partial z}{\rm{d}}y\wedge{\rm{d}}z\wedge{\rm{d}}x\\
&=\left(-\frac{\partial F}{\partial x}.\frac{\partial^2 F}{\partial y\partial z}+\frac{\partial F}{\partial y}.\frac{\partial^2 F}{\partial x\partial z}\right){\rm{d}}x\wedge{\rm{d}}y\wedge{\rm{d}}z=0.
\end{split}
\end{equation*}
\normalsize
Similar identities also hold in the general case of a (smooth) tree function $F(x_1,\dots,x_n)\in\mathcal{F}(T)$ as has appeared in Theorem \ref{main}. To any two sibling leaves $x_i,x_j$ assign the differential $1$-form $\omega_{i,j}:=\frac{\partial F}{\partial x_i}{\rm{d}}x_i+\frac{\partial F}{\partial x_j}{\rm{d}}x_j$. A straightforward calculation yields 
$\omega_{i,j}\wedge{\rm{d}}\omega_{i,j}$ as
$$
\omega_{i,j}\wedge{\rm{d}}\omega_{i,j}=\sum_{l\neq i,j}\left(-\frac{\partial F}{\partial x_i}.\frac{\partial^2 F}{\partial x_j\partial x_l}+\frac{\partial F}{\partial x_j}.\frac{\partial^2 F}{\partial x_i\partial x_l}\right){\rm{d}}x_i\wedge{\rm{d}}x_j\wedge{\rm{d}}x_l
$$
which turns out to be zero since any other leaf $x_l$ is an outsider with respect to neighboring $x_i,x_j$; hence the terms inside parentheses vanish due to \eqref{temp2}. 
The non-vanishing condition of Theorem \ref{main} implies that these $1$-forms are linearly independent throughout some small enough open subset of $\Bbb{R}^n$. They define a \textit{differential system} on the aforementioned open subset whose rank is: 
$$n-\#\text{ of pairs of sibling leaves of } T;$$
and the identities $\omega_{i,j}\wedge{\rm{d}}\omega_{i,j}=0$ could be reinterpreted as the \textit{integrability} of this system according to a classical theorem of Frobenius \cite[Theorem 2.11.11]{MR0251745}.  
\end{remark}

The discussion in this subsection settles Theorem \ref{main} for the most basic case of a binary tree with three leaves.

\subsection{Proof of the main theorem}\label{the proof}   
Let $T$ be a binary tree with $n$ leaves as in Theorem \ref{main} and $F$ be a differentiable function of  $n$ variables on an open neighborhood $U$ of $\mathbf{p}\in\Bbb{R}^n$. 

\vspace{0.5cm}
\textbf{\underline{The proof of necessity}}\\
Let $F\in\mathcal{F}(T)$. Consider a triple of variables $(x_i,x_j,x_l)$ as in Theorem \ref{main}. For the ease of notation, suppose they are the first three coordinates $x_1,x_2,x_3$. Given $k\in\Bbb{N}$ and a point 
$$\mathbf{q}=(q_1,q_2,q_3;q_4,\dots,q_n)\in U,$$ 
we need to verify \eqref{temp2} at $\mathbf{q}$. Setting the last $n-3$ coordinates to be constants $q_4,\dots,q_n\,,$ we end up with the function
$$
(x,y,z)\mapsto F(x,y,z;q_4,\dots,q_n)
$$
of three variables defined on the open neighborhood $\pi(U)$ of $(q_1,q_2,q_3)\in\Bbb{R}^3$  which is the image of $U$ under the projection onto the first three coordinates $\pi:\Bbb{R}^n\rightarrow\Bbb{R}^3$. This new function is implemented on a tree with three inputs $x,y,z$ (corresponding to leaves $x_i,x_j,x_l$ in the original statement of Theorem \ref{main}) and with $x,y$ adjacent to the same node as $x_i$ and $x_j$ were separated from $x_l$ in the original tree; see Figure \ref{fig:necessary}. Hence \eqref{condition} holds for this function:
\begin{equation*}
\begin{split}
&\frac{\partial^2 F}{\partial x_1 \partial x_3}(x,y,z;q_4,\dots,q_n).\frac{\partial F}{\partial x_2}(x,y,z;q_4,\dots,q_n) \\
&=\frac{\partial^2 F}{\partial x_2 \partial x_3}(x,y,z;q_4,\dots,q_n).\frac{\partial F}{\partial x_1}(x,y,z;q_4,\dots,q_n);
\end{split}
\end{equation*}
which at $(x,y,z)=(q_1,q_2,q_3)$ yields the desired constraint 
$$
\frac{\partial^2 F}{\partial x_1 \partial x_3}(\mathbf{q}).\frac{\partial F}{\partial x_2}(\mathbf{q}) = 
\frac{\partial^2 F}{\partial x_2 \partial x_3}(\mathbf{q}).\frac{\partial F}{\partial x_1}(\mathbf{q}). \qed
$$

\begin{figure}
    \centering
    \includegraphics[width=\columnwidth]{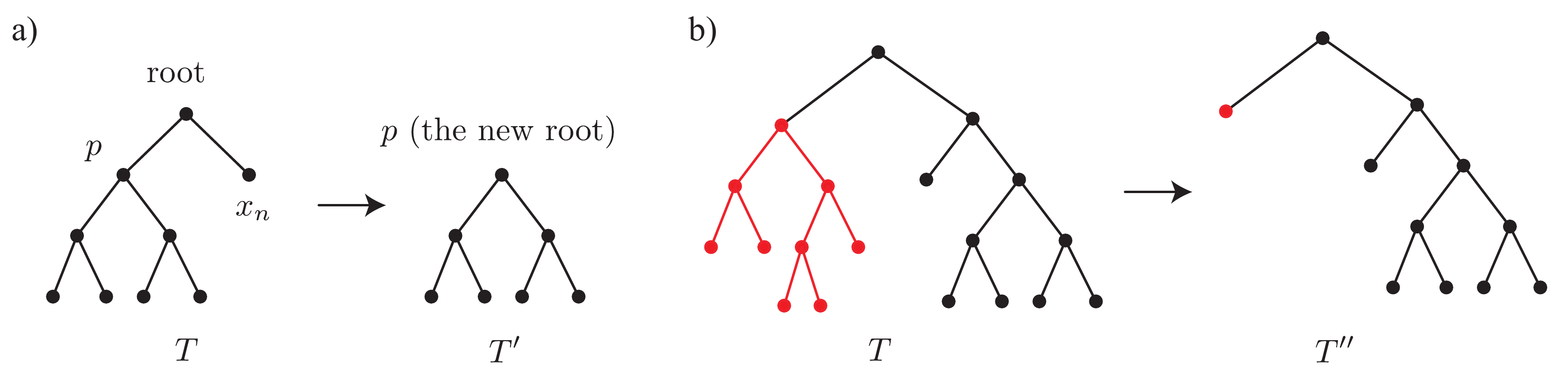}
    \caption{a) Removing a leaf adjacent to the root. b) Collapsing the left rooted sub-tree to a leaf.}
    \label{fig:T''}
\end{figure}

Next, we argue that under the assumptions outlined in the second part of Theorem \ref{main} the identities such as \eqref{temp2} are enough to implement $F(x_1,\dots,x_n)$ on $T$ locally around $\mathbf{p}$. The proof of sufficiency is based on recursively constructing the desired presentation of $F$ as a composition of bivariate functions by reducing the size of $T$. The base of the induction, the case of a tree with three terminals, has already been settled in Proposition \ref{presentation}.\\
\indent We claim that, up to relabeling variables, $F(x_1,\dots,x_n)$ can be written as either 
\begin{equation}\label{first presentation}
F(x_1,\dots,x_{n-1};x_n)=g\left(f(x_1,\dots,x_{n-1}),x_n\right)
\end{equation}
or 
\begin{equation}\label{second presentation}
F(x_1,\dots,x_s;x_{s+1},\dots,x_n)=g\left(f(x_1,\dots,x_s),x_{s+1},\dots,x_n\right)
\end{equation}
where the function $f$ satisfies the hypothesis of the existence part of Theorem \ref{main} for $n-1$ or $s$ variables (the integer $s\in\{2,\dots,n-2\}$ is going to be introduced shortly). In terms of the tree $T$, the first normal form occurs when $x_n$ is connected directly to the root; the removal of the leaf and the root then results in a smaller tree $T'$ with $n-1$ leaves; cf. part (a) of Figure \ref{fig:T''}. The induction hypothesis then establishes $f\in\mathcal{F}(T')$ and finishes the proof. On the other hand, \eqref{second presentation} comes up when neither of the two rooted sub-trees obtained from excluding the root is singleton. The number $s\geq 2$ here denotes the number of the leaves of one of these sub-trees, e.g. the ``left'' one. By symmetry, let us assume that variables are labeled such that $x_1,\dots,x_s$ are the leaves of the sub-tree to the left of the root while $x_{s+1},\dots,x_n$ appear in the sub-tree to the right. Graph-theoretically, gathering the variables $x_1,\dots,x_s$ together in \eqref{second presentation} amounts to collapse the left sub-tree to a leaf. This results in a new binary tree $T''$ with the same root but with $n-s+1$ leaves which is of the form discussed before: it has a ``top'' leaf directly connected to the root; see part (b) of Figure \ref{fig:T''}. The final step is to invoke the induction hypothesis to argue that $g$ in \eqref{second presentation} belongs to $\mathcal{F}(T'')$.

\vspace{0.5cm}
\textbf{\underline{Part I of the proof of sufficiency: Suppose there is a leaf adjacent to the root.}}\\
Without loss of generality, we consider everything in a neighborhood of $\mathbf{0}\in\Bbb{R}^n$ and assume $F(\mathbf{0})=0$. 
Theorem \ref{main} also requires at least one of the partial derivatives of $F$ w.r.t. $x_1,\dots,x_{n-1}$ to be non-zero; by symmetry, let us assume 
$\frac{\partial F}{\partial x_1}(\mathbf{0})\neq 0$.

Next, define 
\begin{equation}\label{coordinate 1}
(y_1,y_2,\dots,y_n)=\left(F(x_1,\dots,x_{n-1},0),x_2,\dots,x_n\right).    
\end{equation}
This is a new coordinate system centered at the origin as the Jacobian 
\small
\begin{equation}\label{Jacobian 1}
\begin{split}
&\frac{\partial(y_1,y_2,\dots,y_n)}{\partial(x_1,x_2,\dots,x_n)}=\\
&\begin{bmatrix}
\frac{\partial F}{\partial x_1}(x_1,\dots,x_{n-1},0)&\frac{\partial F}{\partial x_2}(x_1,\dots,x_{n-1},0)&\dots&\frac{\partial F}{\partial x_{n-1}}(x_1,\dots,x_{n-1},0)&0\\
0&1&\dots&0&0\\
\vdots&\vdots&\ddots&\vdots&0\\
0&0&\dots&1&0\\
0&0&\dots&0&1
\end{bmatrix}
\end{split}
\end{equation}
\normalsize
is of determinant $\frac{\partial F}{\partial x_1}(\mathbf{0})\neq 0$ at the origin. The goal is to write down $F(x_1,\dots,x_n)$ in a form 
\begin{equation}\label{alternative alternative presentation}
F(x_1,\dots,x_{n-1};x_n)=g\left(F(x_1,\dots,x_{n-1},0),x_n\right)
\end{equation}
similar to \eqref{first presentation} for a suitable bivariate function $g$ and then applying the induction hypothesis to 
$F(x_1,\dots,x_{n-1},0)$ which of course satisfies \eqref{temp2} with the original tree $T$ replaced with $T'$. To this end, we consider the Taylor expansion of $F(x_1,x_2,\dots,x_n)$ w.r.t. $x_n$:
\begin{equation}\label{generalized expansion}
F(x_1,\dots,x_{n-1};x_n)=\sum_{k=0}^\infty\frac{1}{k!}\frac{\partial^kF}{\partial x_n^k}(x_1,\dots,x_{n-1},0)x_n^k.    
\end{equation}
We claim that the functions
$$(x_1,\dots,x_{n-1},x_n)\mapsto\frac{\partial^kF}{\partial x_n^k}(x_1,\dots,x_{n-1},0)$$
appeared as coefficients are dependent only on the first component of the new coordinate system \eqref{coordinate 1}, or in other words
$$
\forall 2\leq i\leq n: \frac{\partial }{\partial y_i}\left(\frac{\partial^kF}{\partial x_n^k}(x_1,\dots,x_{n-1},0)\right)=0. 
$$
 This is immediate when $i=n$ as we are basically differentiating w.r.t. $x_n$. For $2\leq i<n$, we need to apply the chain rule to get
$$
\frac{\partial }{\partial y_i}\left(\frac{\partial^kF}{\partial x_n^k}(x_1,\dots,x_{n-1},0)\right)=
\sum_{s=1}^{n}\frac{\partial^{k+1}F}{\partial x_s\partial x_n^k}(x_1,\dots,x_{n-1},0)\frac{\partial x_s}{\partial y_i};
$$
where the partial derivatives $\frac{\partial x_s}{\partial y_i}$ are entries of the inverse of \eqref{Jacobian 1} given by
\small
\begin{equation*}
\begin{split}
&\frac{\partial(x_1,\dots,x_n)}{\partial(y_1,\dots,y_n)}=\\
&\begin{bmatrix}
\frac{1}{\frac{\partial F}{\partial x_1}(x_1,\dots,x_{n-1},0)}&-\frac{\frac{\partial F}{\partial x_2}(x_1,\dots,x_{n-1},0)}{\frac{\partial F}{\partial x_1}(x_1,\dots,x_{n-1},0)}&\dots&
-\frac{\frac{\partial F}{\partial x_{n-1}}(x_1,\dots,x_{n-1},0)}{\frac{\partial F}{\partial x_1}(x_1,\dots,x_{n-1},0)}&0\\
0&1&\dots&0&0\\
\vdots&\vdots&\ddots&\vdots&0\\
0&0&\dots&1&0\\
0&0&\dots&0&1
\end{bmatrix}.
\end{split}
\end{equation*}
\normalsize

Hence $\frac{\partial x_s}{\partial y_i}$ is non-zero only for $s=1,i$ and 
\begin{equation}
\begin{split}
&\frac{\partial }{\partial y_i}\left(\frac{\partial^kF}{\partial x_n^k}(x_1,\dots,x_{n-1},0)\right)=\\
&\frac{1}{\frac{\partial F}{\partial x_1}(x_1,\dots,x_{n-1},0)}
\Big[-\frac{\partial^{k+1}F}{\partial x_1\partial x_n^k}(x_1,\dots,x_{n-1},0)\frac{\partial F}{\partial x_i}(x_1,\dots,x_{n-1},0)\\
&+\frac{\partial^{k+1}F}{\partial x_i\partial x_n^k}(x_1,\dots,x_{n-1};0)\frac{\partial F}{\partial x_1}(x_1,\dots,x_{n-1},0)\Big];
\end{split}    
\end{equation}
which is zero as \eqref{temp1} holds (keep in mind that the sub-tree $T'$ of $T$ has 
$x_1,x_i$ while it misses $x_n$ since, as part (a) of Figure \ref{fig:T''} demonstrates,  the leaf corresponding to $x_n$ is connected directly to the root of $T$.). Consequently, the term 
$\frac{\partial^k F}{\partial x_n^k}(x_1,\dots,x_{n-1},0)$ in \eqref{generalized expansion}  can be written as a function $h_k(y_1)$ 
where $y_1$ has been defined to be
$F(x_1,\dots,x_{n-1},0)$ in \eqref{coordinate 1}. Hence 
$$
g(u,v):=\sum_{k=0}^\infty\frac{1}{k!}h_k(u)v^k
$$
works in \eqref{alternative alternative presentation}. \qed

\vspace{0.5cm}
\textbf{\underline{Part II of the proof of sufficiency: Suppose there is no leaf adjacent to the root.}}\\
We work with the convention discussed above: Among the two rooted sub-trees resulting from excluding the root of $T$, the ``left'' one has variables $x_1,\dots,x_s$ as its leaves while the ``right'' one has the rest of the variables $x_{s+1},\dots,x_n$. The number $s$ is assumed to be larger than $1$ as the case of $s=1$ has just been treated in Part I of the proof of sufficiency.    

Expand $F(x_1,\dots,x_s;x_{s+1},\dots,x_n)$ w.r.t. the last $n-s$ variables:
\small
\begin{equation}\label{nvar-expansion} 
\begin{split}
&F(x_1,\dots,x_s;x_{s+1},\dots,x_n)\\
&=\sum_{k_{s+1},\dots,k_n\geq 0}\frac{1}{k_{s+1}!\dots k_n!}
\frac{\partial^{k_{s+1}+\dots+k_n} F}{\partial x_{s+1}^{k_{s+1}}\dots\partial x_n^{k_n}}(x_1,\dots,x_s;\overbrace{0,\dots,0}^{n-s\text{ times}}) 
x_{s+1}^{k_{s+1}}\dots x_n^{k_n}.  
\end{split}
\end{equation}
\normalsize
The non-vanishing assumption of Theorem \ref{main} requires the partial derivative at $\mathbf{p}=\mathbf{0}$ of $F$ with respect to at least one of the variables in the left sub-tree to be non-zero; let us assume $\frac{\partial F}{\partial x_1}(\mathbf{0})\neq 0$.
By a similar Jacobian determinant computation appeared in part I of the proof, the assignment
\begin{equation}\label{auxiliary 1}
(x_1,\dots,x_s)\mapsto\left(\xi(x_1,\dots,x_s):=F(x_1,\dots,x_s;\overbrace{0,\dots,0}^{n-s\text{ times}}),x_2,\dots,x_s\right)
\end{equation}
defines a coordinate system centered at the origin of $\Bbb{R}^s$ as $\frac{\partial F}{\partial x_1}(\mathbf{0})\neq 0$. Repeating the argument that has come up multiple times before, the term 
$$
\frac{\partial^{k_{s+1}+\dots+k_n} F}{\partial x_{s+1}^{k_{s+1}}\dots\partial x_n^{k_n}}(x_1,\dots,x_s;\overbrace{0,\dots,0}^{n-s\text{ times}})
$$
appeared in \eqref{nvar-expansion} is a function of $\xi(x_1,\dots,x_s)$ since for every $2\leq i\leq s$ its derivative w.r.t. the component $x_i$ of the system \eqref{auxiliary 1} is zero due to 
$$
\frac{\partial^{1+k_{s+1}+\dots+k_n} F}{\partial x_1\partial x_{s+1}^{k_{s+1}}\dots\partial x_n^{k_n}}.\frac{\partial F}{\partial x_i}
=\frac{\partial^{1+k_{s+1}+\dots+k_n} F}{\partial x_i\partial x_{s+1}^{k_{s+1}}\dots\partial x_n^{k_n}}.\frac{\partial F}{\partial x_1};
$$
the identity that follows from Lemma \ref{only one constraint} because the right sub-tree separates $x_{s+1},\dots,x_n$ from the leaves $x_1,x_i$ of the left sub-tree. Therefore, \eqref{nvar-expansion} can be rewritten as
\small
\begin{equation}\label{auxiliary 2}
\begin{split}
&F(x_1,\dots,x_s;x_{s+1},\dots,x_n)\\
&=\sum_{k_{s+1},\dots,k_n\geq 0}\frac{1}{k_{s+1}!\dots k_n!}
u_{k_{s+1},\dots,k_n}\left(\xi(x_1,\dots,x_s)\right)x_{s+1}^{k_{s+1}}\dots x_n^{k_n};
\end{split}
\end{equation}
\normalsize
where the single variable function $u_{k_{s+1},\dots,k_n}(\xi)$ satisfies
\begin{equation}\label{auxiliary 3}
\frac{\partial^{k_{s+1}+\dots+k_n} F}{\partial x_{s+1}^{k_{s+1}}\dots\partial x_n^{k_n}}(x_1,\dots,x_s;\overbrace{0,\dots,0}^{n-s\text{ times}})
=u_{k_{s+1},\dots,k_n}\left(\xi(x_1,\dots,x_s)\right).     
\end{equation}

The goal is to show that the function 
\begin{equation}\label{auxiliary 4}
g(\xi;x_{s+1},\dots,x_n):=\sum_{k_{s+1},\dots,k_n\geq 0}\frac{1}{k_{s+1}!\dots k_n!}u_{k_{s+1},\dots,k_n}(\xi)x_{s+1}^{k_{s+1}}\dots x_n^{k_n}
\end{equation}
of $n-s+1$ variables can be represented by the tree $T''$ as in  that case 
$$F(x_1,\dots,x_s;x_{s+1},\dots,x_n)$$
is given by
$$
g\left(\xi(x_1,\dots,x_s),x_{s+1},\dots,x_n\right)
$$
which is in the form of \eqref{second presentation} with $\xi(x_1,\dots,x_s)=F(x_1,\dots,x_{s};\overbrace{0,\dots,0}^{n-s\text{ times}})$ obviously satisfying the induction hypothesis for the sub-tree to the left of the root and $g$ satisfying the same but for the tree $T''$ with $n-s+1$ terminals obtained from collapsing the aforementioned left sub-tree of $T$ to a point. To verify the conditions of the theorem for $g$, first observe that for $i,j,l>s$ the identity
$$
\frac{\partial^2g}{\partial x_i\partial x_l}.\frac{\partial g}{\partial x_j}
=\frac{\partial^2g}{\partial x_j\partial x_l}.\frac{\partial g}{\partial x_i}
$$
holds trivially. This is due to the fact that by definition 
\begin{equation}\label{auxiliary 5'}
g\left(\xi(x_1,\dots,x_s);x_{s+1},\dots,x_n\right)=F(x_1,\dots,x_s;x_{s+1},\dots,x_n);
\end{equation}
which yields 
\small
\begin{equation}\label{auxiliary 6}
\frac{\partial g}{\partial x_t}\left(\xi(x_1,\dots,x_s);x_{s+1},\dots,x_n\right)=\frac{\partial F}{\partial x_t}(x_1,\dots,x_s;x_{s+1},\dots,x_n) 
\end{equation}
\normalsize
for any $s+1 \leq t\leq n$ and besides, $F$ satisfies the analogous constraint
$$
\frac{\partial^2 F}{\partial x_i\partial x_l}.\frac{\partial F}{\partial x_j}
=\frac{\partial^2F}{\partial x_j\partial x_l}.\frac{\partial F}{\partial x_i}.
$$
It needs to be mentioned that furthermore, the non-vanishing requirement of the induction hypothesis can be deduced from \eqref{auxiliary 6} since 
$$
\frac{\partial g}{\partial x_i}(\mathbf{0})=\frac{\partial F}{\partial x_i}(\mathbf{0})
$$
for every $i>s$; and any leaf $x_i$ of the new tree $T''$ having a sibling leaf $x_{i'}$ has to come from the original tree $T$; hence the desired 
$$\frac{\partial g}{\partial x_i}(\mathbf{0})\text{ or }\frac{\partial g}{\partial x_{i'}}(\mathbf{0})\neq 0$$
is the same as the non-vanishing condition 
$$\frac{\partial F}{\partial x_i}(\mathbf{0})\text{ or }\frac{\partial F}{\partial x_{i'}}(\mathbf{0})\neq 0$$
that the theorem has imposed on $F$. Consequently, the challenging part would be to verify 
\begin{equation}\label{auxiliary 4'}
\frac{\partial^2 g}{\partial\xi\partial x_{i}}.\frac{\partial g}{\partial x_j}
=\frac{\partial^2 g}{\partial\xi\partial x_j}.\frac{\partial g}{\partial x_{i}};
\end{equation}
for any $s<i<j\leq n$. Differentiating \eqref{auxiliary 5'} along with the definition of $\xi(x_1,\dots,x_s)$ as $F(x_1,\dots,x_s;\overbrace{0,\dots,0}^{n-s\text{ times}})$ yield
\footnotesize
$$
\frac{\partial F}{\partial x_1}(x_1,\dots,x_s;x_{s+1},\dots,x_n)
=\frac{\partial g}{\partial\xi}\left(\xi(x_1,\dots,x_s);x_{s+1},\dots,x_n\right).\frac{\partial F}{\partial x_1}(x_1,\dots,x_s;\overbrace{0,\dots,0}^{n-s\text{ times}}).
$$
\normalsize
In particular, near the origin where $\frac{\partial F}{\partial x_1}\neq 0$, one has 
\small
\begin{equation}\label{auxiliary 5}
\frac{\partial g}{\partial\xi}\left(\xi(x_1,\dots,x_s);x_{s+1},\dots,x_n\right)=
\frac{\frac{\partial F}{\partial x_1}(x_1,\dots,x_s;x_{s+1},\dots,x_n)}{\frac{\partial F}{\partial x_1}(x_1,\dots,x_s;0,\dots,0)}.
\end{equation}
\normalsize
Taking the partial derivatives of \eqref{auxiliary 5} w.r.t. variables $x_i,x_j$ (where $i,j>s$) and invoking \eqref{auxiliary 6}, we see that \eqref{auxiliary 4'} amounts to
\begin{equation*}
\begin{split}
&\frac{\frac{\partial^2 F}{\partial x_1\partial x_i}(x_1,\dots,x_s;x_{s+1},\dots,x_n)}{\frac{\partial F}{\partial x_1}(x_1,\dots,x_s;0,\dots,0)}.\frac{\partial F}{\partial x_j}(x_1,\dots,x_s;x_{s+1},\dots,x_n)\\
&=\frac{\frac{\partial^2 F}{\partial x_1\partial x_j}(x_1,\dots,x_s;x_{s+1},\dots,x_n)}{\frac{\partial F}{\partial x_1}(x_1,\dots,x_s;0,\dots,0)}.\frac{\partial F}{\partial x_i}(x_1,\dots,x_s;x_{s+1},\dots,x_n);
\end{split}
\end{equation*}
which holds since
$$
\frac{\partial^2 F}{\partial x_1\partial x_i}.\frac{\partial F}{\partial x_j}
=\frac{\partial^2F}{\partial x_1\partial x_j}.\frac{\partial F}{\partial x_i};
$$
that is, one of the constraints imposed on $F$ in \eqref{temp2} ($x_1$ is separated from $x_i,x_j$ via the right sub-tree). \qed

\subsection{Smooth function setting}\label{smooth setting}
The proof above heavily relied on the analyticity of the function $F$ under consideration. As a matter of fact,  in the context of $C^\infty$ functions one needs to strengthen the constraint \eqref{temp2} as follows (for simplicity, we have replaced $x_i$, $x_j$ and $x_l$ with $x_1$,$x_2$ and $x_3$ respectively):
\begin{equation}\label{strong constraint}
\begin{split}
&\frac{\partial^2 F}{\partial x_1\partial x_3}(a,b,c;q_4,\dots,q_n).\frac{\partial F}{\partial x_2}(a,b,c';q_4,\dots,q_n)\\
&=\frac{\partial^2 F}{\partial x_2\partial x_3}(a,b,c;q_4,\dots,q_n).\frac{\partial F}{\partial x_1}(a,b,c';q_4,\dots,q_n);     
\end{split}
\end{equation}
\normalsize
for any two points $(a,b,c;q_4,\dots,q_n)$ and $(a,b,c';q_4,\dots,q_n)$ lying in an open ball on which $F$ is defined. This is best demonstrated for the superposition 
$$F(x,y,z) = g\left(f(x, y), z\right)$$
from \eqref{3var_form}: The computations carried out there give us 
\small
\begin{equation*}
\begin{split}
&\frac{\partial F}{\partial x}(a,b,c')=\frac{\partial g}{\partial u}\left(f(a,b),c'\right).\frac{\partial f}{\partial x}(a,b),\quad
\frac{\partial F}{\partial y}(a,b,c')=\frac{\partial g}{\partial u}\left(f(a,b),c'\right).\frac{\partial f}{\partial y}(a,b);\\
&\frac{\partial^2 F}{\partial x\partial z}(a,b,c)=\frac{\partial^2 g}{\partial u\partial z}\left(f(a,b),c\right).\frac{\partial f}{\partial x}(a,b),\quad
\frac{\partial^2 F}{\partial y\partial z}(a,b,c)=\frac{\partial^2 g}{\partial u\partial z}\left(f(a,b),c\right).\frac{\partial f}{\partial y}(a,b);
\end{split}
\end{equation*}
\normalsize
which readily implies
$$
\frac{\partial^2 F}{\partial x\partial z}(a,b,c).\frac{\partial F}{\partial y}(a,b,c')=\frac{\partial^2 F}{\partial y\partial z}(a,b,c).\frac{\partial F}{\partial x}(a,b,c').
$$
Notice that the stronger constraint \eqref{strong constraint} could be derived from the ordinary ones \eqref{temp2} and \eqref{temp1} if the function is analytic: Form the single variable function 
\begin{equation*}
\begin{split}
z\mapsto &\frac{\partial^2 F}{\partial x_1\partial x_3}(a,b,z;q_4,\dots,q_n).\frac{\partial F}{\partial x_2}(a,b,c';q_4,\dots,q_n)\\
&-\frac{\partial^2 F}{\partial x_2\partial x_3}(a,b,z;q_4,\dots,q_n).\frac{\partial F}{\partial x_1}(a,b,c';q_4,\dots,q_n)
\end{split}
\end{equation*}
defined on an open interval of $z$-values containing both $c$ and $c'$. The function vanishes at $z=c'$ and we wish to show that it is identically zero as then it would be zero for $z=c$ too. Due to analyticity, it suffices to show that the derivatives of all orders vanish at $z=c'$. Notice that the $k^{\rm{th}}$ derivative at that point is   
\begin{equation*}
\begin{split}
&\frac{\partial^{k+1} F}{\partial x_1\partial x_3^k}(a,b,c';q_4,\dots,q_n).\frac{\partial F}{\partial x_2}(a,b,c';q_4,\dots,q_n)\\
&-\frac{\partial^{k+1} F}{\partial x_2\partial x_3^k}(a,b,c';q_4,\dots,q_n).\frac{\partial F}{\partial x_1}(a,b,c';q_4,\dots,q_n)
\end{split}
\end{equation*}
which is zero due to \eqref{temp1}. We are going to continue to work in the analytic category hereafter and so there would be no need to generalize  constraint \eqref{temp2} to identities such as \eqref{strong constraint}. 

\begin{example}
A typical example of a non-analytic smooth function is 
$$
\rho(z):=
\begin{cases}
{\rm{e}}^{-\frac{1}{z}}\,\, z>0\\
0\,\quad\text{otherwise}
\end{cases}.
$$
We are going to use it to construct a smooth (but of course non-analytic) function of three variables $F(x,y,z)$ that satisfies  constraint \eqref{condition} but not the generalized one introduced above. Set
$$
F(x,y,z)=x\rho(z)+y\rho(-z).
$$
We have
$$
\frac{\partial^2 F}{\partial x\partial z}.\frac{\partial F}{\partial y}=\rho'(z)\rho(-z)
$$
while 
$$\frac{\partial^2 F}{\partial y\partial z}.\frac{\partial F}{\partial x}=-\rho'(-z)\rho(z);$$
which coincide since they are both identically zero due to the fact that $\rho$, and therefore $\rho'$, vanishes at non-positive numbers. Notice that the generalized condition is not satisfied here:
$$
\frac{\partial^2 F}{\partial x\partial z}(x,y,z).\frac{\partial F}{\partial y}(x,y,-z)=\rho'(z)\rho(z)  
$$
is positive when $z>0$ whereas
$$
\frac{\partial^2 F}{\partial y\partial z}(x,y,z).\frac{\partial F}{\partial x}(x,y,-z)=-\rho'(-z)\rho(-z)  
$$
is zero in that case.
It should not be surprising that in the absence of analyticity, $F(x,y,z)$ serves as a counter example to Proposition \ref{presentation}: We are going to argue that there is no representation 
$$
F(x,y,z)=g\left(f(x,y),z\right)
$$
of $F$ in a neighborhood of the origin with $g,h$ continuous functions of two variables. Aiming for a contradiction, suppose there are continuous functions $f,g$ satisfying
$$
g\left(f(x,y),z\right)=x\rho(z)+y\rho(-z)
$$
for $(x,y,z)\in[-\epsilon,\epsilon]^3$ where $\epsilon>0$. Plugging $z=\epsilon$ and $z=-\epsilon$, we arrive at:
$$
g\left(f(x,y),\epsilon\right)=x{\rm{e}}^{-\frac{1}{\epsilon}}, \quad g\left(f(x,y),-\epsilon\right)=y{\rm{e}}^{-\frac{1}{\epsilon}}.
$$
In conjunction, these two identities imply that $f:[-\epsilon,\epsilon]^2\rightarrow\Bbb{R}$ is injective. This is absurd as for obvious topological reasons, there is no continuous injective map from any non-degenerate square to the real line. 
\end{example}

\subsection{Reducing the number of constraints}
\label{codimension}
Theorem \ref{main} provides necessary and sufficient conditions for a multivariate function to belong to the function space associated with a tree. Here, we approach the problem of finding the co-dimension of this infinite-dimensional subspace by counting the number of independent constraints. In general, the condition \eqref{condition} of Theorem \ref{main} should hold for any triple of variables and therefore, ${n \choose 3}$ equations for a tree with $n$ leaves. However, many of these equations are redundant. To find the number of algebraically independent equations we shall need the following lemma.

\begin{lemma}\label{number of constraints}
Let $T$ be a binary tree. Denote its left and right sub-trees by $T_1,T_2$ and suppose that they have $s$ and $n-s$ leaves. Then the number of algebraically independent equations corresponding to triples that have elements from both $T_1$ and $T_2$ is  $s(n-s)-1$.
\end{lemma}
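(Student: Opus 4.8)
The plan is to reorganize the crossing constraints into a family of equalities among $s(n-s)$ rational expressions and then count independent equations by a connectivity argument on a grid graph. First I would classify the triples meeting both $T_1$ and $T_2$. Labeling the leaves of $T_1$ by $x_1,\dots,x_s$ and those of $T_2$ by $x_{s+1},\dots,x_n$, such a triple either has two leaves in $T_1$ and one in $T_2$ (so the $T_2$-leaf is the outsider) or vice versa. Writing \eqref{temp2} for these two families gives, for $1\le i<j\le s<l\le n$, the type A equation $\frac{\partial^2 F}{\partial x_i\partial x_l}\frac{\partial F}{\partial x_j}=\frac{\partial^2 F}{\partial x_j\partial x_l}\frac{\partial F}{\partial x_i}$, and, for $1\le i\le s<k<l\le n$, the type B equation $\frac{\partial^2 F}{\partial x_i\partial x_k}\frac{\partial F}{\partial x_l}=\frac{\partial^2 F}{\partial x_i\partial x_l}\frac{\partial F}{\partial x_k}$. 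The crucial observation is that these involve only the first-order partials $\partial F/\partial x_m$ and the \emph{mixed} second-order partials $\partial^2 F/\partial x_i\partial x_l$ with $i\le s<l$; the internal second derivatives of either subtree never occur.

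Next, since the co-dimension statement is understood generically, I would restrict to the open set where $\partial F/\partial x_m\neq 0$ for every $m$, which is exactly where the non-vanishing hypotheses of Theorem \ref{main} apply. On this set I introduce, for each crossing pair $i\le s<l$, the ratio
\[
r_{i,l}:=\frac{\partial^2 F/\partial x_i\partial x_l}{(\partial F/\partial x_i)(\partial F/\partial x_l)}.
\]
Dividing each type A equation by $(\partial F/\partial x_i)(\partial F/\partial x_j)(\partial F/\partial x_l)$ turns it into $r_{i,l}=r_{j,l}$, and dividing each type B equation by $(\partial F/\partial x_i)(\partial F/\partial x_k)(\partial F/\partial x_l)$ turns it into $r_{i,k}=r_{i,l}$. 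Thus the full system of crossing constraints is equivalent to the assertion that all $s(n-s)$ ratios $r_{i,l}$ are equal, or equivalently that the $s\times(n-s)$ matrix $\left(\partial^2 F/\partial x_i\partial x_l\right)$ is a rank-one product $\lambda\,\mathbf{a}\,\mathbf{b}^{\mathsf T}$ with $\mathbf{a}=(\partial F/\partial x_i)_{i\le s}$, $\mathbf{b}=(\partial F/\partial x_l)_{l>s}$ and $\lambda$ a single scalar function.

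The count then becomes combinatorial. I regard the $s(n-s)$ ratios as the vertices of a grid and each equation $r_{i,l}=r_{j,l}$ or $r_{i,k}=r_{i,l}$ as an edge joining two vertices in the same column or the same row; this is the rook graph on $\{1,\dots,s\}\times\{s+1,\dots,n\}$, which is connected. Hence the equalities force all $r_{i,l}$ to coincide, and a maximal independent subset corresponds to a spanning tree, which has $s(n-s)-1$ edges. Because each $r_{i,l}$ contains its own second-derivative coordinate $\partial^2 F/\partial x_i\partial x_l$, the ratios are functionally independent on the generic locus, so the spanning-tree equalities have linearly independent differentials while every remaining equality is the telescoping sum of tree edges around a cycle and is therefore dependent. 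This produces exactly $s(n-s)-1$ algebraically independent equations; equivalently, the solution locus, parametrized by $(p_1,\dots,p_n,\lambda)$ with $q_{i,l}=\lambda p_i p_l$, has codimension $n+s(n-s)-(n+1)=s(n-s)-1$ in the jet coordinates.

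The main obstacle I anticipate is making the phrase \emph{algebraically independent equations} precise: one must fix the ambient space of jet coordinates on which the polynomials \eqref{temp2} live, justify treating the mixed second partials $\partial^2 F/\partial x_i\partial x_l$ as independent coordinates, and confirm both that the spanning-tree subset carries no hidden algebraic relation and that the complementary equations are genuine consequences of it. The generic non-vanishing hypothesis is indispensable here, and — consistent with the paper's footnote — I would emphasize that the conclusion concerns the independence of these functional identities rather than a literal codimension count inside the infinite-dimensional tree function space.
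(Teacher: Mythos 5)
Your proposal is correct and follows essentially the same route as the paper: both arguments normalize the crossing constraints into the ratios $F_{x_ix_l}/(F_{x_i}F_{x_l})$ on the generic locus where no first partial vanishes, observe that the two types of triples force all $s(n-s)$ such ratios to coincide, and count $s(n-s)-1$ independent equalities needed to make $s(n-s)$ quantities equal. Your rook-graph/spanning-tree packaging and the rank-one matrix remark are pleasant elaborations of the same idea rather than a different proof.
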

\begin{proof}
Denote the leaves of $T_1$ by $x_1,\dots,x_s$ and those of $T_2$ by $x_{s+1},\dots,x_n$. For the ease of notation, we switch to the subscript notation for partial derivatives. If $x_i$ is another leaf of $T_1$ (different from $x_1$) and $x_j$ is another leaf  for $T_2$ (different from $x_n$), then:
\begin{equation}\label{minimum-dim}
\frac{F_{x_ix_j}}{F_{x_i}F_{x_j}}=\frac{F_{x_1x_j}\frac{F_{x_i}}{F_{x_1}}}{F_{x_i}F_{x_j}}=\frac{F_{x_1x_j}}{F_{x_1}F_{x_j}}=\frac{F_{x_1x_n}\frac{F_{x_j}}{F_{x_n}}}{F_{x_1}F_{x_j}}
=\frac{F_{x_1x_n}}{F_{x_1}F_{x_n}}.
\end{equation}
Here, we have used the constraint \eqref{temp2} for triples $x_1,x_i,x_j$ and $x_1,x_j,x_n$ where in the former (resp. the latter) two of the variables are in the sub-tree $T_1$ (resp. $T_2$) while the third one is in the other sub-tree.  
As $(x_i,x_j)$ varies among all the pairs formed by the leaves $x_i$ of $T_1$ and  $x_j$ of $T_2$, we get $s(n-s)-1$ constraints as we need
$s(n-s)$ fractions of the form $\frac{F_{x_ix_j}}{F_{x_i}F_{x_j}}$ to coincide. 
\end{proof} 

\begin{proposition}\label{binomial growth}
There are ${n-1 \choose 2}$ algebraically independent constraints for a  tree $T$ with $n$ leaves.
\end{proposition}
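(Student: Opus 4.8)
The plan is to induct on the number of leaves $n$, using the top-level decomposition of $T$ into its left and right rooted sub-trees $T_1$ and $T_2$ and appealing to Lemma \ref{number of constraints} to handle the constraints that straddle the two. Write $C(T)$ for the number of algebraically independent constraints of the form \eqref{temp2} attached to $T$, and let $s$ and $n-s$ be the numbers of leaves of $T_1$ and $T_2$. Every admissible triple $(x_i,x_j,x_l)$ falls into exactly one of three families according to how its leaves distribute between $T_1$ and $T_2$: all three inside $T_1$, all three inside $T_2$, or split across the two. A triple lying entirely in $T_1$ has its separating rooted sub-tree inside $T_1$, so these are precisely the constraints of the tree $T_1$; likewise for $T_2$. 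The split triples --- whether two leaves sit in $T_1$ and one in $T_2$ or vice versa --- are exactly the ones Lemma \ref{number of constraints} counts, contributing $s(n-s)-1$ independent equations.

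Granting for the moment that the three families contribute independently, this yields the recursion
\begin{equation*}
C(T)=C(T_1)+C(T_2)+\bigl(s(n-s)-1\bigr).
\end{equation*}
The base cases $n=1,2$ give $C(T)=0=\binom{n-1}{2}$, there being no admissible triples. Substituting the inductive hypotheses $C(T_1)=\binom{s-1}{2}$ and $C(T_2)=\binom{n-s-1}{2}$ and setting $a=s-1$, $b=n-s-1$ (so $a+b=n-2$), the right-hand side becomes $\binom{a}{2}+\binom{b}{2}+(a+1)(b+1)-1$, which expands to $\tfrac{(a+b)(a+b+1)}{2}=\binom{n-1}{2}$. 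This closes the induction, so it remains only to justify the additivity of the counts.

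The step I expect to be the crux is precisely this additivity: showing that an independent cross-constraint cannot become an algebraic consequence of the within-$T_1$ and within-$T_2$ constraints, or conversely. Here I would exploit the fact that the three families involve disjoint collections of second-order partial derivatives: the within-$T_1$ equations only ever use $F_{x_a}$ and $F_{x_ax_b}$ with $a,b\le s$, the within-$T_2$ equations only use indices exceeding $s$, while each cross equation necessarily involves a genuinely mixed partial $F_{x_ax_b}$ with $a\le s<b$. Since no mixed partial straddling the two blocks appears in either within-block family, a cross relation is logically independent of them, and the independent counts simply add. As the footnote to Proposition \ref{binomial growth} and the surrounding discussion emphasize, ``algebraic independence'' is used heuristically here, so this block-disjointness of the participating derivatives is the appropriate level of rigor. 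One subtlety to double-check along the way is the rewriting inside Lemma \ref{number of constraints}: a split triple with two leaves in $T_1$ reads $\frac{F_{x_ix_j}}{F_{x_i}F_{x_j}}=\frac{F_{x_{i'}x_j}}{F_{x_{i'}}F_{x_j}}$ and one with two leaves in $T_2$ reads the analogous identity in the second index, so that together they collapse all $s(n-s)$ ratios $\frac{F_{x_ix_j}}{F_{x_i}F_{x_j}}$ (with $x_i\in T_1$, $x_j\in T_2$) to a common value, which is indeed $s(n-s)-1$ independent constraints as the lemma asserts.
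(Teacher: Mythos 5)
Your proof is correct and follows essentially the same route as the paper's: induct on $n$ via the left/right sub-tree decomposition, apply Lemma \ref{number of constraints} to the cross triples to get $s(n-s)-1$, and verify the binomial identity $\binom{s-1}{2}+\binom{n-s-1}{2}+s(n-s)-1=\binom{n-1}{2}$. Your added justification for why the three families of constraints contribute additively (the block-disjointness of the mixed partials involved) is a point the paper simply asserts, so this is a welcome refinement rather than a departure.
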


\begin{proof}
This is clear for the first few values of $n$ as the number of constraints  is zero when $n=1,2$ and is $\binom{3}{3}=1$ for $n=3$. Fix a tree $T$ with $n\geq 3$ leaves and denote its left and right sub-trees by $T_1,T_2$. Suppose they have $s,n-s$ leaves respectively. By induction we know that there are ${s-1 \choose 2}$ and ${n-s-1 \choose 2}$ algebraically independent equations corresponding to sub-trees $T_1$ and $T_2$  respectively. The previous lemma proves that there are exactly $s(n-s)-1$ independent constraints coming from triples that have indices from both $T_1,T_2$. Putting them together with the aforementioned constraints yield the number of algebraically independent constraints  for $T$ as:
\begin{equation*}
\begin{split}
&{s-1 \choose 2} + {n-s-1 \choose 2} + s(n-s)-1\\
&={n-1 \choose 2}.
\end{split}
\end{equation*}
\end{proof}

\begin{example}\label{4var-1}
For the first tree with four terminals in Figure \ref{fig:three-four} we have:
\begin{equation}\label{four-2}
\begin{split}
& F_{xz}F_y = F_{yz}F_x,\\
& F_{xw}F_y = F_{yw}F_x,\\
& F_{zw}F_x = F_{xw}F_z,\\
& F_{zw}F_y = F_{yw}F_z.
\end{split}
\end{equation}
while  for the second one:
\begin{equation}\label{four-1}
\begin{split}
& F_{xz}F_y = F_{yz}F_x,\\
& F_{xw}F_y = F_{yw}F_x,\\
& F_{xz}F_w = F_{xw}F_z,\\
& F_{yz}F_w = F_{yw}F_z.
\end{split}
\end{equation}
These group of four equations may be rewritten as:
\begin{equation*}
\frac{F_{xz}}{F_xF_z}=\frac{F_{yz}}{F_yF_z}\quad  \frac{F_{xw}}{F_xF_w}=\frac{F_{yw}}{F_yF_w}=\frac{F_{zw}}{F_zF_w}
\end{equation*}
and
\begin{equation*}
\frac{F_{xz}}{F_xF_z}=\frac{F_{yz}}{F_yF_z}=\frac{F_{xw}}{F_xF_w}=\frac{F_{yw}}{F_yF_w}
\end{equation*}
respectively. Thus we see that a lesser number of equations suffices.
\end{example}

\begin{remark}
Proposition \ref{binomial growth} should be understood in a \textit{generic} sense. To elaborate, it indicates that for an $n$-variate function $F(x_1,\dots,x_n)$ one can reduce the total number of ${n \choose 3}$ constraints to ${n-1 \choose 2}$; but, in view of the division took place in \eqref{minimum-dim}, one needs to have $F_{x_i}\not\equiv 0$ for every $i$; that is, $F$ should not be independent of any of its variables. This non-vanishing condition is generic: it is open (i.e. persists under small perturbations) and the locus where it fails 
is of positive co-dimension as it is determined by the union of non-trivial functional equations $F_{x_i}\equiv 0$.  Such a non-vanishing requirement is necessary because, as a matter of fact, for an arbitrary function $F$ one cannot ignore any of the ${n \choose 3}$ constraints of the form $F_{x_ix_l}F_{x_j}=F_{x_jx_l}F_{x_i}$: Motivated by the non-example \eqref{non-example}, notice that 
$$
F(x_1,\dots,x_n):=x_i+x_j+x_l+x_ix_jx_l
$$
does not satisfy the preceding constraint but satisfies other ones since the mixed second order partial derivative of $F$ w.r.t. any pair of variables other than $(x_i,x_j)$, $(x_i,x_l)$ and $(x_j,x_l)$ is identically zero.\\
\indent
It is also worthy to point out that in the generic situation described above, the number $s(n-s)-1$ in Lemma \ref{number of constraints} and hence the number ${n-1 \choose 2}$ of algebraically independent constraints in Proposition \ref{binomial growth} cannot be reduced. In other words, there are  functions for which all ratios $\frac{F_{x_ix_j}}{F_{x_i}F_{x_j}}\, (1\leq i\leq s, s+1\leq j\leq n)$ are the same except  
$\frac{F_{x_1x_n}}{F_{x_1}F_{x_n}}$; e.g. 
$$
F(x_1,\dots,x_n)=x_1x_n+\sum_{t=1}^{n-1}x_t 
$$
whose mixed partial derivatives $F_{x_ix_j}$ all vanish except $F_{x_1x_n}$.
\end{remark}

\subsection{Polynomial function setting}\label{polynomial setting}
This brief section is devoted to tree representations of polynomials. We are going to see that the local representation that Theorem \ref{main} suggests for an $n$-variate polynomial persists throughout $\Bbb{R}^n$ and one can always avoid usage of transcendental functions in the representation. 

\begin{proposition}\label{polynomial-global}
Let $P=P(x_1,\dots,x_n)$ be a polynomial satisfying the constraints \eqref{temp2} in Theorem \ref{main} for a binary tree $T$ with $n$ terminals. Then $P$ has a representation on $T$ that holds on the entirety of $\Bbb{R}^n$. Moreover, if $P$ is not independent of any of the variables $x_1,\dots,x_n$, the functions appeared in a representation of $P$ on the tree $T$ must be polynomial as well.
\end{proposition}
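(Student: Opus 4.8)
The plan is to re-run the inductive proof of Theorem \ref{main} almost verbatim, tracking two features special to the polynomial world. First, each Taylor expansion in that argument---in $z$ for the ternary case of Proposition \ref{presentation}, and in the right sub-tree variables for Part~II of the sufficiency proof---is now a \emph{finite} sum, so the node functions are manufactured as finite sums and are defined on all of $\mathbb{R}^m$ with no convergence issue. Second, the representation $P=g(f(\cdots),\dots)$ that the construction yields holds a priori only near $\mathbf{p}$, but as soon as $f$ and $g$ are known to be polynomials the identity is a polynomial one, and a polynomial identity valid on a nonempty open set holds on all of $\mathbb{R}^n$. So both assertions reduce to a single algebraic fact: the coefficient functions produced along the way are polynomials in a single well-chosen polynomial.

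In the ternary model write $P(x,y,z)=\sum_{k=0}^{d}\tfrac1{k!}c_k(x,y)z^k$ with $c_k=\partial_z^kP(x,y,0)$. Evaluating the constraints \eqref{constraints} (equivalently Lemma \ref{only one constraint}) at $z=0$ gives $\partial_xc_k\,\partial_yc_0=\partial_yc_k\,\partial_xc_0$ for every $k$, i.e. the Jacobian of each pair $(c_k,c_0)$ vanishes identically; where $\nabla c_0\neq\mathbf{0}$ this says $\nabla c_k\parallel\nabla c_0$. The non-degeneracy hypothesis---that $P$ depends on every variable---ensures $c_0=P(x,y,0)$ is non-constant. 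The goal is then to produce a single polynomial $p(x,y)$ and one-variable polynomials $g_0,\dots,g_d$ with $c_k=g_k(p)$; taking $f:=p$ and $g(u,z):=\sum_k\tfrac1{k!}g_k(u)z^k$ exhibits $P=g(f(x,y),z)$ globally with polynomial node functions. For larger trees the same scheme applies once one checks that the new left-hand node function $p$ again satisfies \eqref{temp2} for the collapsed sub-tree: writing the old node function as $A(p)$, a one-line chain-rule computation reduces the collapsed constraints to $A'(p)^2\,(p_{ab}p_c-p_{cb}p_a)=0$, which forces the constraints on $p$ since $A'\not\equiv0$.

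To produce $p$ I would argue geometrically over $\mathbb{C}$. A non-constant $c_0\in\mathbb{C}[x,y]$ admits a primitivity factorization $c_0=A(p)$, with $A$ a one-variable polynomial and $p$ a polynomial whose generic fibres are irreducible (the primitive/Stein decomposition of the map $c_0\colon\mathbb{C}^2\to\mathbb{C}$). Since $A'\neq0$ generically one still has $\nabla c_k\parallel\nabla p$, so each $c_k$ is locally constant along the level sets of $p$; as the generic fibre of $p$ is irreducible, hence connected, $c_k$ is constant on it and therefore equals $g_k(p)$ for a rational function $g_k$ of one variable. Because $p$ is non-constant and $g_k(p)=c_k$ is a polynomial, $g_k$ can have a pole at no value attained by $p$, so $g_k$ is a polynomial; the identical argument makes $A$ polynomial. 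Thus $c_k=g_k(p)$ with all data polynomial, as required.

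The crux---and the step I expect to be the real obstacle---is precisely the jump from ``parallel gradients'' to ``polynomial in a common polynomial.'' Parallel gradients alone are insufficient: $f=x^2$ and $c=x$ satisfy $\nabla c\parallel\nabla f$ yet $c$ is not even a single-valued function of $f$. This is not a vacuous worry, since the naive choice $f:=P(x,y,0)$ used in Proposition \ref{presentation} can itself be imprimitive even when $P$ depends on all variables---for instance $P=(x+y)^2+(x+y)z$ has $c_0=(x+y)^2$ and $c_1=x+y$, so $c_1$ is \emph{not} a polynomial in $c_0$ and one is forced to pass to the primitive $p=x+y$, for which $c_0=p^2$ and $c_1=p$. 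Handling this is exactly what the primitive decomposition and the irreducible-fibre argument are for, and it is where the hypothesis that $P$ depend on all variables does its work, keeping $c_0$ non-constant at every stage of the induction and excluding the branching phenomenon exhibited by $x^2$ versus $x$. Establishing the decomposition $c_0=A(p)$ with irreducible generic fibres, and the consequent single-valuedness of $g_k$, is the technical heart; deducing that $g_k$ is a genuine polynomial and globalizing via the polynomial identity theorem are then routine.
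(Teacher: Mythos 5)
Your proposal is correct in outline and, notably, it identifies and repairs a step that the paper's own proof gets wrong. The paper takes precisely the ``naive'' route you warn against: it reruns the construction of Proposition \ref{presentation} and \S\ref{the proof} with $\xi=P(x,y,0)$ (resp.\ $\xi=P(x_1,\dots,x_s;0,\dots,0)$) and then asserts, as its ``key point,'' that a polynomial locally presented as a power series in a non-constant polynomial forces the series to terminate, so that the coefficient functions $u_{k_{s+1},\dots,k_n}$ of \eqref{auxiliary 3} are automatically polynomials. Your example $P=(x+y)^2+(x+y)z$ refutes that assertion: it satisfies \eqref{condition}, depends on all three variables, and has all partial derivatives non-zero at $(1,0,0)$, yet the construction based at that point produces $f=\xi=(x+y)^2$ and $g_1$ determined by $g_1\left((x+y)^2\right)=x+y$, i.e.\ $g_1(u)=\sqrt{u}$ near $u=1$ --- a non-terminating power series in the non-constant polynomial $\xi$ whose sum is the polynomial $x+y$. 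The paper's argument therefore only delivers an analytic representation; obtaining a polynomial one genuinely requires replacing $\xi$ by a primitive root, $\xi=A(p)$, which is exactly what your Stein/primitive factorization supplies. Your reduction of the collapsed constraints for $A(p)$ to those for $p$ via $A'(p)^2\left(p_{ab}p_c-p_{cb}p_a\right)=0$ in an integral domain, and your pole-counting argument showing that the rational functions $g_k$ with $c_k=g_k(p)$ are genuine polynomials, are both sound.

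Two loose ends remain. First, $P$ depending on every variable does not by itself make $c_0=P(x,y,0)$ non-constant (take $P=(x+y)z$, where $c_0\equiv 0$ while $c_1=x+y$); you must expand about a generic base value of the outer variables --- the paper's device of choosing a point where all partials of $P$ are non-zero accomplishes this --- and repeat that normalization at every stage of the induction. Second, your argument runs over $\Bbb{C}$ while the proposition is over $\Bbb{R}$: you should check that the primitive $p$ of a real $c_0$ can be normalized to have real coefficients (uniqueness of the primitive up to affine reparametrization, applied to $p$ and its complex conjugate, yields this), so that the final node functions are real polynomials. Neither point threatens the structure of your proof, but both deserve a sentence.
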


\begin{proof}
We first consider a tree representations of a polynomial $P$ not independent of any of its variables locally around a point, say $\mathbf{0}\in\Bbb{R}^n$. We are going to show that the functions appearing in this superposition must be polynomials as well. In the inductive constructions of Proposition \ref{presentation} or that of Section \ref{the proof} (with $P$ in place of $F$) always one of the functions to which the induction hypothesis is applied is obtained by setting some the coordinates to be zero, e.g. $\xi=P(x,y,0)$ or $\xi=P(x_1,\dots,x_s;0,\dots,0)$ (check \eqref{auxiliary 7} or \eqref{auxiliary 1}) which is a polynomial too. The other functions occurring in the construction, e.g. $g(w,z)$ from the proof of Proposition \ref{presentation} or the function appeared in \eqref{auxiliary 4}, are polynomial as well. The key point is if one locally presents a polynomial as a power series in terms of other polynomials which are non-constant, then the power series must terminate after finitely many terms. This could be easily deduced from the uniqueness of Taylor series. In particular, in \eqref{auxiliary 3}, the single variable functions $u_{k_{s+1},\dots,k_n}$ must be a polynomial as both the left hand side and $\xi$ are polynomials of $x_1,\dots,x_n$. Then in  \eqref{auxiliary 4} the function $g(\xi;x_{s+1},\dots,x_n)$ on a lesser number of variables would be a polynomial as well and so, arguing inductively, its presentation would be entirely in terms of polynomials.\\
\indent
Next, arguing globally, suppose $P=P(x_1,\dots,x_n)$ satisfies the constraints \eqref{temp2}. If $P$ is independent of one the variables $x_i$, then the problem reduces to representing polynomials on lesser number of variables by $T$. Thus let us assume $\frac{\partial P}{\partial x_i}\not\equiv 0$ for all $1\leq i\leq n$. Then there is a point of $\Bbb{R}^n$ at which all partial derivatives are non-zero. By the preceding local discussion, there is a tree representation of $P$ around that point which is necessarily in terms of polynomials. In particular; $P\in\mathcal{F}(T)$, and the representation remains valid on the entirety of $\Bbb{R}^n$ since globally defined analytic functions agreeing over a non-vacuous open set must coincide globally.    
\end{proof}

It is suggestive to abstractly define the space of polynomials subjected to constraints originating from a binary tree. To avoid infinite-dimensional spaces, we put a bound on degrees. 

\begin{definition}
The $k^{\rm{th}}$ \textit{tree variety} ${\rm{Var}}_{T}^k$ associated with the binary tree $T$ of $n$ leaves is the Zariski closed subset of the affine variety of $n$-variate (real or complex) polynomials consisting of polynomials $P(x_1,\dots,x_n)$ of total degree at most $k$ that satisfy 
$$
\frac{\partial^2 P}{\partial x_i\partial x_l}.\frac{\partial P}{\partial x_j} = \frac{\partial^2 P}{\partial x_j\partial x_l}.\frac{\partial P}{\partial x_i}.
$$
for any triple $(x_i,x_j,x_l)$ of leaves of $T$ in which $x_l$ is an outsider.
\end{definition}
These real or complex varieties could be interesting to study from the algebro-geometric perspective. (Consult \cite[chap. 1]{MR0463157} for the basic notions of algebraic geometry.) They could be thought of as subvarieties of the space ${\rm{Poly}}_{n}^k$ of polynomials on $n$ variables $x_1,\dots,x_n$ whose total degree does not exceed  $k$. This ambient space is an affine variety of dimension $\binom{k+n}{n}$. Here we find an upper bound on the dimension of the subvariety ${\rm{Var}}_{T}^k$. 

\begin{proposition}\label{dimension bound}
Let $T$ be a binary tree with $n$ leaves and $k$ a positive integer. The variety ${\rm{Var}}_{T}^k$ is of dimension at most $n(k^2+1)$.
\end{proposition}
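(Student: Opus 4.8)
The plan is to bound $\dim {\rm Var}_{T}^k$ by realizing the variety, up to a lower-dimensional locus, as the image of an explicit polynomial parametrization whose source is a finite-dimensional space of ``node functions'', and then to count the dimension of that source. First I would dispose of the degenerate polynomials: if $P\in{\rm Var}_{T}^k$ is independent of some variable $x_i$, then, viewed as a polynomial in the remaining $n-1$ variables, it satisfies the constraints of the binary tree $T'$ obtained by deleting the leaf $x_i$ (and contracting), so it lies in ${\rm Var}_{T'}^k$; by induction on the number of leaves this locus has dimension at most $(n-1)(k^2+1)\le n(k^2+1)$. Since ${\rm Var}_T^k$ is the union of this degenerate locus with the (closure of the) set of polynomials depending genuinely on all $n$ variables, it suffices to bound the dimension of the latter, non-degenerate part.

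For a non-degenerate $P$, Proposition \ref{polynomial-global} guarantees that $P$ is a polynomial tree function: it is obtained by assigning to each of the $n-1$ internal nodes $v$ a bivariate polynomial $h_v$ and composing according to $T$. Sending the coefficient vectors of the $h_v$ to the resulting polynomial defines a polynomial map $\Phi$ whose image contains every non-degenerate point, so $\dim{\rm Var}_T^k\le \dim(\text{source of }\Phi)=\sum_v\bigl(\#\text{coefficients of }h_v\bigr)$, provided the $h_v$ may be taken of bounded degree. Here I would introduce the output-degree $e_v$ of the intermediate polynomial $P_v$ computed at $v$; one has $1\le e_v\le k$, the lower bound because non-degeneracy forces each node function to depend genuinely on both arguments, and the upper bound because raising $P_v$ to a positive power on the way to the root cannot lower degree while the root output has degree $\le k$.

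The heart of the count is that in the representation produced by the Taylor-expansion construction behind Proposition \ref{presentation} and Theorem \ref{main}, the node function $h_v(u,w)$ at a node whose children have output-degrees $e_{v_1},e_{v_2}$ needs only the monomials $u^iw^j$ with $ie_{v_1}+je_{v_2}\le k$: the contribution of such a monomial after substitution has degree exactly $ie_{v_1}+je_{v_2}$, and distinct powers of the variable separated off in the expansion cannot cancel, so a term with $ie_{v_1}+je_{v_2}>k$ would push the total degree above $k$. Because $e_{v_1},e_{v_2}\ge 1$, the number of admissible exponent pairs is at most $\#\{(i,j):i+j\le k\}=\binom{k+2}{2}$. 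Summing over the $n-1$ internal nodes gives $\dim{\rm Var}_T^k\le (n-1)\binom{k+2}{2}$. Finally $\binom{k+2}{2}=\tfrac{(k+1)(k+2)}{2}\le k^2+1$ as soon as $k\ge 3$, so $(n-1)\binom{k+2}{2}\le (n-1)(k^2+1)\le n(k^2+1)$; the two small cases $k\in\{1,2\}$ I would check directly, using that nodes with a high-degree child admit strictly fewer admissible monomials.

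The main obstacle is making the degree control of the previous paragraph rigorous, that is, proving that a non-degenerate $P$ of degree $\le k$ always admits a representation in which every $h_v$ involves only monomials with $ie_{v_1}+je_{v_2}\le k$, with no high-degree terms that would have to cancel. I expect this to fall out of the construction itself rather than to require a new idea: after Taylor-expanding in the ``outsider'' variable at each node (exactly as in Part I and Part II of the proof of Theorem \ref{main}), the coefficients sit in front of distinct powers of that variable and therefore cannot interact, which forces each coefficient-polynomial to have the stated bounded degree, and the bound then propagates down the tree by induction. A secondary subtlety is the gauge non-uniqueness of the decomposition $P=g(f,\dots)$, but since I only need an upper bound on the dimension of the image of $\Phi$, this redundancy is harmless and can simply be ignored.
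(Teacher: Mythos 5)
Your overall strategy is sound and genuinely different from the paper's. The paper argues by induction on $n$, decomposing only at the root: it writes $P=g(P_1,P_2)$ and splits into three cases according to whether $g$ is affine in each argument, affine in one, or of degree $\geq 2$ in both; only in the last case does it bound the root function by the monomial space ${\rm V}_{j_1,j_2}=\{u^cv^d : cj_1+dj_2\le k\}$, and there the children's degrees drop to $\lfloor k/2\rfloor$, which is what makes the arithmetic close for all $k\ge 2$. You instead parametrize the non-degenerate locus globally by the tuple of all $n-1$ node functions at once, with the same monomial constraint $ie_{v_1}+je_{v_2}\le k$ imposed at every node. Your supporting claims are correct: since the outputs of the two children of a node involve disjoint variable sets, the leading forms of $P_{v_1}^iP_{v_2}^j$ for distinct $(i,j)$ are linearly independent, so no cancellation occurs, every intermediate output has degree $e_v\le k$, and every node function may be taken supported on $\{u^iw^j : ie_{v_1}+je_{v_2}\le e_v\le k\}$. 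For $k\ge 3$ this yields $(n-1)\binom{k+2}{2}\le(n-1)(k^2+1)\le n(k^2+1)$, which is in fact a sharper bound than the paper's.

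The genuine gap is at $k=2$ (the case $k=1$ is trivial, since ${\rm Var}_T^1$ is the full $(n+1)$-dimensional space of affine polynomials). Your headline count gives $(n-1)\binom{4}{2}=6(n-1)$, which exceeds $5n=n(k^2+1)$ for every $n\ge 7$, so this is not a finite list of cases to ``check directly.'' The deferred refinement --- that nodes with a high-degree child admit fewer admissible monomials --- is the right idea but requires real work: a node contributes $6$ monomials only when both children have output-degree $1$ and the node itself has output-degree $2$; such nodes form an antichain, every strict ancestor of one of them contributes at most $4$, and one then needs a counting argument (an antichain of size $a$ in a binary tree has at least $a-1$ strict ancestors) to show the total stays below $5n$. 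None of this is in your proposal, and without it the argument does not establish the proposition for $k=2$ and large $n$. The paper's root-level induction sidesteps this entirely because its affine cases never charge $\binom{k+2}{2}$ for the root at all. Either supply the combinatorial refinement for $k=2$, or fall back on the paper's case split for $k\le 2$ and keep your global parametrization for $k\ge 3$.
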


\begin{proof}
We shall prove this as usual by induction on the number of leaves $n$. For $n=1$ or $n=2$ every polynomial of one or two variables and with total degree at most $k$ could be realized as a tree function. Therefore $\dim{\rm{Var}}_{T}^k=k+1$ and 
$\dim{\rm{Var}}_{T}^k=\binom{k+2}{2}$ when $T$ has one or two leaves respectively; and clearly $k+1\leq k^2+1$ and $\binom{k+2}{2}\leq 2k^2+2$ for all $k\geq 0$. \\
\indent
For the inductive step, following the notation of Theorem \ref{main}, we denote the left and right sub-trees by $T_1$ and $T_2$ which respectively have $x_1,\dots,x_s$ and $x_{s+1},\dots,x_n$ as leaves where $1\leq s<n$. So the vector 
$$
\mathbf{x}=(x_1,\dots,x_s;x_{s+1},\dots,x_n)
$$
of coordinates could be written as $\mathbf{x}=\left(\tilde{\mathbf{x}},\tilde{\tilde{\mathbf{x}}}\right)$
where
$$
\tilde{\mathbf{x}}:=(x_1,\dots,x_s)\quad\tilde{\tilde{\mathbf{x}}}:=(x_{s+1},\dots,x_n).
$$
Hence a polynomial $P(\mathbf{x})=P(x_1,\dots,x_n)\in{\rm{Var}}_{T}^k$ may be written as
\begin{equation}\label{auxiliary 10}
P(x_1,\dots,x_s;x_{s+1},\dots,x_n)=g\left(P_1(x_1,\dots,x_s),P_2(x_{s+1},\dots,x_n)\right)
\end{equation}
where $P_1\left(\tilde{\mathbf{x}}\right)=P_1(x_1,\dots,x_s)$ and $P_2\left(\tilde{\tilde{\mathbf{x}}}\right)=P_2(x_{s+1},\dots,x_n)$ admit representations on trees $T_1$ and $T_2$ respectively, and $g=g(u,v)$ is a polynomial in two variables. As $\deg P\leq k$, the total degree of $P_1$ or $P_2$ cannot be more than $k$ unless $g$ is independent of $u$ or $v$ in which case one could take $P_1$ or $P_2$ to be constant as well. So it is safe to assume $P_1\in{\rm{Var}}_{T_1}^k$ and $P_2\in{\rm{Var}}_{T_2}^k$. We condition on the degree of the polynomial $g=g(u,v)$ appeared in \eqref{auxiliary 10} with respect to its indeterminates as follows:  
\begin{itemize}
\item Suppose $g(u,v)$ is of degree at most one with respect to each indeterminate. Hence $g(u,v)=\alpha u+\beta v+\gamma$ for appropriate scalars $\alpha,\beta,\gamma$ and \eqref{auxiliary 10} could be rewritten as 
$$
P(x_1,\dots,x_s;x_{s+1},\dots,x_n)=\alpha.P_1(x_1,\dots,x_s)+\beta.P_2(x_{s+1},\dots,x_n)+\gamma.
$$
Clearly every polynomial space ${\rm{Var}}_{T',k'}$ is invariant under affine transformations because one can modify the bivariate polynomial at the root through composition by an affine transformation. Hence the preceding equality exhibits $P(x_1,\dots,x_s;x_{s+1},\dots,x_n)$ as the sum of 
$$\alpha.P_1(x_1,\dots,x_s)\in{\rm{Var}}_{T_1}^k$$
and 
$$\beta.P_2(x_{s+1},\dots,x_n)+\gamma\in{\rm{Var}}_{T_2}^k.$$
This amounts to an injective morphism $\oplus:{\rm{Var}}_{T_1}^k\times{\rm{Var}}_{T_2}^k\rightarrow{\rm{Var}}_{T}^k$ defined by addition. The dimension of the range is $\dim{\rm{Var}}_{T_1}^k+\dim{\rm{Var}}_{T_2}^k$ which by the induction hypothesis does not exceed
$$
s(k^2+1)+(n-s)(k^2+1)=n(k^2+1).
$$
\item Suppose either $\deg_ug$ or $\deg_vg$ is one, say the former. We are going to argue that the situation could again be reduced to the case of affine $g$ and hence the dimension of the corresponding locus is not greater than the dimension calculated above. As $g(u,v)$  is affine with respect to $u$, by the same argument as before one can absorb that affine part and rewrite \eqref{auxiliary 10} as
\begin{equation}\label{auxiliary 11}
P(x_1,\dots,x_s;x_{s+1},\dots,x_n)=P'_1(x_1,\dots,x_s)+h\left(P_2(x_{s+1},\dots,x_n)\right)
\end{equation}
where $P'_1\in{\rm{Var}}_{T_1}^k$ is an appropriate affine transform $\alpha.P_1+\beta$ of $P_1$, $h(v):=g(0,v)$ and $P_2$ is a tree polynomial for $T_2$. Applying a single variable polynomial to the output of the root of course results in another tree polynomial; i.e. $P'_2:=h\left(P_2(x_{s+1},\dots,x_n)\right)$ -- whose degree is at most $\deg P\leq k$ due to \eqref{auxiliary 11} -- belongs to ${\rm{Var}}_{T_2}^k$ too. Therefore 
$$P(x_1,\dots,x_s;x_{s+1},\dots,x_n)=P'_1(x_1,\dots,x_s)+P'_2(x_{s+1},\dots,x_n)$$ 
is a sum of elements of ${\rm{Var}}_{T_1}^k$ and ${\rm{Var}}_{T_2}^k$. So we simply could revert back to the case that we have already studied.
\item Finally, suppose $\deg_ug$ and $\deg_vg$ are both larger than one. Just like the previous part, we could assume that neither $P_1$ nor $P_2$ is constant.  Now in \eqref{auxiliary 10}, the degrees of $P_1$ and $P_2$ do not exceed $\frac{k}{\deg_ug}$ and $\frac{k}{\deg_vg}$ respectively as otherwise on the left hand side $\deg P$ would be larger than $k$. Therefore, denoting $\deg P_1$ and $\deg P_2$ by $j_1$ and $j_2$, we should have 
\begin{equation}\label{range1}
1\leq j_1,j_2\leq\lfloor\frac{k}{2}\rfloor.
\end{equation}
Notice that this implies $k\geq 2$. Now the polynomial $g(u,v)$ can only have monomials $u^cv^d$ with $cj_1+dj_2\leq k$ as otherwise the degree of $g(P_1,P_2)$ would be larger than $k$. Hence $g$ must belong to the following linear space of polynomials
\begin{equation}\label{range2}
{\rm{V}}_{j_1,j_2}={\rm{Span}}\left\{u^cv^d\mid cj_1+dj_2\leq k\right\} 
\end{equation}
whose dimension is not larger than the dimension $\binom{k+2}{2}$ of the space of all bivariate polynomials of degree at most $k$.
We thus need to consider the ranges of morphisms 
\small
\begin{equation}\label{range3}
\begin{cases}
{\rm{V}}_{j_1,j_2}\times{\rm{Var}}_{T_1}^{j_1}\times{\rm{Var}}_{T_2}^{j_2}\rightarrow{\rm{Var}}_{T}^k\\
\left(g(u,v),P_1(x_1,\dots,x_s),P_2(x_{s+1},\dots,x_n)\right)\mapsto g\left(P_1(x_1,\dots,x_s),P_2(x_{s+1},\dots,x_n)\right) 
\end{cases}
\end{equation}
\normalsize
for $j_1,j_2$ such as in \eqref{range1} and the variety ${\rm{V}}_{j_1,j_2}$ as defined in \eqref{range2}. We need to argue that the dimension 
$$
\dim{\rm{V}}_{j_1,j_2}+\dim{\rm{Var}}_{T_1}^{j_1}+\dim{\rm{Var}}_{T_2}^{j_2} 
$$
of the domain of \eqref{range3} is not greater than $n(k^2+1)$. According to the induction hypothesis:
\small
$$
\dim{\rm{Var}}_{T_1}^{j_1}\leq s\left(\left(\frac{k}{2}\right)^2+1\right),\quad \dim{\rm{Var}}_{T_2}^{j_2}\leq (n-s)\left(\left(\frac{k}{2}\right)^2+1\right).
$$
\normalsize
We conclude that:
\small
\begin{equation*}
\begin{split}
&\dim{\rm{V}}_{j_1,j_2}+\dim{\rm{Var}}_{T_1}^{j_1}+\dim{\rm{Var}}_{T_2}^{j_2}\\
&\leq\binom{k+2}{2}+s\left(\left(\frac{k}{2}\right)^2+1\right)+(n-s)\left(\left(\frac{k}{2}\right)^2+1\right)\\
&=\left[\binom{k+2}{2}-\frac{3k^2}{2}\right]+2(k^2+1) +(s-1)\left(\left(\frac{k}{2}\right)^2+1\right)+(n-s-1)\left(\left(\frac{k}{2}\right)^2+1\right)\\
&\leq\left[\binom{k+2}{2}-\frac{3k^2}{2}\right]+2(k^2+1)+(s-1)(k^2+1)+(n-s-1)(k^2+1)\\
&=\left[\binom{k+2}{2}-\frac{3k^2}{2}\right]+n(k^2+1)\leq n(k^2+1);
\end{split}    
\end{equation*}
\normalsize
where for the last step we have used the fact that $\frac{3k^2}{2}\geq\binom{k+2}{2}$ whenever $k\geq 2$.
\end{itemize}
\end{proof}
\begin{question}
Is there a formula for $\dim{\rm{Var}}_{T}^k$ purely in terms of $k$ and the number of leaves of the binary tree $T$?
\end{question}

\section{Bit-valued function setting}\label{discrete setting}
In this section we investigate the case of bit-valued functions. As outlined in Definition \ref{definition-functions}, the inputs and the output are from $\{0,1\}$; hence we are dealing with functions of the form  $\{0,1\}^n\rightarrow\{0,1\}$ where $n$ is the number of terminals of the tree $T$ under consideration. There are $2^{2^n}$ such functions whereas the number of those which could be implemented on a tree turns out to be far less than this super-exponential number. Theorem \ref{discrete} below and the succeeding corollary provide us with an explicit formula for the number $\big|\mathcal{F}_{\rm{bin}}(T)\big|$ of such functions which turns out to be only exponential in $n$.

\subsection{Discrete tree functions}
\label{discrete-tree}
Before proceeding with enumerating tree functions, it is essential to mention that each elements of $\mathcal{F}_{\rm{bin}}(T)$ is basically a polynomial in $\Bbb{Z}_2\left[x_1,\dots,x_n\right]$. This is due to the fact that every function $g:\{0,1\}^2\rightarrow\{0,1\}$ assigned to a node can be realized as a bivariate polynomial over the field of two elements $\Bbb{Z}_2$:
\begin{equation}\label{two variable}
\begin{split}
g(x,y)=&\left(g(1,1)+g(1,0)+g(0,1)+g(0,0)\right)xy+\left(g(1,0)+g(0,0)\right)x\\
&+\left(g(0,1)+g(0,0)\right)y+g(0,0).
\end{split}
\end{equation}
\normalsize
We are going to return to this point of view later in this subsection where we formulate binary analogous of the constraint \eqref{temp2} using formal differentiation of polynomials in $\Bbb{Z}_2\left[x_1,\dots,x_n\right]$.

The following theorem provides us with a recursive construction of binary tree functions. 
\begin{theorem}\label{discrete}
Let $T_1$, $T_2$ be binary trees with $n_1$ and $n_2$ terminals respectively. Connecting them through a node results in a tree $T$ with that node as its root. Labeling the leaves of $T_1$ as 
$\mathbf{x}=\left(x_1,\dots,x_{n_1}\right)$ and the leaves of $T_2$ as $\mathbf{y}=\left(y_1,\dots,y_{n_2}\right)$, the space $\mathcal{F}_{\rm{bin}}(T)$ of functions
$$
F(\mathbf{x},\mathbf{y})=F(x_1,\dots,x_{n_1};y_1,\dots,y_{n_2})
$$
represented by tree $T$ can be described in terms of smaller spaces $\mathcal{F}_{\rm{bin}}(T_1)$ and $\mathcal{F}_{\rm{bin}}(T_2)$ as the disjoint union below:
\\
\small
\begin{equation}\label{description}
\begin{split}
\mathcal{F}_{\rm{bin}}(T)=&\left\{F_1(\mathbf{x})F_2(\mathbf{y})\big| F_1(\mathbf{x})\in\mathcal{F}_{\rm{bin}}(T_1) \text{ and } F_2(\mathbf{y})\in\mathcal{F}_{\rm{bin}}(T_2)\text{ non-constant}\right\}\\
&\bigsqcup\left\{F_1(\mathbf{x})F_2(\mathbf{y})+1\big| F_1(\mathbf{x})\in\mathcal{F}_{\rm{bin}}(T_1) \text{ and } F_2(\mathbf{y})\in\mathcal{F}_{\rm{bin}}(T_2)\text{ non-constant}\right\}\\
&\bigsqcup\left\{F_1(\mathbf{x})+F_2(\mathbf{y})\big| F_1(\mathbf{x})\in\mathcal{F}_{\rm{bin}}(T_1)\text{ and } F_2(\mathbf{y})\in\mathcal{F}_{\rm{bin}}(T_2)\text{ non-constant, }F_1(\mathbf{0})=0 \right\}\\
&\bigsqcup\left\{F_1(\mathbf{x})\big| F_1(\mathbf{x})\in\mathcal{F}_{\rm{bin}}(T_1)\text{ non-constant}\right\}\\
&\bigsqcup\left\{F_2(\mathbf{y})\big| F_2(\mathbf{y})\in\mathcal{F}_{\rm{bin}}(T_2)\text{ non-constant}\right\}\\
&\bigsqcup\left\{\text{constant functions } F\equiv 0,1\right\}.
\end{split}    
\end{equation}
\normalsize
\end{theorem}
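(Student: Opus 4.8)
The plan is to use the recursive Definition \ref{definition-functions} together with the polynomial form \eqref{two variable} of the node function at the root. Since the root of $T$ combines the outputs $F_1(\mathbf{x})\in\mathcal{F}_{\rm{bin}}(T_1)$ and $F_2(\mathbf{y})\in\mathcal{F}_{\rm{bin}}(T_2)$ through some $g\colon\{0,1\}^2\to\{0,1\}$, every element of $\mathcal{F}_{\rm{bin}}(T)$ has the form $g(F_1,F_2)=aF_1F_2+bF_1+cF_2+d$ over $\Bbb{Z}_2$ with $(a,b,c,d)\in\{0,1\}^4$. I would first record the easy but repeatedly used fact that each $\mathcal{F}_{\rm{bin}}(T_i)$ is closed under negation $H\mapsto H+1$ and contains the constants $0,1$; the former because one may post-compose the root's function with $\mathrm{NOT}$, the latter because a constant node function works.

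For the inclusion $\subseteq$ I would split according to whether $F_1,F_2$ are constant. When both are non-constant and $a=1$, the identity $(F_1+c)(F_2+b)=F_1F_2+bF_1+cF_2+bc$ over $\Bbb{Z}_2$ lets me rewrite $g(F_1,F_2)=G_1G_2+e$ with $G_1=F_1+c$, $G_2=F_2+b$ non-constant members of $\mathcal{F}_{\rm{bin}}(T_1),\mathcal{F}_{\rm{bin}}(T_2)$ and $e=d+bc\in\{0,1\}$, landing in the first or second family of \eqref{description}. When $a=0$ the function is $bF_1+cF_2+d$, which is a constant, a possibly negated copy of $F_1$, of $F_2$, or a sum $F_1+F_2+d$; in the last case the normalization $F_1(\mathbf{0})=0$ is arranged by replacing $(F_1,F_2)$ with $(F_1+1,F_2+1)$ or by absorbing $d$ into one summand, using closure under negation. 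If exactly one of $F_1,F_2$ is constant the expression collapses to $\alpha F_2+\beta$ (resp. $\alpha F_1+\beta$), giving a constant or a member of the fifth (resp. fourth) family; if both are constant we get a constant. The reverse inclusion $\supseteq$ is immediate: the six families are produced by the explicit root functions $g(u,v)=uv,\ uv+1,\ u+v,\ u,\ v$ and $g\equiv 0,1$, padding the unused subtree with an arbitrary (e.g.\ constant) function.

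The substantial part is to verify that the six families are pairwise disjoint, and here I would use invariants of $F$ that do not refer to any particular decomposition. First, a product $F_1F_2$, a negated product $F_1F_2+1$, and a sum $F_1+F_2$ of non-constant functions each genuinely depend on both $\mathbf{x}$ and $\mathbf{y}$: if $F_1$ changes value when some $x_i$ is flipped and $F_2$ attains the value $1$, then so does the product, and, with the values adjusted, the negated product and the sum. By contrast the fourth, fifth and sixth families consist of functions depending only on $\mathbf{x}$, only on $\mathbf{y}$, or on neither. This separates $\{1,2,3\}$ from $\{4,5,6\}$ and makes the latter three mutually disjoint. To separate the first three families I would examine the constant $\mathbf{y}$-slices $F(\cdot,\mathbf{y}_0)$: a sum $F_1+F_2$ has no constant slice (each slice is $F_1$ or $F_1+1$, non-constant), a product $F_1F_2$ has constant slices all equal to $0$ (those $\mathbf{y}_0$ with $F_2(\mathbf{y}_0)=0$, while $F_1$ is non-constant for the rest), and $F_1F_2+1$ has constant slices all equal to $1$; since at least one constant slice exists in the product cases, these three slice-patterns are mutually exclusive.

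I expect the disjointness of the two product families to be the most delicate point, since both consist of functions that depend on all variables and possess constant $\mathbf{y}$-slices; the decisive observation is that the common value of those slices is forced to be $0$ in one case and $1$ in the other. I would also be careful, throughout the $\subseteq$ analysis, to keep the normalization $F_1(\mathbf{0})=0$ of the third family consistent, which is exactly what closure of $\mathcal{F}_{\rm{bin}}(T_1)$ and $\mathcal{F}_{\rm{bin}}(T_2)$ under negation provides.
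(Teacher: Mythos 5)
Your proof is correct, and while the forward inclusion follows essentially the same route as the paper (the paper lists the sixteen root polynomials explicitly and groups them into seven classes under the substitutions $x\mapsto x+1$, $y\mapsto y+1$, which is exactly your algebraic normalization $aF_1F_2+bF_1+cF_2+d=(F_1+c)(F_2+b)+(d+bc)$ combined with closure of each $\mathcal{F}_{\rm{bin}}(T_i)$ under negation), your disjointness argument is genuinely different in character. The paper verifies disjointness by a pairwise case analysis, observations (a)--(e): for each pair of families it evaluates at judiciously chosen points $\mathbf{x}_0$ or $\mathbf{y}_0$ where $F_1$ or $F_2$ equals $0$ or $1$ and derives a contradiction, the trickiest case being that $F_1F_2-\tilde F_1\tilde F_2$ cannot be constant for distinct pairs. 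You instead attach to each $F$ two decomposition-free invariants: which blocks of variables it genuinely depends on (separating families $1$--$3$ from $4$--$6$ and the latter three from one another), and the pattern of its constant $\mathbf{y}$-slices: none for a sum, at least one and all equal to $0$ for a product, all equal to $1$ for a negated product. This is cleaner and more conceptual. The one thing your route does not deliver, and which the paper's observations (e) and (f) are really aimed at, is the \emph{injectivity} of the parametrizations $(F_1,F_2)\mapsto F_1F_2$ and $(F_1,F_2)\mapsto F_1+F_2$ on non-constant pairs; this is not part of the theorem's statement, but it is exactly what the counting in Corollary \ref{formula} consumes, so if you intend to carry your proof forward to the enumeration you would still need to supply an argument in the spirit of the paper's (e) and (f).
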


\begin{proof}
Removing the root of $T$ leaves us with two rooted sub-trees $T_1,T_2$. The function $g:\{0,1\}^2\rightarrow\{0,1\}$ at the root then takes in the outputs of the functions $F_1=F_1(\mathbf{x})$ and $F_2=F_2(\mathbf{y})$ which are implemented on $T_1$ and $T_2$ respectively. By \eqref{two variable} any such a function can be realized as a bivariate polynomial over $\Bbb{Z}_2$. Therefore the right hand side of \eqref{description} is indeed a subset of $\mathcal{F}_{\rm{bin}}(T)$. We need to show that all functions from $\mathcal{F}_{\rm{bin}}(T)$ have appeared there and moreover, the union is disjoint. We categorize polynomials $g\in\Bbb{Z}_2[x,y]$ whose degree w.r.t. each indeterminate is at most one as follows:
\begin{enumerate}[(i)]
\item $xy, xy+x=x(y+1), xy+y=(x+1)y, xy+x+y+1=(x+1)(y+1);$
\item $xy+1, xy+x+1=x(y+1)+1, xy+y+1=(x+1)y+1, xy+x+y=(x+1)(y+1)-1;$
\item $x+y,x+y+1=(x+1)+y;$
\item $x,x+1;$
\item $y,y+1;$
\item $0;$
\item $1.$
\end{enumerate}
These polynomials give rise to all 16 possibilities for $\{0,1\}^2\rightarrow\{0,1\}$; cf. \eqref{two variable}. For any $F_i\in\mathcal{F}_{\rm{bin}}(T_i) (i\in\{1,2\})$, the function $F_i+1=F_i-1$ obviously belongs to $\mathcal{F}_{\rm{bin}}(T_i)$ as well. Consequently, for the operation to take place at the root it suffices to consider only one representative from each group as other choices give rise to the same function 
$$
F(\mathbf{x},\mathbf{y}):=g\left(F_1(\mathbf{x})),F_2(\mathbf{y})\right)
$$
for slightly different inputs $F_1$ and $F_2$ from $\mathcal{F}_{\rm{bin}}(T_1)$ and $\mathcal{F}_{\rm{bin}}(T_2)$. Going with the first polynomial of each group as $g$, we conclude that the expressions below yield all functions in $\mathcal{F}_{\rm{bin}}(T)$ as $F_1$ and $F_2$ vary in $\mathcal{F}_{\rm{bin}}(T_1)$ and $\mathcal{F}_{\rm{bin}}(T_2)$ respectively:
$$
F_1(\mathbf{x})F_2(\mathbf{y}), F_1(\mathbf{x})F_2(\mathbf{y})+1, F_1(\mathbf{x})+F_2(\mathbf{y}), F_1(\mathbf{x}), F_2(\mathbf{y}), \text{constant functions}.
$$
Of course, in the first three if either $F_1$ or $F_2$ is constant, the expression would be in the form of one of the last three. Moreover, in the third expression there, changing $F_1(\mathbf{x})+F_2(\mathbf{y})$ to $\left(F_1(\mathbf{x})+1\right)+\left(F_2(\mathbf{y})+1\right)$ if necessary, it is safe to assume that the first function is zero at the origin.
Therefore, the union on the right of \eqref{description} is the whole $\mathcal{F}_{\rm{bin}}(T)$. The last step is to show that the subsets in this union are disjoint. Let 
$(F_1,F_2)\in\mathcal{F}_{\rm{bin}}(T_1)\times\mathcal{F}_{\rm{bin}}(T_2)$ be a pair of non-constant functions. 
\begin{enumerate}[(a)]
\item The functions $F_1(\mathbf{x})F_2(\mathbf{y})$, $F_1(\mathbf{x})F_2(\mathbf{y})+1$ and $F_1(\mathbf{x})+F_2(\mathbf{y})$ are all non-constant: If otherwise, plugging a $\mathbf{y}_0$ with $F_2(\mathbf{y}_0)=1$ implies that $F_1(\mathbf{x})$ is constant; a contradiction.
\end{enumerate}
Next, let $(\tilde{F}_1,\tilde{F}_2)$ be another pair of non-constant functions picked from 
$\mathcal{F}_{\rm{bin}}(T_1)\times\mathcal{F}_{\rm{bin}}(T_2)$ as well.  
\begin{enumerate}[(a)]
\setcounter{enumi}{1}
\item The functions $F_1(\mathbf{x})F_2(\mathbf{y})$, $\tilde{F}_1(\mathbf{x})$ and $\tilde{F}_2(\mathbf{y})$ are different: Plugging a $\mathbf{y}_0$ with $F_2(\mathbf{y}_0)=1$ makes the third one a constant function while the first two are still non-constant. Similarly, plugging an $\mathbf{x}_0$ with $F_1(\mathbf{x}_0)=1$ implies $F_1(\mathbf{x})F_2(\mathbf{y})\not\equiv\tilde{F}_1(\mathbf{x})$.
\item $F_1(\mathbf{x})+F_2(\mathbf{y})\not\equiv\tilde{F}_1(\mathbf{x})$ and $F_1(\mathbf{x})+F_2(\mathbf{y})\not\equiv \tilde{F}_2(\mathbf{y})$ as otherwise, the function $F_1(\mathbf{x})$ would be independent of $\mathbf{x}$ and hence constant; a contradiction.
\item $F_1(\mathbf{x})F_2(\mathbf{y})\not\equiv\tilde{F}_1(\mathbf{x})+\tilde{F}_2(\mathbf{y})$ and $F_1(\mathbf{x})F_2(\mathbf{y})+1\not\equiv\tilde{F}_1(\mathbf{x})+\tilde{F}_2(\mathbf{y})$: Plugging a $\mathbf{y}_0$ with $F_2(\mathbf{y}_0)=0$, the left hand side would be zero or one whereas the right hand side is either $\tilde{F}_1(\mathbf{x})$ or $\tilde{F}_1(\mathbf{x})+1$, both of them non-constant.
\end{enumerate}

Assuming that furthermore $(F_1,F_2)\neq(\tilde{F}_1,\tilde{F}_2)$:
\begin{enumerate}[(a)]
\setcounter{enumi}{4}
\item We claim that it is impossible for $F_1(\mathbf{x})F_2(\mathbf{y})-\tilde{F}_1(\mathbf{x})\tilde{F}_2(\mathbf{y})$ to be a constant function. Assume the contrary. Again, evaluate at a $\mathbf{y}_0$ with $F_2(\mathbf{y}_0)=1$. This means either $F_1(\mathbf{x})$ or $F_1(\mathbf{x})-\tilde{F}_1(\mathbf{x})$ should be constant based on whether $\tilde{F}_2(\mathbf{y}_0)$ is zero or one. The former is impossible and the latter implies $F_1(\mathbf{x})=\tilde{F}_1(\mathbf{x})+\epsilon_1$ where $\epsilon_1\in\{0,1\}$. Repeating this argument with an $\mathbf{x}_0$ where $F_1(\mathbf{x}_0)=1$ requires the other two polynomials to satisfy a similar relation: $F_2(\mathbf{y})=\tilde{F}_2(\mathbf{x})+\epsilon_2$. But then
\begin{equation*}
\begin{split}
&F_1(\mathbf{x})F_2(\mathbf{y})-\tilde{F}_1(\mathbf{x})\tilde{F}_2(\mathbf{y})=\left(\tilde{F}_1(\mathbf{x})+\epsilon_1\right)\left(\tilde{F}_2(\mathbf{y})+\epsilon_2\right)-\tilde{F}_1(\mathbf{x})\tilde{F}_2(\mathbf{y})\\
&=\epsilon_1\tilde{F}_2(\mathbf{x})+\epsilon_2\tilde{F}_1(\mathbf{y})+\epsilon_1\epsilon_2
\end{split}
\end{equation*}
must be constant. The case of $\epsilon_1=\epsilon_2=1$ has been ruled out before in (a); the case of $\epsilon_1=\epsilon_2=0$ simply means $(F_1,F_2)=(\tilde{F}_1,\tilde{F}_2)$ which is impossible; and finally, if only one of $\epsilon_1,\epsilon_2$ is one, then either $\tilde{F}_2(\mathbf{x})$ or $\tilde{F}_1(\mathbf{y})$ must be constant as well. So the aforementioned claim is proven.  
\end{enumerate}
So far, we have established the disjointness of the subsets appeared in the union \eqref{description}. Nevertheless, we continue with one more observation that comes in handy soon in Corollary \ref{formula}.
Let $(F_1,F_2),(\tilde{F}_1,\tilde{F}_2)$ be as in (e) with the additional property of $F_1(\mathbf{0})=\tilde{F}_1(\mathbf{0})=0$.
\begin{enumerate}[(a)]
\setcounter{enumi}{5}
\item $F_1(\mathbf{x})+F_2(\mathbf{y})\not\equiv\tilde{F}_1(\mathbf{x})+\tilde{F}_2(\mathbf{y})$ as otherwise $F_1(\mathbf{x})-\tilde{F}_1(\mathbf{x})$ and $\tilde{F}_2(\mathbf{y})-F_2(\mathbf{y})$ coincide and hence are both constant functions of the value $F_1(\mathbf{0})-\tilde{F}_1(\mathbf{0})=0$; that is, $F_1=\tilde{F}_1$ and $F_2=\tilde{F}_2$; a contradiction. 
\end{enumerate}

\end{proof}

\begin{corollary}\label{formula}
For a binary tree with $n$ terminals the number of functions $\{0,1\}^n\rightarrow\{0,1\}$ that could be implemented on $T$ is given by:
\begin{equation}\label{binary function formula}
\left|\mathcal{F}_{\rm{bin}}(T)\right|= \frac{2\times 6^n+8}{5}.    
\end{equation}
\end{corollary}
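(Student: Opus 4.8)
The plan is to prove the formula by induction on the number of leaves $n$, extracting from the disjoint union in Theorem \ref{discrete} a recursion for $a_n := \left|\mathcal{F}_{\rm{bin}}(T)\right|$. The base case is $n=1$, where $\mathcal{F}_{\rm{bin}}(T)=D_1$ consists of the four functions $0,1,x,x+1$, so $a_1=4=\frac{2\times 6+8}{5}$. For the inductive step I would write $T$ as the join of its left and right sub-trees $T_1,T_2$ with $n_1$ and $n_2=n-n_1$ leaves (where $n_1,n_2\geq 1$), and set $m_i:=a_{n_i}-2$, the number of \emph{non-constant} functions in $\mathcal{F}_{\rm{bin}}(T_i)$. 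Here I use that there are always exactly two constant functions, $0$ and $1$, present in any $\mathcal{F}_{\rm{bin}}(T_i)$, namely the last line of \eqref{description}, and that these are the only two constant functions on $\{0,1\}^{n_i}$.

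First I would count each of the six disjoint pieces of \eqref{description}. The injectivity and disjointness facts needed are precisely properties (a)--(f) established in the proof of Theorem \ref{discrete}: by (e) the maps $(F_1,F_2)\mapsto F_1F_2$ and $(F_1,F_2)\mapsto F_1F_2+1$ are injective on pairs of non-constant functions and land in non-constant functions, and all six families are pairwise disjoint. This yields $m_1m_2$ functions of the form $F_1F_2$, another $m_1m_2$ of the form $F_1F_2+1$, then $m_1$ of the form $F_1$, $m_2$ of the form $F_2$, and $2$ constants.

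The one piece requiring extra care is the third, $\{F_1+F_2 : F_1,F_2 \text{ non-constant},\ F_1(\mathbf{0})=0\}$, whose cardinality is (number of non-constant $F_1\in\mathcal{F}_{\rm{bin}}(T_1)$ with $F_1(\mathbf{0})=0$) times $m_2$, using the injectivity in (f). To evaluate the first factor I would invoke the involution $F\mapsto F+1$, which maps $\mathcal{F}_{\rm{bin}}(T_1)$ to itself (it negates the output at the root, as noted in the proof of Theorem \ref{discrete}), has no fixed point over $\Bbb{Z}_2$, preserves non-constancy, and interchanges the values $F(\mathbf{0})=0$ and $F(\mathbf{0})=1$. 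Hence exactly half of the $m_1$ non-constant functions vanish at $\mathbf{0}$, contributing $\tfrac{m_1}{2}\,m_2$ functions here. This parity argument is the crux of the computation; everything else is bookkeeping, and it is the step I expect to be the main obstacle, since it is the only place where the asymmetric constraint $F_1(\mathbf 0)=0$ in \eqref{description} forces an auxiliary count beyond the totals.

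Summing the six contributions gives the recursion
\begin{equation*}
a_n=m_1m_2+m_1m_2+\tfrac12 m_1m_2+m_1+m_2+2=\tfrac{5}{2}\,m_1m_2+m_1+m_2+2 ,
\end{equation*}
which is symmetric in $n_1,n_2$, as it must be. Finally I would substitute the inductive hypothesis $a_{n_i}=\frac{2\times 6^{n_i}+8}{5}$, equivalently $m_i=\frac{2(6^{n_i}-1)}{5}$, and simplify: expanding $\tfrac52 m_1m_2+m_1+m_2+2$ collapses the $6^{n_1}$ and $6^{n_2}$ terms and leaves $\frac{2\times 6^{n_1+n_2}+8}{5}$. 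Thus $a_n=\frac{2\times 6^n+8}{5}$ regardless of how $n$ splits as $n_1+n_2$; in particular $a_n$ depends only on $n$ and not on the shape of $T$, completing the induction.
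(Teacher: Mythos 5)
Your proof is correct and follows essentially the same route as the paper: induction on $n$ using the disjoint decomposition of Theorem \ref{discrete}, counting each of the six pieces via properties (a)--(f), and simplifying the resulting recursion $a_n=\tfrac52 m_1m_2+m_1+m_2+2$. The only difference is cosmetic: you justify the ``exactly half vanish at $\mathbf{0}$'' step explicitly via the fixed-point-free involution $F\mapsto F+1$, which the paper merely asserts.
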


\begin{proof}
\
The formula clearly works for the base cases $n=1,2$ where one gets $4,16$; i.e. the number of functions $\{0,1\}\rightarrow\{0,1\}$ or $\{0,1\}^2\rightarrow\{0,1\}$. To do the inductive step, suppose the formula holds for trees $T_1,T_2$ in Theorem \ref{discrete}. In the description of $\mathcal{F}_{\rm{bin}}(T)$ as a disjoint union in \eqref{description}, the cardinality of each subset can be easily calculated: For the first three subsets, observations (e) and (f) indicate that different pairs $\left(F_1=F_1(\mathbf{x}),F_2=F_2(\mathbf{y})\right)$ result in different functions $F=F(\mathbf{x},\mathbf{y})$. Therefore, both of the first two sets are of cardinality $\left(\big|\mathcal{F}_{\rm{bin}}(T_1)\big|-2\right).\left(\big|\mathcal{F}_{\rm{bin}}(T_2)\big|-2\right)$ where subtracting $2$ is for excluding constant functions. As for the third set on the right hand side of \eqref{description}, we need to divide by two as well since half of the functions in $\mathcal{F}_{\rm{bin}}(T_1)$ are zero at $\mathbf{0}$ and the other half are one. Adding up the sizes of the subsets appeared in this partition:
\small
\begin{equation*}
\begin{split}
\big|\mathcal{F}_{\rm{bin}}(T)\big|&=\left(\big|\mathcal{F}_{\rm{bin}}(T_1)\big|-2\right).\left(\big|\mathcal{F}_{\rm{bin}}(T_2)\big|-2\right)
+\left(\big|\mathcal{F}_{\rm{bin}}(T_1)\big|-2\right).\left(\big|\mathcal{F}_{\rm{bin}}(T_2)\big|-2\right)\\
&+\left(\frac{\big|\mathcal{F}_{\rm{bin}}(T_1)\big|-2}{2}\right).\left(\big|\mathcal{F}_{\rm{bin}}(T_2)\big|-2\right)+\left(\big|\mathcal{F}_{\rm{bin}}(T_1)\big|-2\right)+\left(\big|\mathcal{F}_{\rm{bin}}(T_2)\big|-2\right)\\
&+2=\frac{5}{2}\left(\big|\mathcal{F}_{\rm{bin}}(T_1)\big|-2\right).\left(\big|\mathcal{F}_{\rm{bin}}(T_2)\big|-2\right)+\big|\mathcal{F}_{\rm{bin}}(T_1)\big|+\big|\mathcal{F}_{\rm{bin}}(T_2)\big|-2.
\end{split}    
\end{equation*}
\normalsize
Substituting $\big|\mathcal{F}_{\rm{bin}}(T_1)\big|$ and $\big|\mathcal{F}_{\rm{bin}}(T_2)\big|$ from the induction hypothesis: 
\small
\begin{equation*}
\begin{split}
\big|\mathcal{F}_{\rm{bin}}(T)\big|&=\frac{5}{2}.\frac{2\times 6^{n_1}-2}{5}.\frac{2\times 6^{n_2}-2}{5}+\frac{2\times 6^{n_1}+8}{5}+\frac{2\times 6^{n_2}+8}{5}-2\\
&=\frac{\left(2\times 6^{n_1}-2\right).\left(6^{n_2}-1\right)}{5}+\frac{2\times 6^{n_1}+2\times 6^{n_2}+6}{5}\\
&=\frac{2\times 6^{n_1+n_2}+8}{5};
\end{split}
\end{equation*}
\normalsize
that is, \eqref{binary function formula} for the tree $T$ which has $n_1+n_2$ leaves.
\end{proof}
This result once again attests to a recurring theme of this paper: the subset of tree functions is very small compared to whole space of functions. For instance, for $n=4$ we have $2^{16}=65536$ functions but only $520$ of them are expressible in the trees. 
\begin{corollary}\label{sparse}
The size of the set of functions $\{0,1\}^n\rightarrow\{0,1\}$ that are tree functions for a labeled binary rooted tree with $n$ leaves is $O\left(24^n\right)$ and is thus $o\left(2^{2^n}\right)$.
\end{corollary}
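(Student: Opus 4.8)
The plan is to realize the set in question as a finite union of the individual tree function spaces $\mathcal{F}_{\rm{bin}}(T)$ and then to combine two ingredients: the enumeration of binary rooted trees with $n$ leaves, and the per-tree count already supplied by Corollary \ref{formula}. Concretely, a bit-valued function of $x_1,\dots,x_n$ is a tree function precisely when it lies in $\mathcal{F}_{\rm{bin}}(T)$ for some binary rooted tree $T$ with $n$ leaves, so the object we must bound is $\bigcup_T \mathcal{F}_{\rm{bin}}(T)$, and by subadditivity of cardinality its size is at most $\sum_T \big|\mathcal{F}_{\rm{bin}}(T)\big|$.

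First I would control the number of summands. The binary rooted trees with $n$ leaves are enumerated by the Catalan number $C_{n-1}=\frac{1}{n}\binom{2n-2}{n-1}$, and the crude estimate $\binom{2m}{m}\le 4^{m}$ gives $C_{n-1}\le 4^{\,n-1}$. Next I would control each summand: by Corollary \ref{formula} every term equals $\frac{2\times 6^{n}+8}{5}$, which is at most $2\cdot 6^{n}$ for $n\ge 1$. Multiplying the two bounds yields
\[
\Big|\bigcup_T \mathcal{F}_{\rm{bin}}(T)\Big|\;\le\; 4^{\,n-1}\cdot 2\cdot 6^{n}\;=\;\tfrac{1}{2}\,24^{n}\;=\;O\!\left(24^{n}\right),
\]
which is the first assertion.

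For the second assertion I would pass to logarithms: since $\log_2\!\left(24^{n}\right)=n\log_2 24$ grows only linearly in $n$ while $\log_2\!\left(2^{2^{n}}\right)=2^{n}$ grows exponentially, we have $n\log_2 24=o(2^{n})$ and hence $24^{n}/2^{2^{n}}=2^{\,n\log_2 24-2^{n}}\to 0$. Thus $O(24^{n})=o\!\left(2^{2^{n}}\right)$, completing the proof.

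I expect the only genuinely delicate point to be the bookkeeping in the first step, namely fixing the correct enumeration of the underlying trees and making sure the labeling of the leaves by $x_1,\dots,x_n$ is accounted for consistently with the per-tree count of Corollary \ref{formula}; everything after that is the routine union bound together with the elementary asymptotic comparison $24^{n}=o\!\left(2^{2^{n}}\right)$. Note that the qualitative sparsity conclusion is robust: even a more generous tallying of the admissible trees is still dwarfed by the doubly-exponential $2^{2^{n}}$, so the bound $o\!\left(2^{2^{n}}\right)$ persists regardless of the precise constant in the tree count.
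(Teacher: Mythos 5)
Your overall strategy is the same as the paper's: bound the set in question by (number of admissible trees) times (size of each $\mathcal{F}_{\rm{bin}}(T)$), with the second factor supplied by Corollary \ref{formula}. The gap sits exactly where you flag the delicate point: the enumeration of the trees. The Catalan number $C_{n-1}$ counts \emph{plane} (ordered) binary trees with $n$ \emph{unlabeled} leaves, whereas the corollary quantifies over leaf-\emph{labeled} binary rooted trees, i.e.\ over the set ${\rm{Tree}}_n$ of Definition \ref{tree set}. Distinct labelings of the same shape give genuinely different function spaces (this is the content of Example \ref{4var-2}, where two labelings of the symmetric shape on four leaves are at distance $\approx 0.80$), so all labelings must be counted. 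The correct count is $\left|{\rm{Tree}}_n\right|=(2n-3)!!=C_{n-1}\,n!/2^{n-1}$ (for instance $15$ when $n=4$, as in Figure \ref{fig:metric''}), and this is super-exponential in $n$; in particular it is not $O\left(4^{n-1}\right)$, so your product bound $\tfrac{1}{2}\cdot 24^{n}$ does not follow. It is not merely ``the precise constant in the tree count'' that is at stake, but the entire exponential base.

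In fairness, the paper's own proof has the same defect: it asserts that the number of leaf-labeled trees is $O\left(4^n\right)$ on the grounds that such a tree has $2n-1$ vertices and that the number of graphs on a prescribed vertex set of size $2n-1$ is $2^{2n-1}$, which is not a valid count either (the number of graphs on $m$ labeled vertices is $2^{\binom{m}{2}}$, and the internal vertices of $T$ carry no labels in any case). What survives in both arguments is the qualitative conclusion you note at the end: $(2n-3)!!\cdot 6^{n}=2^{O(n\log n)}$, which is still $o\left(2^{2^{n}}\right)$, so your proof of the sparsity assertion is sound once the tree count is corrected. The stated $O\left(24^{n}\right)$ bound, however, would need either a super-exponential replacement or a genuinely different argument, e.g.\ one exploiting the overlaps among the various $\mathcal{F}_{\rm{bin}}(T)$ rather than the crude union bound.
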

\begin{proof}
The number of bit-valued functions corresponding to any binary rooted tree with $n$ leaves is $O\left(6^n\right)$ by Corollary \ref{formula}. The number of such trees with leaves labeled by $x_1,\dots,x_n$ is clearly $O\left(4^n\right)$: It is not hard to see that as a graph it has $2n-1$ vertices and on the other hand, the number of graphs whose vertices are prescribed as a set of size $2n-1$ is $2^{2n-1}$.   
\end{proof}

The rest of this section is devoted to discrete versions of the constraints we have been working with in previous subsections and their necessity and sufficiency. The ambient space $D_n$ of functions $\{0,1\}^n\rightarrow\{0,1\}$ in \eqref{binary functions} can be identified with the following space of polynomials 
\begin{equation}\label{polynomial space}
\left\{\sum_{S\subseteq \{1,\dots,n\}}\epsilon(S)\prod_{s\in S}x_s\,\big|\epsilon(S)\in\{0,1\}\right\}; 
\end{equation}
that is, polynomials with binary coefficients on $n$ indeterminates whose degree w.r.t. every indeterminate is not larger than one.
Notice that \eqref{temp2} again furnishes us with a necessary condition for such a polynomial to be represented on a tree $T$ as one can differentiate formally in the ring $\Bbb{Z}_2\left[x_1,\dots,x_n\right]$: for any triple of leaves $x_i,x_j,x_l$ as in Theorem \ref{main}, the identity
\begin{equation}\label{discrete differentiation}
\frac{\partial^2P}{\partial x_i\partial x_l}.\frac{\partial P}{\partial x_j}=\frac{\partial^2P}{\partial x_j\partial x_l}.\frac{\partial P}{\partial x_i}    
\end{equation}
must hold.
Hence \eqref{non-example} again serves as a non-example; that is, a binary function of three variables with no tree presentation. 

\begin{remark}\label{technical}
A diligent reader with mathematical background may find \eqref{discrete differentiation} and the derivatives therein problematic since we are moving back and forth between polynomials with coefficients in $\Bbb{Z}_2$ and functions with binary inputs; and over a finite field one can easily find pair of polynomials whose values at every single point coincide without being the same in the polynomial sense (which is to have the same coefficients). To address this concern, recall that the space $D_n$ of functions $\{0,1\}^n\rightarrow\{0,1\}$ has been identified with the space of $n$-variate polynomials in \eqref{polynomial space} via thinking of polynomials as functions $\{0,1\}^n\rightarrow\{0,1\}$ by evaluating them at binary vectors of length $n$, and this is an one-to-one correspondence. It is not hard to see that given polynomials $P_1(x_1,\dots,x_s)$ and $P_2(x_{s+1},\dots,x_n)$ of this form and a function $g:\{0,1\}^2\rightarrow\{0,1\}$ -- which by \eqref{two variable} can always be realized as a polynomial of the same from but on two indeterminates  -- the $n$-variate polynomial
$$
P(x_1,\dots,x_n)=g\left(P_1(x_1,\dots,x_s),P_2(x_{s+1},\dots,x_n)\right)
$$
is also of the form appeared in \eqref{polynomial space}. That is to say, these families of polynomials are closed under tree superpositions. We conclude that the presentation of a binary function $P\in\mathcal{F}_{\rm{bin}}(T)$ as a composition of bivariate functions $\{0,1\}^2\rightarrow\{0,1\}$ is indeed a bona fide presentation of an $n$-variate polynomial in terms of bivariate polynomials since once two polynomials from \eqref{polynomial space} give rise to same functions, they coincide as polynomials too. It makes sense to formally take derivatives when we have an equality of polynomials over $\Bbb{Z}_2$, hence \eqref{discrete differentiation} holds for any triple of variables $x_i,x_j,x_l$ with $x_l$ the outsider of the triple.
\end{remark}

\begin{remark}
There is furthermore a discrete interpretation of differentiation in this context: Given an $n$-variate polynomial
$$
P(\mathbf{x})=P(x_1,\dots,x_n)
$$
from \eqref{polynomial space}, $\frac{\partial F}{\partial x_i}(\mathbf{x})$ is the same as 
$$P(x+\mathbf{e}_i)-P(\mathbf{x})=P(x_1,\dots,x_{i-1},x_i+1,x_{i+1},\dots,x_n)-P(x_1,\dots,x_n)$$ 
where $\mathbf{e}_i$ is the vector from the standard basis whose $i^{\rm{th}}$ entry is $1$ and the rest are $0$; and of course there is no difference between addition and subtraction modulo two. Consequently, we arrive at the following binary version of the constraints \eqref{discrete differentiation}:  
\small
\begin{equation}\label{discrete constraint}
\begin{split}
&\left[P(\mathbf{x}+\mathbf{e}_i+\mathbf{e}_l)+P(\mathbf{x}+\mathbf{e}_i)+P(\mathbf{x}+\mathbf{e}_l)+P(\mathbf{x})\right].\left[P(\mathbf{x}+\mathbf{e}_j)+P(\mathbf{x})\right]\\
&=\left[P(\mathbf{x}+\mathbf{e}_j+\mathbf{e}_l)+P(\mathbf{x}+\mathbf{e}_j)+P(\mathbf{x}+\mathbf{e}_l)+P(\mathbf{x})\right].\left[P(\mathbf{x}+\mathbf{e}_i)+P(\mathbf{x})\right];
\end{split}
\end{equation}
\normalsize
for any triple of variables $(x_i,x_j,x_l)$ in which $x_l$ is an outsider.
\end{remark}

We conclude the subsection by showing that the constraints on the partial derivatives are sufficient in the binary context too.  
\begin{theorem}\label{main binary}
Let $T$ be a rooted tree with $n$ leaves and $P(x_1,\dots,x_n)$ a polynomial in the form of \eqref{polynomial space}; i.e. a polynomial of $n$ variables over $\Bbb{Z}_2$ whose degree with respect to each $x_i$ is one. 
Thinking of $P$ as a function $\{0,1\}^n\rightarrow\{0,1\}$, it belongs to $\mathcal{F}_{\rm{bin}}(T)$ if and only if for any three leaves $x_i,x_j,x_l$ of $T$ with $x_i,x_j$ in a rooted sub-tree that does not have $x_l$ the identity \eqref{discrete differentiation} holds.
\end{theorem}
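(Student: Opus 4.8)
Necessity is essentially contained in the discussion preceding the statement: the chain rule is valid for formal differentiation of polynomials over $\Bbb{Z}_2$, and by Remark \ref{technical} a tree superposition of polynomials of the form \eqref{polynomial space} is again of that form, so \eqref{discrete differentiation} is a genuine polynomial identity obtained by differentiating a representation $P=g(\dots)$ exactly as in \S\ref{three-variable-example}. The plan is therefore to prove sufficiency by an induction on $n$ parallel to the proof of Theorem \ref{main}, but replacing the analytic ``coordinate change and Taylor expansion'' with purely algebraic tools. Remove the root of $T$ to get the left and right subtrees $T_1,T_2$ with leaf variables $\tilde{\mathbf{x}}=(x_1,\dots,x_s)$ and $\tilde{\tilde{\mathbf{x}}}=(x_{s+1},\dots,x_n)$, and for a right-point $\mathbf{c}\in\{0,1\}^{n-s}$ write the multilinear slice $P_{\mathbf{c}}(\tilde{\mathbf{x}}):=P(\tilde{\mathbf{x}},\mathbf{c})$. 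The goal is to produce $F_1\in\mathcal{F}_{\rm{bin}}(T_1)$, $F_2\in\mathcal{F}_{\rm{bin}}(T_2)$ and a bivariate $g$ with $P=g(F_1,F_2)$, which by \eqref{description} places $P\in\mathcal{F}_{\rm{bin}}(T)$; this single argument will also cover the case when $T_2$ is a lone leaf.

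The first ingredient is the formal version of Lemma \ref{only one constraint}: its inductive proof carries over verbatim to $\Bbb{Z}_2[x_1,\dots,x_n]$, where at each step one either divides inside the fraction field of this domain (this replaces the pointwise ``non-vanishing'' hypothesis) or notes that both sides vanish. This gives, for left indices $i,j\le s$ and any $U\subseteq\{s+1,\dots,n\}$,
\[
\partial_{x_i}\partial_U P\cdot\partial_{x_j} P=\partial_{x_j}\partial_U P\cdot\partial_{x_i} P,\qquad \partial_U:=\prod_{l\in U}\partial_{x_l}.
\]
The second ingredient is a discrete ``two-point'' constraint. Since $P$ is multilinear it equals its own finite Taylor expansion around any right-point, $\partial_{x_j} P(\tilde{\mathbf{x}},\mathbf{c}')=\sum_{U}\partial_{x_j}\partial_U P(\tilde{\mathbf{x}},\mathbf{c})\prod_{l\in U}(c'_l+c_l)$; substituting this into $\partial_{x_i}P_{\mathbf{c}}\cdot\partial_{x_j}P_{\mathbf{c}'}$ and applying the first ingredient term-by-term re-sums the series and yields the analogue of the strong constraint \eqref{strong constraint},
\[
\partial_{x_i} P_{\mathbf{c}}\cdot\partial_{x_j} P_{\mathbf{c}'}=\partial_{x_j} P_{\mathbf{c}}\cdot\partial_{x_i} P_{\mathbf{c}'}\qquad(i,j\le s;\ \mathbf{c},\mathbf{c}'\in\{0,1\}^{n-s}).
\]

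The third and crucial ingredient is a \emph{functional dependence lemma}: if $A,B\in\Bbb{Z}_2[x_1,\dots,x_m]$ are multilinear and $\partial_{x_i}A\cdot\partial_{x_j}B=\partial_{x_j}A\cdot\partial_{x_i}B$ for all $i,j$, then one of $A,B$ is constant or $A+B$ is constant; equivalently $A,B\in\{0,1,\Phi,\Phi+1\}$ for a common $\Phi$. Assuming both non-constant, the sets of variables actually occurring in $A$ and in $B$ must coincide, since otherwise the rank-one relation forces a product of two nonzero polynomials to vanish in the domain $\Bbb{Z}_2[\mathbf{x}]$; over the fraction field the two gradient rows are then proportional with a single ratio $\lambda=\partial_{x_i}A/\partial_{x_i}B$, and because $\partial_{x_i}A$ and $\partial_{x_i}B$ are free of $x_i$ while this holds for every occurring $i$, unique factorization forces $\lambda$ to be free of every occurring variable, hence $\lambda\in\Bbb{Z}_2\setminus\{0\}=\{1\}$ and $\partial_{x_i}(A+B)=0$ for all $i$. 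I expect this to be the main obstacle: in characteristic two ``parallel gradients'' is far more rigid than in the analytic case and must be exploited through the unique-factorization structure of $\Bbb{Z}_2[\mathbf{x}]$ rather than by analytic continuation, and one must carefully separate the genuinely constant slices from the non-constant ones.

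Assembling: the two-point constraint lets me apply the lemma to every pair of slices. If no slice is non-constant then $P$ depends only on $\tilde{\tilde{\mathbf{x}}}$ and the induction applied to $T_2$ finishes the proof; otherwise fix a non-constant $F_1:=P_{\mathbf{c}_0}$, so every slice lies in $\{0,1,F_1,F_1+1\}$. Writing $P_{\mathbf{c}}=G(\mathbf{c})F_1+H(\mathbf{c})$ with $G,H\in\Bbb{Z}_2[\tilde{\tilde{\mathbf{x}}}]$ and using the bijection \eqref{polynomial space} between multilinear polynomials and functions promotes this to the global identity $P=F_1\,G+H$. The symmetric analysis of right-slices gives a non-constant $F_2\in\Bbb{Z}_2[\tilde{\tilde{\mathbf{x}}}]$ with all right-slices in $\{0,1,F_2,F_2+1\}$; since this set is closed under addition, $H$ and $G+H$ (hence $G$) lie in $\mathrm{Span}_{\Bbb{Z}_2}\{1,F_2\}$, whence $P=\alpha F_1F_2+\beta F_1+\gamma F_2+\delta=g(F_1,F_2)$. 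Evaluating the triples in \eqref{discrete differentiation} at $\tilde{\tilde{\mathbf{x}}}=\mathbf{c}_0$ (respectively at a left-point) shows $F_1$ (respectively $F_2$) inherits the constraints of $T_1$ (respectively $T_2$), so the induction hypothesis gives $F_1\in\mathcal{F}_{\rm{bin}}(T_1)$ and $F_2\in\mathcal{F}_{\rm{bin}}(T_2)$, and by \eqref{description} we conclude $P\in\mathcal{F}_{\rm{bin}}(T)$. The degenerate cases where some of $F_1,F_2,\alpha,\beta,\gamma,\delta$ vanish reproduce exactly the remaining summands of the disjoint union \eqref{description}.
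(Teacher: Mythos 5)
Your proof is correct, and it shares the paper's overall skeleton --- induction on the tree, splitting at the root into $T_1,T_2$, and forcing $P$ into the bilinear form $g(F_1,F_2)$ with $F_1,F_2$ obtained by specializing the complementary block of variables --- but the middle of the argument is executed along a genuinely different route. The paper expands $P=\sum_U P_U(\tilde{\mathbf{x}})\prod_{u\in U}x_u$ in monomials of the right variables, equates coefficients in the formal version of Lemma \ref{only one constraint} to get that all the $P_U$ have pairwise ``proportional'' gradients, factors out to reach $P=QR+\tilde{R}$, and then runs a second round of the constraints with a carefully chosen multi-index $U'$ satisfying $\partial^{|U'|}Q/\prod_{u'\in U'}\partial x_{u'}\equiv 1$ to force $\tilde{R}=\tilde{\epsilon}R+\delta$. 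You instead work with point-slices $P_{\mathbf{c}}$, derive a discrete analogue of the ``strong constraint'' \eqref{strong constraint} relating two different slices, and package the rigidity into an explicit functional-dependence lemma, then close the argument by the symmetric slice analysis on the right together with additive closure of $\mathrm{Span}_{\Bbb{Z}_2}\{1,F_2\}$; since slices and monomial coefficients are related by an invertible (M\"obius-type) transform, the two decompositions carry the same information. What your version buys is that the lemma makes fully explicit a step the paper states rather tersely (``gradient vectors \dots are mutually linearly dependent \dots Consequently, any two of them differ solely by their constant terms''): your unique-factorization argument showing the proportionality ratio is free of every occurring variable, hence equals $1$, is exactly the justification that claim needs, and it replaces the paper's second, more ad hoc application of the constraints by a single reusable statement. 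What the paper's version buys is that it never leaves the polynomial ring --- no two-point identities are needed, since everything is read off from coefficients --- which keeps the discrete proof structurally closer to the analytic one in \S\ref{the proof}. Both treatments handle the degenerate cases (constant slices, a sub-tree reduced to a single leaf) and the final verification that $F_1,F_2$ inherit the constraints of $T_1,T_2$ in the same way, by evaluating the global identities at suitable points.
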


\begin{proof}
The necessity has been discussed before in this section and we focus on the sufficiency of constraints \eqref{discrete differentiation}. This is going to be achieved by the usual inductive argument. 
The base case  where $T$ has just one or two leaves is clear because every function can be realized as a polynomial (cf. \eqref{two variable}) and the condition \eqref{discrete differentiation} is automatic. 
For a general $T$, denote the sub-trees to the left and right of the root by $T_1$ and $T_2$. Suppose their leaves are labeled as 
$$
\tilde{\mathbf{x}}:=(x_1,\dots,x_s)\quad\tilde{\tilde{\mathbf{x}}}:=(x_{s+1},\dots,x_n).
$$
It is clear that for any two tree functions $P_i\in\mathcal{F}_{\rm{bin}}(T_i)$ ($i\in\{1,2\}$) the functions 
\begin{equation}\label{sum form}
(\tilde{\mathbf{x}},\tilde{\tilde{\mathbf{x}}})\mapsto P_1(\tilde{\mathbf{x}})+P_2(\tilde{\tilde{\mathbf{x}}}) 
\end{equation}
or
\begin{equation}\label{product form 1}
(\tilde{\mathbf{x}},\tilde{\tilde{\mathbf{x}}})\mapsto P_1(\tilde{\mathbf{x}})P_2(\tilde{\tilde{\mathbf{x}}}) 
\end{equation}
or
\begin{equation}\label{product form 2}
(\tilde{\mathbf{x}},\tilde{\tilde{\mathbf{x}}})\mapsto P_1(\tilde{\mathbf{x}})P_2(\tilde{\tilde{\mathbf{x}}})+1
\end{equation}
can be represented on the original tree $T$ as they amount to assigning  $(a,b)\mapsto a+b$, $(a,b)\mapsto ab$ or  $(a,b)\mapsto ab+1$ to the root. Thus it suffices to argue that conversely, every 
$P\in\mathcal{F}_{\rm{bin}}(T)$ has such a presentation for  polynomials $P_1=P_1(x_1,\dots,x_s)$ and $P_2=P_2(x_{s+1},\dots,x_n)$ of the same form but on a lesser number of indeterminates that belong to $\mathcal{F}_{\rm{bin}}(T_1)$ and $\mathcal{F}_{\rm{bin}}(T_2)$ respectively.
Write $P(x_1,\dots,x_n)$ as
\begin{equation}\label{auxiliary 8}
P(x_1,\dots,x_s;x_{s+1},\dots,x_n)=\sum_{U\subseteq\{s+1,\dots,n\}}P_{U}(x_1,\dots,x_s)\prod_{u\in U}x_u. 
\end{equation}
Invoking the generalization of \eqref{discrete differentiation} provided by Lemma \ref{only one constraint}, for any two $1\leq i,j\leq s$ and any non-empty subset $U_0\subseteq\{s+1,\dots,n\}$ of the second half of indices, we have 
$$
\frac{\partial^{1+|U_0|}P}{\partial x_i\prod_{u\in U_0}\partial x_u}.\frac{\partial P}{\partial x_j}
=\frac{\partial^{1+|U_0|}P}{\partial x_j\prod_{u\in U_0}\partial x_u}.\frac{\partial P}{\partial x_i};
$$
because $x_i,x_j$ are to the ``left'' of the node while $x_u$'s lie to the ``right''. In view of expression \eqref{auxiliary 8}, this implies
$$
\frac{\partial P_{U_0}}{\partial x_i}.\left(\sum_{U\subseteq\{s+1,\dots,n\}}\frac{\partial P_{U}}{\partial x_j}\prod_{u\in U}x_u\right)
=\frac{\partial P_{U_0}}{\partial x_j}.\left(\sum_{U\subseteq\{s+1,\dots,n\}}\frac{\partial P_{U}}{\partial x_i}\prod_{u\in U}x_u\right).
$$
(Keep in mind that $P_U$'s are dependent only on the first $s$ indeterminates.) Equating the coefficients of each $\prod_{u\in U}x_u$ in both sides results in:
$$
\frac{\partial P_{U_0}}{\partial x_i}.\frac{\partial P_{U}}{\partial x_j}=\frac{\partial P_{U_0}}{\partial x_j}.\frac{\partial P_{U}}{\partial x_i}\quad (\forall U\subseteq\{s+1,\dots,n\} \text{ and }\forall 1\leq i,j\leq s).
$$
This simply means that gradient vectors $\left[\frac{\partial P_U}{\partial x_i}\right]_{1\leq i\leq s}$ of polynomials $P_U=P_U(x_1,\dots,x_s)$ are mutually linearly dependent as 
$U$ varies among subsets of $\left\{s+1,\dots,n\right\}$. Consequently, any two of them differ solely by their constant terms. Taking the constant term of $P_U$'s out of the summation in \eqref{auxiliary 8} and factoring out, we rewrite the equality as 
\begin{equation}\label{auxiliary 9}
P(x_1,\dots,x_s;x_{s+1},\dots,x_n)=Q(x_1,\dots,x_s)R(x_{s+1},\dots,x_n)+\tilde{R}(x_{s+1},\dots,x_n);
\end{equation}
where $Q(\overbrace{0,\dots,0}^{s\text{ times}})=0$. If $Q\equiv 0 \text{ or } 1$, $P$ is dependent only on the last $n-s$ indeterminates and hence is already in the from of \eqref{sum form}. Otherwise, a high enough partial derivative of it would be $1$. Reversing the roles of indices larger than $s$ and those not greater than $s$, pick a subset $U'\subseteq\{1,\dots,s\}$ with 
$$
\frac{\partial^{|U'|}Q}{\prod_{u'\in U'}\partial x_{u'}}\equiv 1
$$
and indices $s<i',j'\leq n$. Plugging \eqref{auxiliary 9} in  
$$
\frac{\partial^{1+|U'|}P}{\partial x_{i'}\prod_{u'\in U'}\partial x_{u'}}.\frac{\partial P}{\partial x_{j'}}
=\frac{\partial^{1+|U'|}P}{\partial x_{j'}\prod_{u'\in U'}\partial x_{u'}}.\frac{\partial P}{\partial x_{i'}}
$$
then results in
$$
\frac{\partial R}{\partial x_{i'}}.\left(Q\frac{\partial R}{\partial x_{j'}}+\frac{\partial \tilde{R}}{\partial x_{j'}}\right)
=\frac{\partial R}{\partial x_{j'}}.\left(Q\frac{\partial R}{\partial x_{i'}}+\frac{\partial \tilde{R}}{\partial x_{i'}}\right).
$$
Simplifying this, we arrive at
$$
\frac{\partial R}{\partial x_{i'}}.\frac{\partial \tilde{R}}{\partial x_{j'}}=\frac{\partial R}{\partial x_{j'}}.\frac{\partial \tilde{R}}{\partial x_{i'}}
$$
for any two indices $i',j'>s$. By the same argument as before, we conclude that up to a constant, polynomials 
$R(x_{s+1},\dots,x_n)$ and $\tilde{R}(x_{s+1},\dots,x_n)$ are scalar multiples of each other. Hence either $R\equiv 0,1$ 
-- in which case $P$ would be the sum of a polynomial of $x_1,\dots,x_s$ and a polynomial of $x_{s+1},\dots,x_n$ and hence in the form of \eqref{sum form} -- 
or there are  $\tilde{\epsilon},\delta\in\{0,1\}$ for which 
$\tilde{R}=\tilde{\epsilon}R+\delta$. Substituting in \eqref{auxiliary 9}:
$$
P(x_1,\dots,x_s;x_{s+1},\dots,x_n)=\left(Q(x_1,\dots,x_s)+\tilde{\epsilon}\right)R(x_{s+1},\dots,x_n)+\delta.
$$
So we have a presentation such as \eqref{product form 1} when $\delta=0$ and a presentation such as \eqref{product form 2} when $\delta=1$.

The final step would be to argue that in either of the assignments \eqref{sum form}, \eqref{product form 1} or \eqref{product form 2} $P_1$ and $P_2$ could be chosen from $\mathcal{F}_{\rm{bin}}(T_1)$ and $\mathcal{F}_{\rm{bin}}(T_2)$ respectively. By the induction hypothesis, that is the case if they satisfy the constraint similar to \eqref{discrete differentiation} but for $T_1$ and $T_2$. In the latter two presentations, if one of them is identically zero, the other may be chosen to be zero as well and of course the constant function zero can be implemented on any tree. So suppose in \eqref{product form 1} and \eqref{product form 2} each of $P_1$ and $P_2$ is non-zero for certain inputs $\tilde{\mathbf{x}}_0\in\{0,1\}^s$ or $\tilde{\tilde{\mathbf{x}}}_0\in\{0,1\}^{n-s}$. Pick $\tilde{\mathbf{x}}_0,\tilde{\tilde{\mathbf{x}}}_0$ arbitrarily in the case of \eqref{sum form}. Now evaluating  at such points gives expressions of $P_1$ and $P_2$ as
\begin{equation*}
\begin{split}
&P_1(x_1,\dots,x_s)= P(x_1,\dots,x_s;\tilde{\tilde{\mathbf{x}}}_0)+\alpha\\
&P_2(x_{s+1},\dots,x_n)= P(\tilde{\mathbf{x}}_0;x_{s+1},\dots,x_n)+\beta;   
\end{split}
\end{equation*}
for some appropriate $\alpha,\beta\in\{0,1\}$. Thus $P_1$ and $P_2$ must satisfy \eqref{discrete differentiation} once the leaves $x_i,x_j,x_l$ there are all from the left sub-tree $T_1$ or from the right sub-tree $T_2$. 
\end{proof}

\subsection{A metric on the set of labeled binary trees}\label{distance}
The spaces of discrete functions associated with binary trees in \S\ref{discrete-tree} could be employed to quantify how different two such trees are. It must be mentioned that the difference is not merely caused by combinatorially distinct underlying graphs, but different labeling schemes may also give rise to different function spaces; keep in mind that in our context binary trees always come with leaves labeled by coordinate functions. Hence we are dealing with the following set:

\begin{definition}\label{tree set}
By ${\rm{Tree}}_n$ we denote the set of all binary rooted trees with $n$ terminals labeled by $x_1,\dots,x_n$ modulo isomorphism of rooted trees. 
\end{definition}
Modding out by isomorphisms is because operations such as swapping labels of two sibling leaves leave the function space unchanged. By abuse of notation, we denote both a binary rooted tree and its class in ${\rm{Tree}}_n$ by the same symbol $T$. Figure \ref{fig:metric''} illustrates elements of ${\rm{Tree}}_4$ where labels come from the set $\{x,y,z,w\}$ rather than $\{x_1,x_2,x_3,x_4\}$.  

Symmetric difference of discrete functions spaces furnishes ${\rm{Tree}}_n$ with a natural metric:

\begin{definition}
The distance between two trees, $T_1$ and $T_2$, is defined by the symmetric difference metric:
\begin{equation}
\begin{split}\label{tree metric}
{\rm{d}}(T_1,T_2)=&\frac{1}{\big|\mathcal{F}_{\rm{bin}}(T_1)\big|+\big|\mathcal{F}_{\rm{bin}}(T_2)\big|}.\big|\mathcal{F}_{\rm{bin}}(T_1)\Delta\mathcal{F}_{\rm{bin}}(T_2)\big|\\
&=\frac{1}{\big|\mathcal{F}_{\rm{bin}}(T_1)\big|}.\big|\mathcal{F}_{\rm{bin}}(T_1)-\mathcal{F}_{\rm{bin}}(T_2)\big|\\
&=\frac{1}{\big|\mathcal{F}_{\rm{bin}}(T_2)\big|}.\big|\mathcal{F}_{\rm{bin}}(T_2)-\mathcal{F}_{\rm{bin}}(T_1)\big|.
\end{split}
\end{equation}
\end{definition}
Keep in mind that the sizes of $\mathcal{F}(T_1)$ and $\mathcal{F}(T_2)$ are the same and dependent only on $n$ by the virtue of Corollary \ref{formula}.

Each constraint imposed in \eqref{temp2} or \eqref{discrete differentiation} on elements of $\mathcal{F}(T)$ or $\mathcal{F}_{\rm{bin}}(T)$ reveals something about the combinatorics of the tree $T$: the most immediate common ancestor of leaves $x_i,x_j$ is not an ancestor of the leaf $x_l$. The proposition below shows that one can indeed reconstruct $T$ from the knowledge of function spaces $\mathcal{F}_{\rm{bin}}(T)$ or 
$\mathcal{F}(T)$. This result is interesting in its own sake as Corollary \ref{formula} says that the cardinality of $\mathcal{F}_{\rm{bin}}(T)$ is only dependent on the number of leaves of $T$; or in the analytic setting, in the sense of Proposition \ref{binomial growth} the number of algebraically independent constraints has nothing to do with the morphology of the tree $T$ with $n$ leaves. 

\begin{proposition}\label{distinguish}
A rooted binary $T$ with $n$ terminals could be recovered from the corresponding set $\mathcal{F}_{\rm{bin}}(T)$ of bit-valued functions $\{0,1\}^n\rightarrow\{0,1\}$. The same is true for the function space 
$\mathcal{F}(T)$.
\end{proposition}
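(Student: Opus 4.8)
The plan is to show that the set $\mathcal{F}_{\rm{bin}}(T)$ (respectively $\mathcal{F}(T)$) determines, for every triple of leaves, which one is the \emph{outsider} in the sense of \S\ref{tree functions}, and then to invoke the fact that a rooted binary tree is uniquely reconstructible from the collection of its rooted triples (the assignment triple $\mapsto$ outsider). Thus the recovery splits into a ``reading off the triples'' step and a purely combinatorial ``triples determine the tree'' step.

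For the first step, fix three leaves $x_i,x_j,x_l$ and regard as data the three Boolean predicates ``every function in $\mathcal{F}_{\rm{bin}}(T)$ satisfies the constraint \eqref{discrete differentiation} with $x_l$ (resp. $x_i$, resp. $x_j$) playing the distinguished role''. I claim that exactly one of these holds, namely the one whose distinguished leaf is the true outsider. That the true outsider's predicate holds is precisely the necessity direction already recorded in Theorem \ref{main binary} (Theorem \ref{main} in the analytic case). To rule out the other two I exhibit a single witness in the space: assuming $x_l$ is the true outsider, put $F:=x_ix_j+x_l$. Since $x_l$ is outsider there is a rooted subtree $S$ containing $x_i,x_j$ but not $x_l$; inside $S$ I build $x_ix_j$ by bringing $x_i$ and $x_j$ up to the two children of their most recent common ancestor using the projections $(u,v)\mapsto u$ and $(u,v)\mapsto v$ and multiplying there, I carry this value up to the node where $x_l$'s subtree joins (again by projections), combine it with $x_l$ by addition $(u,v)\mapsto u+v$, and propagate the result to the root by projections; all of these are admissible node functions since $D_2$ comprises all maps $\{0,1\}^2\to\{0,1\}$. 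Hence $F\in\mathcal{F}_{\rm{bin}}(T)$. A one-line formal computation in $\Bbb{Z}_2$ gives $F_{x_ix_l}=F_{x_jx_l}=0$, $F_{x_ix_j}=1$, $F_{x_i}=x_j$, $F_{x_j}=x_i$, $F_{x_l}=1$, so the $x_l$-constraint reads $0=0$ while both the $x_i$- and $x_j$-constraints read $1=0$, which fails. Thus $F$ violates the two wrong predicates, proving the claim, and the outsider of $\{x_i,x_j,x_l\}$ is read off as the unique distinguished leaf whose predicate holds.

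For the second step I reconstruct $T$ by induction on $n$, the cases $n\le 2$ being trivial. Given the outsider of every triple, define $x_i\approx x_j$ to mean that some third leaf $x_l$ is the outsider of $\{x_i,x_j,x_l\}$. If $L$ and $R$ are the leaf sets of the two subtrees hanging from the root, a short case analysis shows that $x_i\approx x_j$ holds exactly when $x_i,x_j$ lie on the same side: if they share a side, any leaf on the opposite side is an outsider of the triple; if they lie on opposite sides, then for every third leaf the outsider is whichever of $x_i,x_j$ is separated from it, never that third leaf. Hence $\approx$ has exactly the two classes $L,R$, which recovers the top bipartition. Restricting the triple data to $L$ (resp. $R$) yields exactly the triple data of the left (resp. right) subtree, since the relevant common ancestors stay inside that subtree, so the induction hypothesis reconstructs both subtrees and hence $T$.

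The analytic statement is identical: necessity comes from Theorem \ref{main}, the same witness $F=x_ix_j+x_l$ is analytic and is built on $T$ from the analytic node functions projection, multiplication and addition, and the same derivative computation (now over $\Bbb{R}$) shows it violates the two competing constraints. The main obstacle is the first step, and within it the witness construction: one must be sure the distinguishing ternary function can be embedded into the \emph{whole} tree — which the projection--propagation argument secures — and that its violation of the competing constraints is genuine, which the explicit computation provides.
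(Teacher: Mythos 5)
Your proposal is correct and follows essentially the same route as the paper: both arguments recover the outsider of each triple of leaves by pairing the necessity of the constraints (Theorem \ref{main binary}, resp.\ Theorem \ref{main}) with an explicit witness in $\mathcal{F}_{\rm{bin}}(T)$ that violates the two competing constraints, and then note that the outsider data determines the tree. The only differences are cosmetic: you use the witness $x_ix_j+x_l$ where the paper uses $\left(x_i+x_j\right)x_l$, and you spell out the projection-based embedding and the inductive ``triples determine the tree'' step that the paper leaves implicit.
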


\begin{proof}
As usual, we label the terminals of $T$ by $x_1,\dots,x_n$ so that elements of $\mathcal{F}_{\rm{bin}}(T)$ can be thought of as polynomials in $\Bbb{Z}_2\left[x_1,\dots,x_n\right]$ whose degrees w.r.t. every indeterminate is at most one. Given any three different leaves $x_{i_s}\, (s\in\{1,2,3\})$, there is one of them which is farther apart from the other two in the sense that it lies outside a rooted sub-tree which contains the other two. The knowledge of the outsider leaf in every possible triple of leaves completely determines the combinatorics of the tree. The constraints \eqref{discrete differentiation} indicate that
$$
\frac{\partial^2 P}{\partial x_{i_1}\partial x_{i_3}}.\frac{\partial P}{\partial x_{i_2}}=\frac{\partial^2 P}{\partial x_{i_2}\partial x_{i_3}}.\frac{\partial P}{\partial x_{i_1}}
$$
if $x_{i_3}$ is the one which is separated from $x_{i_1},x_{i_2}$. Consequently, it suffices to come up with a polynomial that satisfies the constraint above but not the analogous ones 
$$
\frac{\partial^2 P}{\partial x_{i_2}\partial x_{i_1}}.\frac{\partial P}{\partial x_{i_3}}=\frac{\partial^2 P}{\partial x_{i_3}\partial x_{i_1}}.\frac{\partial P}{\partial x_{i_2}},\quad
\frac{\partial^2 P}{\partial x_{i_1}\partial x_{i_2}}.\frac{\partial P}{\partial x_{i_3}}=\frac{\partial^2 P}{\partial x_{i_3}\partial x_{i_2}}.\frac{\partial P}{\partial x_{i_1}}
$$
corresponding to situations where $x_{i_1}$ or $x_{i_2}$ is the outsider of $\{x_{i_1},x_{i_2},x_{i_3}\}$.
The polynomial $\left(x_{i_1}+x_{i_2}\right)x_{i_3}$ has all of these properties. Moreover, it obviously belongs to $\mathcal{F}_{\rm{bin}}(T)$ if $x_{i_3}$ is separated from $x_{i_1},x_{i_2}$ via a rooted sub-tree: the node which is the root of this sub-tree gets $x_{i_{1}}$ and $x_{i_{2}}$ as inputs and adds them. The resulting sum is then passed to the root of $T$ along with $x_{i_3}$, and then a multiplication takes places at the root.\\
\indent
The proof above works equally well in the analytic setting as one can think of $\left(x_{i_1}+x_{i_2}\right)x_{i_3}$ as an analytic function $\Bbb{R}^n\rightarrow\Bbb{R}$ rather than a binary one $\{0,1\}^n\rightarrow\{0,1\}$. 
\end{proof}

\begin{corollary}
The function 
${\rm{d}}:{\rm{Tree}}_n\times{\rm{Tree}}_n\rightarrow [0,1]$ from
\eqref{tree metric} defines a metric on the set ${\rm{Tree}}_n$ of labeled binary rooted trees.
\end{corollary}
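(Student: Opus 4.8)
The plan is to verify the four metric axioms, reducing each to a standard property of the symmetric difference together with the reconstruction result of Proposition \ref{distinguish}. First I would record the simplification afforded by Corollary \ref{formula}: since $\big|\mathcal{F}_{\rm{bin}}(T)\big|$ equals the common value $N := \frac{2\times 6^n+8}{5}$ for every $T \in {\rm{Tree}}_n$, all three expressions in \eqref{tree metric} coincide and
$$
{\rm{d}}(T_1,T_2) = \frac{1}{2N}\big|\mathcal{F}_{\rm{bin}}(T_1)\,\Delta\,\mathcal{F}_{\rm{bin}}(T_2)\big|.
$$
Non-negativity is then immediate, and since $|A\,\Delta\, B| \le |A| + |B| = 2N$ for the two equinumerous function spaces, the values indeed lie in $[0,1]$; symmetry follows at once from the symmetry of the symmetric difference operation.

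For the identity of indiscernibles I would argue that ${\rm{d}}(T_1,T_2)=0$ holds precisely when $\mathcal{F}_{\rm{bin}}(T_1)=\mathcal{F}_{\rm{bin}}(T_2)$. The forward implication is trivial, as equal trees visibly have equal function spaces (recall ${\rm{Tree}}_n$ consists of labeled trees modulo isomorphism, and isomorphic trees share the same discrete function space). The reverse implication is exactly where the content of the statement sits: equality of the two function spaces forces $T_1=T_2$ in ${\rm{Tree}}_n$ by Proposition \ref{distinguish}, which shows the labeled tree is recoverable from $\mathcal{F}_{\rm{bin}}(T)$. Concretely, for each triple of leaves the polynomial $\left(x_{i_1}+x_{i_2}\right)x_{i_3}$ lies in the space exactly when $x_{i_3}$ is the outsider, so membership of these polynomials detects the outsider data, and that data determines the combinatorics of the tree up to the identification by which ${\rm{Tree}}_n$ was defined.

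Finally, for the triangle inequality I would reduce to the purely set-theoretic fact that the symmetric difference is subadditive: from the inclusion $A\,\Delta\,C \subseteq (A\,\Delta\,B)\cup(B\,\Delta\,C)$ one obtains $|A\,\Delta\,C| \le |A\,\Delta\,B| + |B\,\Delta\,C|$. Applying this with $A,B,C$ the function spaces of $T_1,T_2,T_3$ and dividing through by the constant $2N$ yields ${\rm{d}}(T_1,T_3) \le {\rm{d}}(T_1,T_2) + {\rm{d}}(T_2,T_3)$. The only step requiring genuine input beyond formal manipulation is the identity of indiscernibles, and that input has already been supplied by Proposition \ref{distinguish}; there is thus no remaining obstacle, and the argument is a routine verification that the normalized symmetric difference is a metric.
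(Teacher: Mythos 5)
Your proof is correct and takes essentially the same route as the paper: the paper likewise dismisses symmetry and the triangle inequality as routine properties of the normalized symmetric difference and isolates the identity of indiscernibles as the only substantive point, settled by Proposition \ref{distinguish}. You simply spell out the details (the reduction to $\frac{1}{2N}|\mathcal{F}_{\rm{bin}}(T_1)\,\Delta\,\mathcal{F}_{\rm{bin}}(T_2)|$ via Corollary \ref{formula} and the subadditivity of the symmetric difference) that the paper leaves implicit.
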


\begin{proof}
Symmetry and triangle inequality trivially hold. Proposition \ref{distinguish} implies that ${\rm{d}}$ is a metric rather than a pseudo-metric as ${\rm{d}}(T_1,T_2)=0$ implies $T_1=T_2$.
\end{proof}
\begin{figure}
    \centering
    \includegraphics[width=14cm]{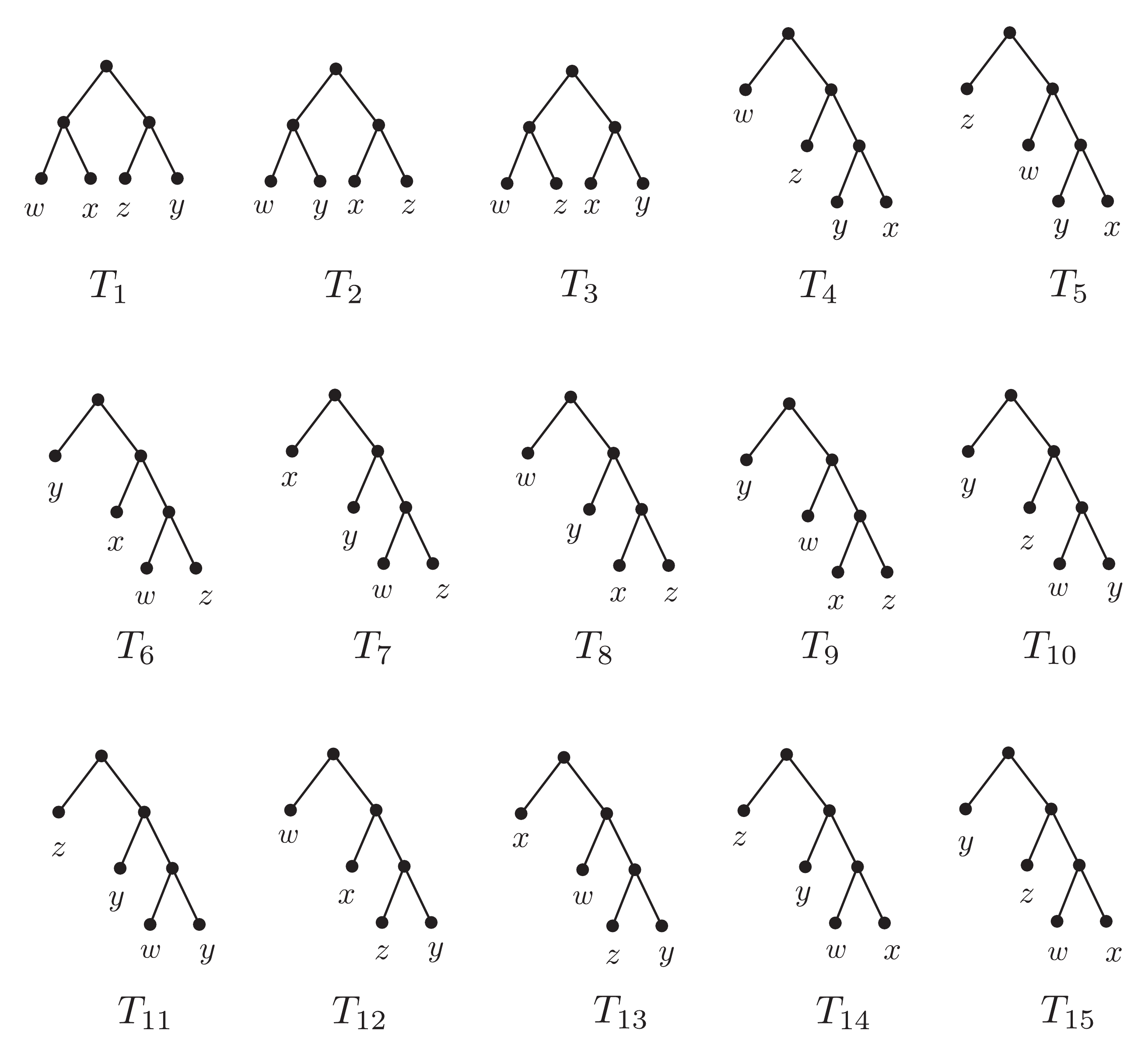}
    \caption{All elements of the set ${\rm{Tree}}_4$ of labeled binary trees with four terminals.}
    \label{fig:metric''}
\end{figure}

\begin{example}\label{4var-2}
This example, like Example \ref{4var-1}, investigates binary trees with four terminals. Here, we calculate the distances between some of the binary trees illustrated in Figure \ref{fig:metric''}. By the virtue of \eqref{binary function formula}, the total size of each function space is $520$ when $n=4$. Let us first consider two different trees e.g. those of Figure \ref{fig:three-four} which appear as $T_3,T_4$ in Figure \ref{fig:metric''}. We need to compute how many four variable  functions $F(x,y,z,w)$ admitting a representation by the symmetric tree $T_3$ can also be represented by the asymmetric one $T_4$. Based on the results of \S\ref{discrete-tree} and Theorem \ref{main binary}, $F(x,y,z,w)$ could be thought of as a polynomial whose degree with respect to each indeterminate is one (i.e. is in the form \eqref{polynomial space}) that moreover, satisfies \eqref{four-1}. It has a representation by the latter tree if and only if it satisfies the constraints \eqref{four-2}. Comparing these two groups of equations, one only requires the identities below to hold in the ring $\Bbb{Z}_2[x,y,z,w]$:
\begin{equation}\label{auxiliary 12}
\frac{\partial^2 F}{\partial z\partial w}\frac{\partial F}{\partial x}=\frac{\partial^2 F}{\partial x\partial w}\frac{\partial F}{\partial z};\quad
\frac{\partial^2 F}{\partial z\partial w}\frac{\partial F}{\partial y}=\frac{\partial^2 F}{\partial y\partial w}\frac{\partial F}{\partial z}.
\end{equation}
The description \eqref{description} of tree functions in terms of sub-trees comes in handy now. Constant functions and bivariate functions of $x,y$ or of $z,w$ could definitely be represented by the asymmetric tree. Next, we determine which functions of the form $F_1(x,y)F_2(z,w)$, $F_1(x,y)F_2(z,w)+1$ or $F_1(x,y)+F_2(z,w)$ with $F_1,F_2$ being non-constant polynomials of the form \eqref{polynomial space} satisfy \eqref{auxiliary 12}. Plugging the first two in 
\eqref{auxiliary 12} yields
$$
F_1\frac{\partial F_1}{\partial x}\left[\frac{\partial^2 F_2}{\partial z\partial w}F_2-\frac{\partial F_2}{\partial z}\frac{\partial F_2}{\partial w}\right]=0;\quad
F_1\frac{\partial F_1}{\partial y}\left[\frac{\partial^2 F_2}{\partial z\partial w}F_2-\frac{\partial F_2}{\partial z}\frac{\partial F_2}{\partial w}\right]=0.
$$
Either $\frac{\partial F_1}{\partial x}$ or $\frac{\partial F_1}{\partial y}$ is non-zero. Thus \eqref{auxiliary 12} boils down to 
$\frac{\partial^2 F_2}{\partial z\partial w}F_2=\frac{\partial F_2}{\partial z}\frac{\partial F_2}{\partial w}$; and this holds if and only if $F_2(z,w)$ is a product of linear polynomials of $z,w$; that is, $F_2(z,w)$ is one of the followings:
$$z,z+1,w,w+1,zw,(z+1)(w+1),z(w+1),(z+1)w.$$
Next, writing \eqref{auxiliary 12} for $F=F_1(x,y)+F_2(z,w)$ (where $F_1(0,0)=0$ as in \eqref{description}) yields
$$
\frac{\partial^2 F_2}{\partial z\partial w}\frac{\partial F_1}{\partial x}=\frac{\partial^2 F_2}{\partial z\partial w}\frac{\partial F_1}{\partial y}=0.
$$
Since $F_1(x,y)$ is non-constant, this means $\frac{\partial^2 F_2}{\partial z\partial w}=0$, i.e. $F_2$ is one of the affine polynomials below:
$$
z,z+1,w,w+1,z+w,z+w+1.
$$
Now for each subset appearing in the disjoint union \eqref{description} we know how many functions admit representations by both symmetric and asymmetric trees in Figure \ref{fig:three-four}. This enables us to calculate the size of the intersection of the corresponding function spaces as
$$
(16-2)\times 8+(16-2)\times 8+\frac{16-2}{2}\times 6+(16-2)+(16-2)+2=296.
$$
Hence the desired distance is $\frac{1}{520}(520-296)\approx 0.43$.\\
\indent 
We conclude the example by finding the distance for a case where a tree is considered with two different labels. Take the symmetric trees $T_2$ and $T_3$ in the top row of Figure \ref{fig:metric''}. Invoking the description \eqref{description} of tree functions $F(x,y,z,w)$ in terms of sub-trees again, it is not hard to see that the only tree functions in common are those in the product form
\begin{equation}\label{auxiliary 13}
F(x,y,z,w)=G_1(x)G_2(y)G_3(z)G_4(w)
\end{equation}
or in the sum form
\begin{equation}\label{auxiliary 14}
F(x,y,z,w)=G_1(x)+G_2(y)+G_3(z)+G_4(w).
\end{equation} 
This is due to the fact that a polynomial identity such as $F_1(x,y)F_2(z,w)\equiv \tilde{F}_1(x,z)\tilde{F}_2(y,w)$ (resp. $F_1(x,y)+F_2(z,w)\equiv \tilde{F}_1(x,z)+\tilde{F}_2(y,w)$) requires all constituent parts (assumed to be non-constant) to be product (resp. sum) of single variable polynomials. Hence the cardinality of the intersection is
$$
2^4+\binom{4}{1}\times 2^3+\binom{4}{2}\times 2^2+2^5=104;
$$
where the summands respectively account for the following possibilities for $F$:
\begin{itemize}
\item all four polynomials in  \eqref{auxiliary 13} are of degree one;
\item three of polynomials in \eqref{auxiliary 13} are of degree one and the other is constant $1$;
\item two of polynomials in  \eqref{auxiliary 13} are of degree one and the other two are constant $1$;
\item $F$ is a linear combination of $1,x,y,z,w$ over $\Bbb{Z}_2$ like \eqref{auxiliary 14}.
\end{itemize}
Consequently, we arrive at
$${\rm{d}}(T_2,T_3)=\frac{1}{520}(520-104)\approx 0.80.$$
\end{example}

\begin{figure}
    \centering
    \includegraphics[width=12cm]{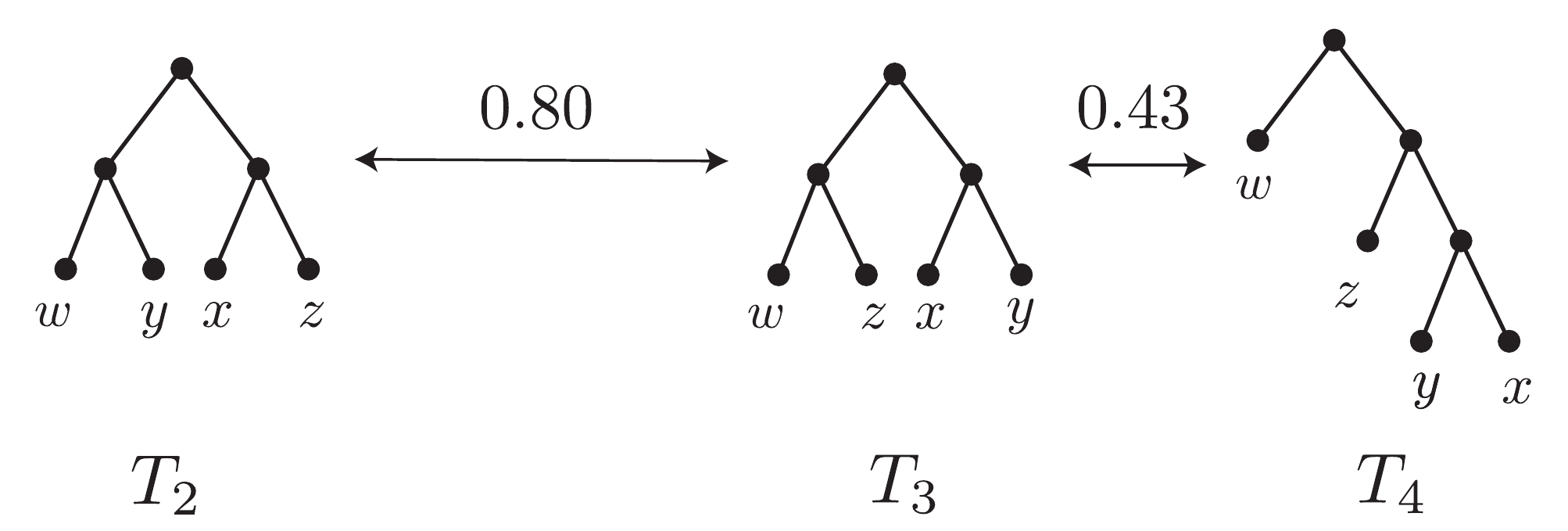}
    \caption{The distances between two pairs of the trees from Figure \ref{fig:metric''} as calculated in Example \ref{4var-2}.}
    \label{fig:metric'''}
\end{figure}

\section{Application to neural networks}\label{neural}
The goal of this section is to adapt the techniques used so far to study feed-forward neural networks. In the first subsection, we present a method for passing from such a neural network to a rooted tree. In the later subsections we try to imitate arguments of \S\ref{continuous setting} and \S\ref{discrete setting} for more general trees. Two main difficulties come up then:
\begin{itemize}
\item the number of leaves is probably larger than the number of variables of the function to be implemented; in other words, different leaves could correspond to the same variable;
\item trees are not binary anymore.
\end{itemize}
In this section trees are, as always, rooted with their leaves labeled. One can adapt the terminology of \S\ref{tree functions} to non-binary trees with no difficulty. The number of leaves/terminals is again denoted by  $n$. Other vertices, called nodes/branch points/non-terminals, have at least two and at most $c$ successors where $c$ is the maximum number of children/successors of a node.\footnote{For the sake of inductive argument, we consider a single vertex to be a tree with only one leaf.}  

\subsection{From neural networks to trees}\label{neural to tree}
Trees are just one particular architecture  for neural networks. Nevertheless, they could serve as building blocks. A neural network is made up of multiple layers stacking atop of each other with each layer containing several nodes. Denoting the layers by $L_1$ up to $L_n$, the nodes in the first layer $L_1$ correspond to the inputs. Associated with each node of the $i^{\rm{th}}$ layer is a function which as its inputs takes the outputs of a subset of nodes (at least two) in the $i-1^{\rm{th}}$ layer.   The first layer takes the input and after that the functions associated to the nodes  in the next layer compute the values for this layer. This continues up to the last layer where the output of the neural network is returned at the root. Therefore, once again superpositions of functions take place. Unless otherwise stated, we assume functions applied at nodes to be either analytic or bit-valued.

Tree functions are examples of functions computed via neural networks. Conversely, a feed-forward neural network $N$ may be converted to its corresponding \textbf{TENN} (the \textbf{T}ree \textbf{E}xpansion of the \textbf{N}eural \textbf{N}etwork) which is a (not necessarily binary) rooted tree $T$. This construction is best demonstrated in Figure \ref{fig:tree-expanded-network}. Abstractly speaking, each vertex of the tree $T$ represents a path starting from a vertex of the neural network $N$ and ascending through the layers until it reaches the root of $N$. Given such a path, for any two sub-paths to the root determined by successive vertices of $N$ the corresponding vertices of $T$ must be connected. \\
\indent 
We should keep in mind that in this expansion  nodes and leaves might be repeated (both for the input layer and upper layers). Consequently, unlike the convention in \S\ref{tree functions}, we are dealing with trees for which different leaves may share a label.\\

\subsection{Trees with repeated labels; a toy example}\label{labeled; exact}
In Theorem \ref{main} we characterized analytic tree functions via a system of PDEs. The working assumption in that theorem and also throughout \S\ref{continuous setting}, \S\ref{discrete setting} 
was that the leaves of the tree correspond to different variables. On the contrary, as discussed in the preceding section, the tree resulting from expanding a neural network is not necessarily binary and there are probably different leaves labeled by the same variable. \\
\indent
To demonstrate how cumbersome a PDE constraint may get in presence of repetition, let us go back to the toy example of a function of $x,y,z$ from \S\ref{three-variable-example}, but this time in the form of  
\begin{equation}\label{3var_form_alternate}
F(x,y,z) = g(f(x,y),h(x,z))
\end{equation}
rather than \eqref{3var_form}.
This is a superposition of  function $f(x,y), h(x,z)$ and $g(u,v)$ and  is illustrated in Figure \ref{fig:tree-expanded-network-2}. Switching to the subscript notation for partial derivatives, we try the imitate the usage of chain rule in \S\ref{three-variable-example} which resulted in constraint \eqref{condition}. 
Differentiating yields: 
\begin{equation*}
F_y = g_u.f_y,\quad F_z = g_v.h_z,\quad F_x = g_u.f_x+g_v.h_x;
\end{equation*}
and therefore, the linear combination
\begin{equation}\label{first-derivation-nn}
F_x = F_y\big(\frac{f_x}{f_y}\big)+F_z\big(\frac{h_x}{h_z}\big)=A(x,y).F_y+B(x,z).F_z.
\end{equation}
Notice that weights $A,B$ in the linear combination above do not depend on $z$ and $y$. Taking higher partial derivatives of \eqref{first-derivation-nn} with respect to $y,z$ results in a description of derivatives of the form $F_{xy\dots yz\dots z}$ as linear combinations of functions such as $F_{y\dots y}$ and $F_{z\dots z}$. But assuming that we have differentiated up to $n$ times, there are ${n+2 \choose 2}$ partial derivatives such as  $F_{xy\dots yz\dots z}$ in which the total order of differentiation with respect to $y,z$ is at most $n$. On the other side of the linear combination, the weights could only be partial derivatives $A_{y\dots y}$ and $B_{z\dots z}$ of order not greater than $n$. But there are $2(n+1)$ such terms. This implies that there is a linear dependency of the aforementioned partial derivatives of $F$ provided that $n$ is large enough. And a linear dependency could always be described as the vanishing of determinants.  \\
\indent
For example, by differentiating  \eqref{first-derivation-nn} up to three times, we arrive at:
\small
\begin{equation}\label{auxiliary 15}
\begin{bmatrix}
F_x \\F_{xy}\\F_{xz}\\F_{xyz}\\F_{xyy}\\F_{xzz}\\F_{xyzz}
\end{bmatrix}
=
\begin{bmatrix}
F_y&F_z&0&0&0&0\\
F_{yy}&F_{yz}&F_y&0&0&0\\
F_{yz}&F_{zz}&0&F_z&0&0\\
F_{yyz}&F_{yzz}&F_{yz}&F_{yz}&0&0\\
F_{yyy}&F_{yyz}&2F_{yy}&0&F_y&0\\
F_{yzz}&F_{zzz}&0&2F_{zz}&0&F_z\\
F_{yyzz}&F_{yzzz}&F_{yzz}&2F_{yzz}&0&F_{yz}\\
\end{bmatrix}
\begin{bmatrix}
A\\B\\A_y\\B_z\\A_{yy}\\B_{zz}.
\end{bmatrix}
\end{equation}
\normalsize
\begin{proof}[Proof of Proposition \ref{pde-for-g(f(x,y),h(x,z))}]
The vector on the left of \eqref{auxiliary 15} is in the column space of the $7\times 6$ matrix appeared on the right. Joining them results in a $7\times 7$ matrix whose columns are linearly dependent and its determinant thus must be zero.   
\end{proof}
Article \cite{arnold2009representation1} alludes to the function $xy+yz+zx$ as a function that is not a superposition of the form \eqref{3var_form_alternate} of continuous bivariate functions. Notice that this function satisfies  the condition of Proposition \ref{pde-for-g(f(x,y),h(x,z))}; hence that condition is not sufficient. Example below addresses a similar issue with our recurring non-example.

\begin{figure}
    \centering
    \includegraphics[width=9cm]{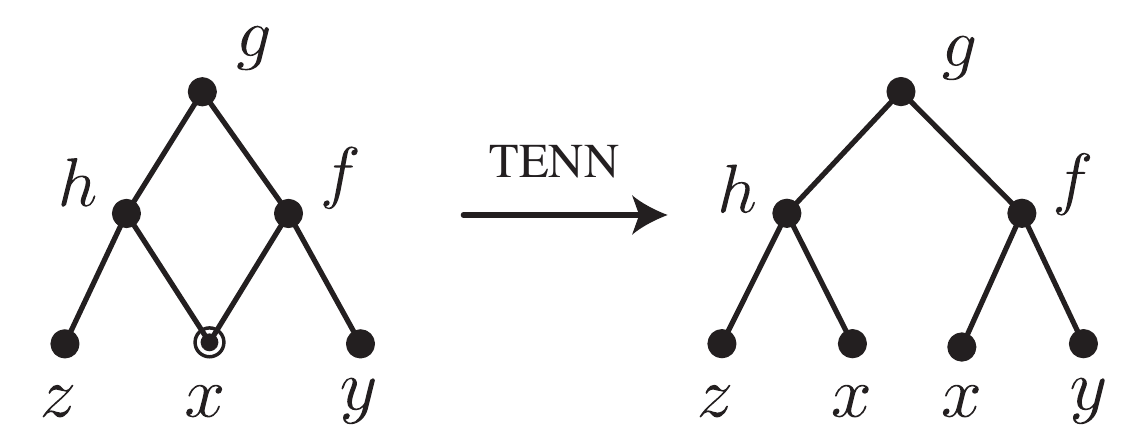}
    \caption{The superposition $F(x,y,z) = g(f(x,y),h(x,z))$ as a neural network with three inputs (left) and the expansion of this network to a tree with four leaves (right).}
    \label{fig:tree-expanded-network-2}
\end{figure}

\begin{example}
The familiar function $xyz+x+y+z$ from \eqref{non-example} failed to satisfy the constraint \eqref{condition} while it is not hard to see that it is a solution to the PDE from Proposition \ref{pde-for-g(f(x,y),h(x,z))}. Here we are going to show that it cannot be represented globally as a superposition of the form  \eqref{3var_form_alternate} (and in retrospect as a superposition of the form \eqref{3var_form}) even in the continuous category. Aiming for a contradiction, suppose    
\begin{equation}\label{auxiliary 17}
g(f(x,y),h(x,z))=xyz+x+y+z
\end{equation}
for globally defined continuous functions $f,g,h:\Bbb{R}^2\rightarrow\Bbb{R}$. 
Substituting $y$ with $\frac{-1}{x}$ yields
\begin{equation}\label{auxiliary 18}
g\left(f\left(x,\frac{-1}{x}\right),h(x,z)\right)=x-\frac{1}{x}
\end{equation}
for any $x,y,z$ with $x\neq 0$. Let $x_0\neq 0$ and $z_0$ be arbitrary. The function $z\mapsto h(x_0,z)$ could not be constant over a non-degenerate interval because then 
$$
(y,z)\mapsto g(f(x_0,y),h(x_0,z))=x_0yz+x_0+y+z
$$
would be independent of $z$ on some non-degenerate rectangular region of the plane. 
Hence there is an $\epsilon>0$ such that the image of $z\mapsto h(x_0,z)$ contains an open interval of the form 
$(a-2\epsilon,a+2\epsilon)$ where $a:=h(x_0,z_0)$. By continuity, for sufficiently small $\delta>0$ the image of $z\mapsto h(x_1,z)$ contains the interval $J:=(a-\epsilon,a+\epsilon)$ provided that $|x_0-x_1|<\delta$. Now for any such an $x_1$ the function $g$ is constant on the vertical segment 
$$
\left\{f\left(x_1,\frac{-1}{x_1}\right)\right\}\times J
$$
because of \eqref{auxiliary 18}. Hence if 
\begin{equation}\label{auxiliary 19}
x\mapsto f\left(x,\frac{-1}{x}\right)
\end{equation}
is not constant in vicinity of $x_0$, then $g(u,v)$ would be dependent only on its first variable for $(u,v)$ from a non-degenerate rectangle containing $\left(f\left(x_0,\frac{-1}{x_0}\right),a\right)=\left(f\left(x_0,\frac{-1}{x_0}\right),h(x_0,z_0)\right)$. But for $(x,y,z)$ close enough to $\left(x_0,\frac{-1}{x_0},z_0\right)$ the point 
$$
(f(x,y),h(x,z))
$$
and hence $g(f(x,y),h(x,z))$ is dependent only on $x,y$ as $(x,y,z)$ varies in a sufficiently small neighborhood of $\left(x_0,\frac{-1}{x_0},z_0\right)$. This is of course impossible since  the right hand side of \eqref{auxiliary 17} is not independent of $z$ when $y\neq\frac{-1}{x}$.  Therefore, \eqref{auxiliary 19} must be constant over an interval $(x_0-\delta',x_0+\delta')$ where $0<\delta'\leq\delta$. We next arrive at a contradiction: Pick an $x_1\neq x_0$ form this interval. By the previous discussion, there is a number $z_1$ such that 
$$
h(x_1,z_1)=h(x_0,z_0)=a.
$$
But then 
$$
\left(f\left(x_0,\frac{-1}{x_0}\right),h(x_0,z_0)\right)=\left(f\left(x_1,\frac{-1}{x_1}\right),h(x_1,z_1)\right)
$$
which by \eqref{auxiliary 18} requires $x_0-\frac{1}{x_0}$ and $x_1-\frac{1}{x_1}$ to be the same; a contradiction as $x\mapsto x-\frac{1}{x}$ is injective.  
\end{example}

The preceding discussion indicates that in the case of trees with repeated labels (hence, following the discussion in \S\ref{neural to tree}, the case of neural networks) numerous PDE constraints originated from the linear algebra could be imposed on tree functions. In Theorem \ref{main} we observed that in the case of distinct labels, a finite number of PDEs are enough to pin down analytic tree functions. This promotes us to ask for a generalization:
\begin{question}\label{conjecture}
Is it true that analytic superpositions corresponding to any given architecture of neural networks may be characterized as solutions to a finite system of PDEs?
\end{question}

\subsection{Trees with repeated labels; estimates}\label{labeled; estimate}
After realizing the functions produced by neural networks as tree functions for trees with repeated labels, we next proceed to invoke our results to study tree function spaces in presence of repeated labels. In the context of bit-valued functions, obtaining a precise count of discrete functions such as \eqref{binary function formula} is implausible. Nevertheless, one could easily bound the number of such functions. Working with the notation fixed in the beginning of this section and fixing a tree $T$ with $n$ leaves, the tree functions under consideration are of the form
$$
F:\{0,1\}^p\rightarrow\{0,1\}\quad (x_1,\dots,x_p)\mapsto F(x_1,\dots,x_p)
$$
where $x_1,\dots,x_p$ are the labels of the leaves of $T$; so $p\leq n$. These functions are superpositions of functions of the form $\{0,1\}^i\rightarrow\{0,1\}$ each assigned to a node where $2\leq i\leq c$ is the  number of inputs that the node receives. Hence a very crude overestimate for $\left|\mathcal{F}_{\rm{bin}}(T)\right|$ is given by 
\begin{equation}\label{crude}
\left(2^{2^c}\right)^{\#\text{ of nodes of }T}.
\end{equation}
The number of nodes/non-terminal vertices of $T$ could be easily bounded in terms of $n$:

\begin{lemma}\label{lemma 1}
A rooted tree $T$ with $n$ leaves has at most $n-1$ nodes with equality if and only if $T$ is binary.
\end{lemma}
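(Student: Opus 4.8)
The plan is to count the edges of $T$ in two different ways. First I would let $m$ denote the number of nodes (non-terminal vertices) of $T$, and for each node $v$ write $c_v$ for its number of children. The standing hypothesis that every non-terminal vertex has at least two successors means $c_v \geq 2$ for every node $v$. Since a tree on $V$ vertices has exactly $V-1$ edges, and here the vertex set partitions into the $m$ nodes and the $n$ leaves so that $V = m + n$, the total number of edges is $m + n - 1$.

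Next I would compute the same edge count by a second route. In a rooted tree every edge joins a unique vertex to its parent, so the number of edges equals the total number of children summed over all vertices; leaves contribute nothing, so this total is $\sum_{v} c_v$ with the sum taken over the $m$ nodes. Equating the two expressions gives the identity $\sum_{v} c_v = m + n - 1$.

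The inequality now follows immediately from $c_v \geq 2$: summing over the $m$ nodes yields $\sum_{v} c_v \geq 2m$, hence $2m \leq m + n - 1$, that is $m \leq n-1$. The equality case is read off the same computation, since $m = n-1$ holds exactly when $\sum_{v} c_v = 2m$, and this forces $c_v = 2$ for every node $v$ — precisely the statement that $T$ is binary. Conversely a binary tree clearly satisfies $\sum_v c_v = 2m$ and so attains equality.

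I do not expect a genuine obstacle here; the only points needing care are bookkeeping ones. One should note that the root is counted as an ordinary node (no special treatment is required as long as $n \geq 2$), and that the degenerate single-vertex tree fits the formula with $m = 0 = n - 1$, the empty sum being zero. An alternative route is induction on $n$, collapsing a node all of whose children are leaves into a single leaf, but the double count of edges is cleaner and disposes of the equality characterization in one stroke, so that is the approach I would present.
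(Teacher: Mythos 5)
Your proof is correct and is essentially the same double-count of edges that the paper uses: the paper equates $V-1$ with half the degree sum, while you equate $V-1$ with $\sum_v c_v$ by counting each edge once at its child endpoint. Your variant is marginally tidier since it avoids the paper's separate handling of the root's degree, but the argument and the equality analysis are the same.
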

\begin{proof}
From the basic graph theory (\cite[chap. 1]{MR2440898}) the number of edges of the underlying graph of $T$ is simultaneously the number of vertices minus one and half the sum of degrees of all vertices. The former is 
$$
\#\text{ of nodes }+\#\text{ of leaves }-1=\#\text{ of nodes }+n-1;
$$
while the latter is at least
$$
\frac{1}{2}\left(3\left(\#\text{ of nodes }-1\right)+2+n\right)
$$
due to the fact that the degree of every node other than the root itself is at least three (it has one predecessor and at least two successors). Therefore:
$$
\#\text{ of nodes }+n-1\geq \frac{1}{2}\left(3\left(\#\text{ of nodes }-1\right)+2+n\right)\Leftrightarrow n-1\geq \#\text{ of nodes }.
$$
The equality occurs exactly when every non-root node has only two successors; i.e. the rooted tree $T$ is binary. 
\end{proof}
This lemma yields $\left(2^{2^c}\right)^{n-1}$ as an upper bound for the size of the discrete TFS $\mathcal{F}_{\rm{bin}}(T)$ of the tree $T$ with $n$ leaves appeared before. Invoking the ideas developed in the proof of Theorem \ref{discrete}, this bound could be improved: Assuming that the root of $T$ is of degree $m$ ($2\leq m\leq c$), as its inputs it receives the outputs of $m$ tree functions $F_1,\dots,F_m$ implemented on the smaller sub-trees emanating from the root. These outputs are then passed to a function $g:\{0,1\}^m\rightarrow\{0,1\}$ at the root and at last, the procedure results in $f:=g\left(F_1,\dots,F_m\right)$ as our tree function. The key point is that it is not necessary to consider all $2^m$ possibilities for $g$ since if, for instance, one modifies $g(x_1,x_2,\dots,x_m)$ as $g(x_1+1,x_2,\dots,x_m)$, then feeding the root with the bit-valued function $F_1-1=F_1+1$ instead of $F_1$ results in the same function $f$; notice that $F_1+1$ is obviously a tree function for the same sub-tree. In other words, identifying the set of functions $\left\{0,1\right\}^m\rightarrow\left\{0,1\right\}$ with the set  
$$
\left\{\sum_{S\subseteq \{1,\dots,m\}}\epsilon(S)\prod_{s\in S}x_s\,\big|\epsilon(S)\in\{0,1\}\right\},
$$
of binary polynomials (just like what we did in \eqref{polynomial space}), we are going to argue that the set above partitions to certain equivalence classes such that the choices for the function assigned to the root could be narrowed down to a set of representatives of these classes; hence a fewer number of choices. This is a direct generalization of what we did for binary trees in the proof of Theorem \ref{discrete} where we divided functions $\{0,1\}^2\rightarrow\{0,1\}$ into seven groups. More precisely, two binary polynomials $g(x_1,\dots,x_m)$ and  $\tilde{g}(x_1,\dots,x_m)$ of the form above (meaning of degree at most one with respect to each indeterminate) are regarded to be equivalent if one is obtained from the other by changing $x_i$ to $x_i+1$ for a subset of indeterminates. We shall need the number of such equivalence classes. This is the content of the lemma below.
\begin{lemma}\label{lemma 2}
The number of classes of the equivalence relation on the set 
\begin{equation}\label{auxiliary 16}
\left\{\sum_{S\subseteq \{1,\dots,m\}}\epsilon(S)\prod_{s\in S}x_s\,\big|\epsilon(S)\in\{0,1\}\right\},
\end{equation}
defined above is $2^{2^{m-1}-m}\left(2^{2^{m-1}}+2^m-1\right)$.
\end{lemma}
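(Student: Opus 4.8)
The plan is to recognize the stated equivalence as the orbit relation of a group action and then apply Burnside's lemma (the Cauchy--Frobenius counting formula). First I would identify the set \eqref{auxiliary 16} of binary polynomials of degree at most one in each indeterminate with the set $X$ of all functions $\{0,1\}^m\rightarrow\{0,1\}$, exactly as in \eqref{polynomial space} and \eqref{two variable}. Under this identification, substituting $x_i\mapsto x_i+1$ for $i$ ranging over a subset $A\subseteq\{1,\dots,m\}$ sends the function $g$ to $\mathbf{x}\mapsto g(\mathbf{x}+\mathbf{1}_A)$, where $\mathbf{1}_A\in\{0,1\}^m$ is the indicator vector of $A$. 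Thus the equivalence is precisely the orbit relation for the action of the additive group $G:=\Bbb{Z}_2^m$ on $X$ by translation of the argument, $v\cdot g:=g(\,\cdot\,+v)$; as $A$ ranges over all subsets, $v=\mathbf{1}_A$ ranges over all of $G$, so this is a genuine group action of a group of order $|G|=2^m$.

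By Burnside's lemma, the number of equivalence classes equals
\begin{equation*}
\frac{1}{|G|}\sum_{v\in G}\big|\mathrm{Fix}(v)\big|,
\end{equation*}
where $\mathrm{Fix}(v)=\{g\in X\mid g(\mathbf{x}+v)=g(\mathbf{x})\text{ for all }\mathbf{x}\}$. I would compute these fixed-point counts directly. For $v=\mathbf{0}$ every function is fixed, so $\big|\mathrm{Fix}(\mathbf{0})\big|=2^{2^m}$. For $v\neq\mathbf{0}$, the translation $\mathbf{x}\mapsto\mathbf{x}+v$ is an involution of $\{0,1\}^m$ with no fixed points (since $2v=\mathbf{0}$ but $v\neq\mathbf{0}$), and hence partitions the $2^m$ points of the domain into $2^{m-1}$ two-element orbits; a function is invariant under $v$ if and only if it is constant on each such orbit, giving $\big|\mathrm{Fix}(v)\big|=2^{2^{m-1}}$. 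There are $2^m-1$ nonzero choices of $v$.

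Plugging in and simplifying then finishes the count:
\begin{equation*}
\frac{1}{2^m}\left(2^{2^m}+(2^m-1)\,2^{2^{m-1}}\right)
=2^{2^{m-1}-m}\left(2^{2^{m-1}}+2^m-1\right),
\end{equation*}
using $2^{2^m}=\big(2^{2^{m-1}}\big)^2$ to factor out $2^{2^{m-1}}$; this matches the claimed value. As a sanity check, $m=1$ gives $3$ and $m=2$ gives $7$, the latter agreeing with the seven groups (i)--(vii) used in the proof of Theorem \ref{discrete}. The argument is essentially routine: the only steps requiring a moment's care are the observation that every nonzero translation acts on the domain as a fixed-point-free involution (which is exactly what pins each $\big|\mathrm{Fix}(v)\big|$ down to $2^{2^{m-1}}$), and the verification that substitution $x_i\mapsto x_i+1$ on polynomials corresponds precisely to argument-translation on the associated functions under the bijection of \eqref{polynomial space}.
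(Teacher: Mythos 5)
Your proof is correct and follows essentially the same route as the paper: both identify the equivalence classes with orbits of the translation action of $(\Bbb{Z}_2)^m$ and apply Burnside's lemma, with fixed-point counts $2^{2^m}$ for the identity and $2^{2^{m-1}}$ for each of the $2^m-1$ nonzero elements. The only (cosmetic) difference is in how the latter count is obtained: the paper reduces to the element $(1,0,\dots,0)$ by a linear change of coordinates on the polynomial ring, whereas you observe directly that a nonzero translation is a fixed-point-free involution of the domain $\{0,1\}^m$ and count functions constant on its $2^{m-1}$ two-point orbits — an argument that is, if anything, a bit cleaner.
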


\begin{proof}
The proof is an application of Burnside's lemma \cite[\S 2.7]{MR2440898}. These equivalence classes are orbits of an action of the group $(\Bbb{Z}_2)^m$ on the subset \eqref{auxiliary 16} of $\Bbb{Z}_2\left[x_1,\dots,x_m\right]$ defined as
$$
\left(\delta_1,\dots,\delta_m\right).g(x_1,\dots,x_m):=g\left(x_1+\delta_1,\dots,x_m+\delta_m\right)
$$
for any $m$-tuple $\boldsymbol{\delta}:=\left(\delta_1,\dots,\delta_m\right)$ of $0$'s and $1$'s. Burnside's lemma gives the number of orbits as the sum of the number of  polynomials in \eqref{auxiliary 16} fixed by $\boldsymbol{\delta}$  as $\boldsymbol{\delta}$ varies in $(\Bbb{Z}_2)^m$ divided  by the cardinality $2^m$ of $(\Bbb{Z}_2)^m$. It is not hard to see that an non-identity element $\boldsymbol{\delta}$ fixes exactly half of the polynomials; e.g. $\boldsymbol{\delta}=(1,0,\dots,0)\in(\Bbb{Z}_2)^m$ fixes only those members of \eqref{auxiliary 16} in which $x_1$ does not appear and there are $2^{2^{m-1}}$ of them. The general situation for a group element
$$
\boldsymbol{\delta}=\left(\delta_1,\dots,\delta_m\right)
$$
in which one $\delta_i$, say $\delta_1$, is non-zero could be reduced to the aforementioned case by the linear change of coordinate 
$$
(x_1,x_2,\dots,x_m)\mapsto\left(\tilde{x}_1,\tilde{x}_2,\dots,\tilde{x}_m\right)=\left(x_1,x_2+\delta_2x_1,\dots,x_m+\delta_mx_1\right)  
$$
due to the fact that $\boldsymbol{\delta}$ takes $\tilde{x}_1$ to $\tilde{x}_1+1$ while $\tilde{x}_2,\dots,\tilde{x}_m$ are preserved. We conclude that the number of orbits is:
$$
\frac{1}{2^m}\left((2^m-1)\times 2^{2^{m-1}}+2^{2^{m}}\right)=2^{2^{m-1}-m}\left(2^{2^{m-1}}+2^m-1\right).
$$
\end{proof}
Now we arrive at the following improvement of the bound $\left(2^{2^c}\right)^{n-1}$ derived before:
\begin{proposition}\label{estimate}
Let $T$ be a tree with $n$ leaves whose nodes have at most $c\geq 2$ children. Then 
$$
\left|\mathcal{F}_{\rm{bin}}(T)\right|\leq 4^n\times\left(2^{2^{c-1}-c}\left(2^{2^{c-1}}+2^c-1\right)\right)^{n-1}. 
$$
\end{proposition}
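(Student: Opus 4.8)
The plan is to run the same recursive count as in the proof of Theorem \ref{discrete}, now for a general (non-binary) tree, and to assemble the bound by induction on the number of leaves $n$. Write $B := 2^{2^{c-1}-c}\left(2^{2^{c-1}}+2^c-1\right)$ for the quantity counted by Lemma \ref{lemma 2} with $m=c$; the target inequality is $\left|\mathcal{F}_{\rm{bin}}(T)\right|\leq 4^n B^{n-1}$. For the base case $n=1$ the tree is a single leaf, $\mathcal{F}_{\rm{bin}}(T)=D_1$ consists of the four functions $\{0,1\}\rightarrow\{0,1\}$, and indeed $4=4^1B^0$.

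For the inductive step, I would let the root of $T$ have degree $m$ with $2\leq m\leq c$, and let $T_1,\dots,T_m$ be the sub-trees hanging from the root, with $n_i$ leaves so that $n_1+\dots+n_m=n$. Every $f\in\mathcal{F}_{\rm{bin}}(T)$ is of the form $f=g\left(F_1,\dots,F_m\right)$ with $g:\{0,1\}^m\rightarrow\{0,1\}$ and $F_i\in\mathcal{F}_{\rm{bin}}(T_i)$. The key reduction --- exactly the one set up before Lemma \ref{lemma 2} --- is that $g$ may be taken from a fixed set of representatives of the equivalence classes of that lemma: if $g$ is equivalent to a representative $\tilde g$ via $g(x_1,\dots,x_m)=\tilde g(x_1+\delta_1,\dots,x_m+\delta_m)$, then $f=\tilde g\left(F_1+\delta_1,\dots,F_m+\delta_m\right)$, and each $F_i+\delta_i$ is again a function in $\mathcal{F}_{\rm{bin}}(T_i)$. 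Hence the assignment $(\tilde g,F_1,\dots,F_m)\mapsto \tilde g(F_1,\dots,F_m)$, ranging over the $N(m)$ representatives and over $\mathcal{F}_{\rm{bin}}(T_1)\times\dots\times\mathcal{F}_{\rm{bin}}(T_m)$, is surjective onto $\mathcal{F}_{\rm{bin}}(T)$, giving
\begin{equation*}
\left|\mathcal{F}_{\rm{bin}}(T)\right|\leq N(m)\prod_{i=1}^m\left|\mathcal{F}_{\rm{bin}}(T_i)\right|,
\end{equation*}
where $N(m)=2^{2^{m-1}-m}\left(2^{2^{m-1}}+2^m-1\right)$ is the count from Lemma \ref{lemma 2}.

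I would then apply the induction hypothesis $\left|\mathcal{F}_{\rm{bin}}(T_i)\right|\leq 4^{n_i}B^{n_i-1}$ together with $N(m)\leq B$ to obtain
\begin{equation*}
\left|\mathcal{F}_{\rm{bin}}(T)\right|\leq B\prod_{i=1}^m 4^{n_i}B^{n_i-1}=B\cdot 4^{\,n}\cdot B^{\,n-m}=4^{\,n}B^{\,n-m+1}.
\end{equation*}
Since every node has at least two successors, $m\geq 2$, so $n-m+1\leq n-1$; and since $B\geq N(2)=7\geq 1$ one may enlarge the exponent to conclude $\left|\mathcal{F}_{\rm{bin}}(T)\right|\leq 4^n B^{n-1}$, as desired.

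The only non-routine points are the surjectivity reduction (essentially already supplied by the discussion before Lemma \ref{lemma 2}) and the monotonicity $N(m)\leq N(c)$ for $2\leq m\leq c$, and the latter is the step I expect to need the most care. I would verify it by the crude estimates $N(m)\leq 2^{2^m-m+1}$ (using $2^m-1\leq 2^{2^{m-1}}$, valid for $m\geq 2$) and $N(m+1)\geq 2^{2^{m+1}-m-1}$ (keeping only the leading term of the second factor), after which $2^m-m+1\leq 2^{m+1}-m-1$ reduces to $2\leq 2^m$. Everything else is bookkeeping of exponents, where the inequalities $m\geq 2$ and $B\geq 1$ are precisely what collapse $n-m+1$ down to the claimed $n-1$.
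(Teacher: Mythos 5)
Your proposal is correct and follows essentially the same route as the paper: induction on the number of leaves, decomposing at the root into $m\leq c$ sub-trees, and bounding the choices for the root function by the class count of Lemma \ref{lemma 2} before collapsing $B^{n-m+1}$ to $B^{n-1}$ via $m\geq 2$. The one place you go beyond the paper is in explicitly verifying the monotonicity $N(m)\leq N(c)$, which the paper's proof uses silently; your estimate for that step is valid.
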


\begin{proof}
In the base case, when $n=1$, there are exactly four functions $\{0,1\}\rightarrow\{0,1\}$ and the inequality is clearly satisfied. For the inductive step and with notation as before, removing the root of $T$ leaves us with $m$ sub-trees $T_1,\dots,T_m$ where $2\leq m\leq c$. Denoting numbers of their leaves by $n_1,\dots,n_m$, one has $n_1+\dots+n_m=n$ and by the induction hypothesis
$$
\left|\mathcal{F}_{\rm{bin}}(T_i)\right|\leq 4^{n_i}\times\left(2^{2^{c-1}-c}\left(2^{2^{c-1}}+2^c-1\right)\right)^{n_i-1}
$$
for all $1\leq i\leq m$. Lemma \ref{lemma 2} now implies that 
\begin{equation*}
\begin{split}
\left|\mathcal{F}_{\rm{bin}}(T)\right|
&\leq\left(2^{2^{m-1}-m}\left(2^{2^{m-1}}+2^m-1\right)\right)\prod_{i=1}^m\left|\mathcal{F}_{\rm{bin}}(T_i)\right|\\
&\leq \left(2^{2^{c-1}-c}\left(2^{2^{c-1}}+2^c-1\right)\right)\prod_{i=1}^m4^{n_i}\times\left(2^{2^{c-1}-c}\left(2^{2^{c-1}}+2^c-1\right)\right)^{n_i-1}\\
&\leq 4^n\left(2^{2^{c-1}-c}\left(2^{2^{c-1}}+2^c-1\right)\right)^{n-m+1}\leq 4^n\times\left(2^{2^{c-1}-c}\left(2^{2^{c-1}}+2^c-1\right)\right)^{n-1}.
\end{split}
\end{equation*}
\end{proof}

\begin{corollary}\label{corollary}
A tree $T$ with $n$ leaves whose nodes have no more than $c$ children satisfies $\left|\mathcal{F}_{\rm{bin}}(T)\right|=O\left(\gamma_c^n\right)$ where
\begin{equation}\label{gamma}
\gamma_c:=\begin{cases}
6\hspace{4.6cm} c=2\\
2^{2^{c-1}-c+2}\left(2^{2^{c-1}}+2^c-1\right)\quad c\geq 3
\end{cases}
\end{equation}
is a number smaller than $2^{2^c}$.
\end{corollary}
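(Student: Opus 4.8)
The plan is to derive the $c\geq 3$ case directly from Proposition \ref{estimate}, to fall back on the exact count of Corollary \ref{formula} for the boundary case $c=2$, and to finish with a short exponent comparison establishing $\gamma_c<2^{2^c}$. I would dispose of $c=2$ first: a tree whose nodes have at most (hence exactly) two children is binary, so Corollary \ref{formula} applies and gives $|\mathcal{F}_{\rm{bin}}(T)|=\frac{2\times 6^n+8}{5}=O(6^n)=O(\gamma_2^n)$, while $\gamma_2=6<16=2^{2^2}$. It is worth noting that Proposition \ref{estimate} by itself is too weak here, yielding only the bound $4^n\cdot 7^{\,n-1}=O(28^n)$, so the exact enumeration is genuinely needed to reach base $6$.

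For $c\geq 3$ I would set $\beta_c:=2^{2^{c-1}-c}\bigl(2^{2^{c-1}}+2^c-1\bigr)$, so that the bound of Proposition \ref{estimate} reads $|\mathcal{F}_{\rm{bin}}(T)|\leq 4^n\beta_c^{\,n-1}$, and observe from \eqref{gamma} that $\gamma_c=4\beta_c$. The asymptotic claim is then immediate, since
$$
4^n\beta_c^{\,n-1}=\beta_c^{-1}(4\beta_c)^n=\beta_c^{-1}\gamma_c^n,
$$
and $\beta_c^{-1}$ depends only on $c$; hence $|\mathcal{F}_{\rm{bin}}(T)|=O(\gamma_c^n)$, the implied constant being $\beta_c^{-1}$. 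This is the entire substance of the $O(\gamma_c^n)$ assertion, the only manipulation being the absorption of the off-by-one exponent $n-1$ into the constant.

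It remains to verify $\gamma_c<2^{2^c}$ for $c\geq 3$, the only computational step and the one I expect to require the most care. Writing $M:=2^{c-1}$ (so that $2M=2^c$), the key elementary fact is $2^c-1<2^M$, which holds because $M=2^{c-1}>c$ for $c\geq 3$; this yields the strict inequality $2^M+2^c-1<2^{M+1}$ and therefore
$$
\gamma_c=2^{M-c+2}\bigl(2^M+2^c-1\bigr)<2^{M-c+2}\cdot 2^{M+1}=2^{\,2^c-c+3}.
$$
Since $-c+3\leq 0$ for $c\geq 3$, the exponent obeys $2^c-c+3\leq 2^c$, so $\gamma_c<2^{\,2^c-c+3}\leq 2^{2^c}$. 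The delicate point is the boundary case $c=3$, where $-c+3=0$ turns the last inequality into an equality; there it is precisely the strictness of $\gamma_c<2^{\,2^c-c+3}$ (inherited from the strict bound $2^c-1<2^M$) that secures the strict conclusion $\gamma_3=184<256=2^{2^3}$, and the doubly-exponential margin only widens for $c\geq 4$.
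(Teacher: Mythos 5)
Your proof is correct and follows essentially the same route as the paper: Proposition \ref{estimate} with the factor $4^n\beta_c^{\,n-1}$ absorbed into $O(\gamma_c^n)$ for $c\geq 3$, and the exact count of Corollary \ref{formula} for the binary case $c=2$. The only difference is that you spell out the verification of $\gamma_c<2^{2^c}$ (which the paper merely asserts), and your computation there, including the boundary check $\gamma_3=184<256$, is accurate.
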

\begin{proof}
Proposition \ref{estimate} indicates that the number of bit-valued tree functions implemented on $T$ is 
$$
O\left(\left(2^{2^{c-1}-c+2}\left(2^{2^{c-1}}+2^c-1\right)\right)^n\right).
$$
When  $c\geq 3$ the base $2^{2^{c-1}-c+2}\left(2^{2^{c-1}}+2^c-1\right)$  is less than the number  $2^{2^c}$ appeared in the rudimentary bound \eqref{crude}. For $c=2$ the tree is binary and we could use the bound $O\left(6^n\right)$ on $\left|\mathcal{F}_{\rm{bin}}(T)\right|$ that Corollary \ref{formula} provides. 
\end{proof}

We finish with two applications to trees with repeated labels and neural networks. First, suppose the goal is to implement all functions 
$\{0,1\}^p\rightarrow\{0,1\}$ on a rooted tree whose leaves are labeled by $x_1,\dots,x_p$ and let $c$ be the maximum possible number of the children of a node. The essence of the corollary below is that for this goal to be achieved, fixing $c$ the number $n$ of the terminals must be $O(2^p)$ as $p$ grows.

\begin{corollary}\label{bound}
Let $T$ be a tree with $n$ leaves labeled by variables $x_1,\dots,x_p$ whose nodes have at most $c\geq 2$ children and let $\gamma_c$ be as in \eqref{gamma}. Every function $F:\{0,1\}^p\rightarrow\{0,1\}$ could be implemented on this tree only if
$$
n\log(\gamma_c)\geq\log\left(\frac{5\times 2^{2^p}-8}{2}\right)
$$
for $c=2$, and 
$$
(n-1)\log(\gamma_c)\geq 2^p-2
$$
if $c\geq 3$.\footnote{All logarithms are in base $2$.}
\end{corollary}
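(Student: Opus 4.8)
The plan is to read the statement as a counting bound obtained by contraposition: if every $F:\{0,1\}^p\to\{0,1\}$ can be realized on $T$, then the discrete tree function space $\mathcal{F}_{\rm{bin}}(T)$ must contain all $2^{2^p}$ such functions, so that $2^{2^p}\leq\big|\mathcal{F}_{\rm{bin}}(T)\big|$. The whole corollary then reduces to inserting the appropriate \emph{upper} bound on $\big|\mathcal{F}_{\rm{bin}}(T)\big|$ and taking logarithms, with a separate input for $c=2$ and for $c\geq 3$.

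For the binary case $c=2$ I would invoke the exact count of Corollary \ref{formula}. A small point needs care: that corollary is stated for trees whose leaves carry distinct labels, whereas here labels may repeat. However, a tree function with repeated labels is obtained from a distinct-label tree function on $n$ variables by substituting the repeated variables, i.e. it is the image of the distinct-label space under the corresponding substitution surjection; hence $\big|\mathcal{F}_{\rm{bin}}(T)\big|\leq\frac{2\times 6^n+8}{5}$ still holds. Combining with $2^{2^p}\leq\big|\mathcal{F}_{\rm{bin}}(T)\big|$ gives $2^{2^p}\leq\frac{2\times 6^n+8}{5}$, equivalently $2\times 6^n\geq 5\times 2^{2^p}-8$, i.e. $6^n\geq\frac{5\times 2^{2^p}-8}{2}$. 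Taking base-$2$ logarithms and using $\gamma_2=6$ yields exactly $n\log(\gamma_c)\geq\log\!\left(\frac{5\times 2^{2^p}-8}{2}\right)$.

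For $c\geq 3$ I would instead feed in Proposition \ref{estimate}, which is already formulated for trees with repeated labels. Writing $\beta_c:=2^{2^{c-1}-c}\left(2^{2^{c-1}}+2^c-1\right)$, that proposition reads $\big|\mathcal{F}_{\rm{bin}}(T)\big|\leq 4^n\beta_c^{\,n-1}$. Since $\gamma_c=4\beta_c$ by \eqref{gamma}, one has $4^n\beta_c^{\,n-1}=4\cdot 4^{n-1}\beta_c^{\,n-1}=4\,\gamma_c^{\,n-1}$, so $\big|\mathcal{F}_{\rm{bin}}(T)\big|\leq 4\,\gamma_c^{\,n-1}$. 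Hence $2^{2^p}\leq 4\,\gamma_c^{\,n-1}$, which after dividing by $4$ and taking logarithms becomes $2^p-2\leq (n-1)\log(\gamma_c)$, exactly the claimed inequality.

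In each case the argument is a one-line combination of a lower bound (all functions realized) with an upper bound from the enumeration results, so there is no deep obstacle. The only places demanding attention are bookkeeping: in the $c=2$ case one must justify that the distinct-label formula of Corollary \ref{formula} survives as an upper bound under label repetition via the substitution surjection, and in the $c\geq 3$ case one must notice the identity $\gamma_c=4\beta_c$ that converts the bound of Proposition \ref{estimate} into a clean power of $\gamma_c$ and absorbs the stray factor $4^n$ into the shift $n\mapsto n-1$ together with the constant that matches $2^p-2$.
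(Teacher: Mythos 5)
Your proposal is correct and follows the paper's own proof exactly: compare $2^{2^p}$ against $\big|\mathcal{F}_{\rm{bin}}(T)\big|$, then bound the latter by Corollary \ref{formula} when $c=2$ and by Proposition \ref{estimate} when $c\geq 3$, and take logarithms. Your extra remark that the distinct-label count of Corollary \ref{formula} survives as an upper bound under label repetition (via the substitution surjection) is a point the paper glosses over, and it is handled correctly.
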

\begin{proof}
The number $2^{2^p}$ of functions $\{0,1\}^p\rightarrow\{0,1\}$ cannot be greater than $\left|\mathcal{F}_{\rm{bin}}(T)\right|$. When $c=2$, we use formula \eqref{binary function formula} for this cardinality while for $c\geq 3$,  we invoke the upper bound from Proposition \ref{estimate}.
\end{proof}

We finally apply the previous results to tree expansions of neural networks in order to compare different architectures. 

\begin{definition}
For a neural network $N$, the number of leaves in the expanded-tree is denoted by $L(N)$. The corresponding set of bit-valued functions implemented on $N$ is shown by $\mathcal{F}_{\rm{bin}}(N)$.
\end{definition}

\begin{theorem}\label{nn-size-function-space}
For a neural network $N$ in which each node is connected to at most $c$ nodes from the previous layer one has
\begin{equation*}
\left|\mathcal{F}_{\rm{bin}}(N)\right|=O\left(\gamma_c^{L(N)}\right).    
\end{equation*}
\end{theorem}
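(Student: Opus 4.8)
The plan is to reduce the statement about the network $N$ to the tree bound already proved in Corollary \ref{corollary} by passing to the tree expansion $T=\textbf{TENN}(N)$ constructed in \S\ref{neural to tree}. First I would record two structural facts about this expansion. Its number of leaves is exactly $L(N)$ by the definition of $L(\cdot)$. Moreover, every node of $T$ has at most $c$ children: a node of $T$ lying over a node $v$ of $N$ has one successor for each node of the previous layer feeding into $v$, and by hypothesis there are at most $c$ of these. The duplication performed during the expansion creates several copies of a given node of $N$ but never increases the number of incoming edges at any copy. Thus $T$ is a (not necessarily binary) rooted tree with $L(N)$ leaves, all of whose nodes have at most $c$ successors.

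The second step is to observe the inclusion $\mathcal{F}_{\rm{bin}}(N)\subseteq\mathcal{F}_{\rm{bin}}(T)$. Indeed, if we assign to each node of $T$ the same bit-valued function sitting at the corresponding node of $N$, and feed the (possibly repeated) leaves of $T$ with the values of the variables they are labelled by, then the upward computation on $T$ reproduces exactly the computation carried out by $N$. Hence every function computable by $N$ arises as a tree function of $T$. We only need this inclusion, not equality, in order to obtain an upper bound.

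Finally I would invoke Corollary \ref{corollary}, which bounds $\left|\mathcal{F}_{\rm{bin}}(T)\right|$ for any tree with $n$ leaves and node-arity at most $c$ by $O\!\left(\gamma_c^{\,n}\right)$. Applying it with $n=L(N)$ and combining with the inclusion of the previous paragraph yields
$$
\left|\mathcal{F}_{\rm{bin}}(N)\right|\leq\left|\mathcal{F}_{\rm{bin}}(T)\right|=O\!\left(\gamma_c^{\,L(N)}\right),
$$
which is the asserted estimate.

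The one point deserving care, and the main (mild) obstacle, is that Corollary \ref{corollary} and the underlying Proposition \ref{estimate} were phrased for trees, whereas the TENN $T$ generally carries repeated leaf-labels. This is harmless: the bound of Proposition \ref{estimate} is obtained as an over-count of the number of superpositions, by choosing at each node a representative of an equivalence class of local functions via Lemma \ref{lemma 2}, and such an over-count bounds the number of \emph{distinct} functions irrespective of whether distinct leaves carry distinct labels. If anything, identifying leaves can only collapse several superpositions to the same function, so the estimate is preserved. I would make this remark explicit in order to justify applying the tree bound verbatim to $T$.
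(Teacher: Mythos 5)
Your proposal is correct and is essentially the paper's own argument: the paper's proof of Theorem \ref{nn-size-function-space} is precisely ``apply Corollary \ref{corollary} to the TENN of $N$,'' and you have simply made explicit the supporting facts (the TENN has $L(N)$ leaves and arity at most $c$, and $\mathcal{F}_{\rm{bin}}(N)\subseteq\mathcal{F}_{\rm{bin}}(T)$). Your closing remark about repeated labels is a reasonable point of care, but it is already built into the paper's framework, since Proposition \ref{estimate} and Corollary \ref{corollary} are stated in \S\ref{labeled; estimate} for trees with possibly repeated labels.
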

\begin{proof}
Apply Corollary \ref{corollary} to the TENN that corresponds to $N$.
\end{proof}

\textbf{Acknowledgements.} The authors are grateful to Mehrdad Shahshahani for proposing Remark \ref{Frobenius}, to David Rolnick for reading an early draft of this paper, to Mohammad Ahmadpoor, Ari Benjamin, Ben Lansdell, Pat Lawlor, Samantha Ing-Esteves and Nidhi Seethapathi for helpful conversations, to the organizers of \href{http://www.nasonline.org/programs/sackler-colloquia/completed_colloquia/science-of-deep-learning.html}{\textit{The Science of Deep Learning}} colloquium where a poster based on this work was presented, to the users contributed to answering a  \href{https://mathoverflow.net/questions/322184/continuous-functions-of-three-variables-as-superpositions-of-two-variable-functi}{question} we posted on the MathOverflow network about representations of ternary functions in terms of bivariate functions and to the NIH for funding (R01MH103910).

\bibliography{biblography}
\bibliographystyle{alpha}
\end{document}